\documentclass[preprint]{IEEEtran}

\usepackage{cite}

\usepackage{xcolor}
\ifCLASSINFOpdf
  \usepackage[pdftex]{graphicx}
\else
  \usepackage[dvips]{graphicx}
\fi

\usepackage{amsmath,amssymb}
\usepackage{mathtools}   
\usepackage{amsthm}      
\usepackage{bigints}

\usepackage{algpseudocode}        
\usepackage[ruled,vlined,linesnumbered]{algorithm2e} 

\usepackage{array}

\usepackage{booktabs}   

\ifCLASSOPTIONcompsoc
  \usepackage[caption=false,font=normalsize,labelfont=sf,textfont=sf]{subfig}
\else
  \usepackage[caption=false,font=footnotesize]{subfig}
\fi

\usepackage{fixltx2e}
\usepackage{stfloats}

\ifCLASSOPTIONcaptionsoff
  \usepackage[nomarkers]{endfloat}
 \let\MYoriglatexcaption\caption
 \renewcommand{\caption}[2][\relax]{\MYoriglatexcaption[#2]{#2}}
\fi

\usepackage{url}
\usepackage{hyperref}
\hypersetup{
    colorlinks=true,
    linkcolor=blue,
    filecolor=magenta,      
    urlcolor=cyan,
    citecolor=green,
    }

\newcommand{\defeq}{\vcentcolon=}

\DeclarePairedDelimiter{\abs}{\lvert}{\rvert}  
\DeclarePairedDelimiter{\norm}{\lVert}{\rVert} 

\DeclareMathOperator{\diag}{diag}

\DeclarePairedDelimiterX{\infdivx}[2]{(}{)}{%
  #1\;\delimsize\|\;#2%
}

\makeatletter
\newcommand{\@giventhatstar}[2]{\left(#1\;\middle|\;#2\right)}
\newcommand{\@giventhatnostar}[3][]{#1(#2\;#1|\;#3#1)}
\newcommand{\giventhat}{\@ifstar\@giventhatstar\@giventhatnostar}
\makeatother

\theoremstyle{plain}              
\newtheorem{theorem}{Theorem}
\newtheorem{corollary}[theorem]{Corollary}
\newtheorem{lemma}[theorem]{Lemma}
\newtheorem{proposition}[theorem]{Proposition}

\def\lb{\label}

\def\d{\mathrm{d}}

\def\ain{I^+}

\def\sfA{\mathsf{A}}
\def\sfB{\mathsf{B}}

\def\sfT{\mathsf{T}}

\def\calB{\mathcal{B}}

\def\calF{\mathcal{F}}

\def\calI{\mathcal{I}}

\def\calN{\mathcal{N}}

\def\calP{\mathcal{P}}

\def\calX{\mathcal{X}}
\def\calY{\mathcal{Y}}

\def\bbR{\mathbb{R}}
\def\bbS{\mathbb{S}}

\def\Var{\mathrm{Var}}
\def\Cov{\mathrm{Cov}}
\def\Tr{\mathrm{Tr}}
\def\Vol{\mathrm{Vol}}

\def\beq{\begin{equation}}
\def\eeq{\end{equation}}
\def\lb{\label}

\def\bft{\mathbf t}
\def\bfx{\mathbf x}

\def\bfA{\mathbf A}

\def\bfE{\mathbf E}

\def\bfI{{\mathbf I}}

\def\bfO{\mathbf O}
\def\bfP{\mathbf P}

\def\bfU{\mathbf U}
\def\bfX{\mathbf X}
\def\bfV{\mathbf V}

\def\bfX{\mathbf X}
\def\bfY{\mathbf Y}

\def\bfmu{\boldsymbol\mu}
\def\bfomega{\boldsymbol\omega}

\def\bfLambda{\boldsymbol\Lambda}
\def\bfSigma{\boldsymbol\Sigma}

\def\1{\mathds{1}} 

\def\bee{\begin{equation}}
\def\eeq{\end{equation}}

\begin{document}

\title{PRIM-cipal components analysis}

\author{Tianhao Liu,
	Daniel Andr\'es D\'iaz--Pach\'on,~\IEEEmembership{Member,~IEEE,}
        and~J. Sunil Rao

\thanks{D.~A.~D\'iaz--Pach\'on and T.~Liu are with the Division of Biostatistics, University of Miami, Miami, FL, 33136 USA (e-mail: ddiaz3@miami.edu, txl1001@miami.edu).}
\thanks{J.~S.~Rao is with University of Minnesota, Minneapolis, MN, 55414 USA (e-mail: js-rao@umn.edu).}
}


\maketitle

\begin{abstract}
Supervised No Free Lunch Theorems (NFLTs) are well studied, yet unsupervised NFLTs remain underexplored. For elliptical distributions, we prove that there exist two equally optimal, scientifically meaningful bump-hunting strategies that are exact opposites, with no universal winner. Specifically, peeling $k$ orthogonal dimensions from $\bbR^d$ ($d \ge k$), retaining an inter-quantile region of probability $1-\alpha$ per peeled dimension, maximizes total variance and Frobenius norm when the $k$ smallest principal components (called pettiest components) are selected, and minimizes them when the selected dimensions are the $k$ leading principal components. These optima inspire PRIM-based bump-hunting algorithms either by minimizing variance or by minimizing volume, thereby motivating an NFLT. We test our results on the Fashion-MNIST database, showing that peeling the largest principal components captures multiplicity, while peeling the smallest principal components isolates popular styles.
\end{abstract}

\begin{IEEEkeywords}
active information, pettiest components, principal components.
\end{IEEEkeywords}

\IEEEpeerreviewmaketitle

\section{Introduction}\label{sec1}

\IEEEPARstart{E}{ven} supervised learning is subject to the famous No Free Lunch Theorems \cite{Montanez2017b, Montanez2017a, MontanezEtAl2019}, which say that, in combinatorial optimization, there is no universal algorithm that works better than its competitors for every objective function \cite{WolpertMacReady1995, WolpertMacReady1997, Wolpert2002}. Indeed, David Wolpert has recently proven that, on average, cross-validation performs as well as anti-cross-validation (choosing among a set of candidate algorithms based on which has the worst out-of-sample behavior) for supervised learning. Still, he acknowledges that ``it is hard to imagine any scientist who would not prefer to use [cross-validation] to using anti-cross-validation'' \cite{Wolpert2021}. 

On the other hand, unsupervised learning has seldom been studied from the perspective of the NFLTs. This may be because the adjective ``unsupervised'' suggests that no human input is needed, which is misleading as many unsupervised tasks are combinatorial optimization problems that depend on the choice of the objective function. For instance, it is well known that, among the eigenvectors of the covariance matrix, Principal Components Analysis selects those with the largest variances \cite{Mardia1979}. However, mode-hunting techniques that rely on spectral manipulation aim at the opposite objective: selecting the eigenvectors of the covariance matrix with the smallest variances \cite{LiuEtAl2023, SandoHino2020}. Therefore, unlike in supervised learning, where it is difficult to identify reasons to optimize with respect to anti-cross-validation, in unsupervised learning there are strong reasons to reduce dimensionality for variance {\it minimization}. 

Accordingly, we prove here that for random vectors in the elliptical family in $\bbR^d $, a sharp optimality duality governs unsupervised mode-hunting. Specifically, Theorem \ref{TheTh} shows that simultaneously peeling $ \alpha/2$ probability mass from each tail of $k$ orthogonal coordinates, while retaining a central inter-quantile region of probability $1-\alpha$ per coordinate, maximizes the preserved total variance and Frobenius norm precisely when the peeled coordinates are the $k$ smallest principal components (the pettiest components), and minimizes these quantities when the peeled coordinates are the $k$ leading principal components. However, Proposition \ref{Th2} establishes the dual geometric fact: the volume of the retained $k$-dimensional box of probability $1-k\alpha$ is minimized when the pettiest components are peeled and maximized when the leading components are chosen. These two results together reveal a fundamental trade-off in bump-hunting between minimizing either the preserved variance or the preserved box volume, a feature that has no counterpart in supervised learning.

This duality directly inspires two fastPRIM-based mode-hunting algorithms. Algorithm \ref{FastPRIMI} (FastPRIM for minimized $k$-dimensional box volume) rotates the data into the principal-component basis and peels the tails of the $k$ pettiest components simultaneously; the resulting $k$-dimensional box $\sfT_k$ of probability $\beta$ minimizes box volume while maximizing preserved total variance on the cylinder $\bbR^{d-k} \times \sfT_k$. Algorithm \ref{FastPRIMII} (FastPRIM for minimized preserved variance) peels the tails of the $k$ leading principal components instead, producing the smallest preserved total variance on the cylinder $\sfT^k \times \bbR^{d-k}$ in the largest-volume orthogonal box $\sfT^k$ of probability $\beta$ in $k$ dimensions. Moreover, both algorithms operate in a single simultaneous-peeling step, circumventing the combinatorial cost of classical PRIM while inheriting strong theoretical guarantees for elliptical distributions.

The existence of these two diametrically opposed yet equally well-justified strategies immediately yields an unsupervised No Free Lunch result for dimension selection (Proposition 4): when admissible strategies are restricted to subsets of the eigenbasis, no single rule dominates across every objective. In contrast to the supervised case, both extremes are scientifically meaningful here: mode hunting via minimization of either variance or volume. We illustrate both algorithms on the Fashion-MNIST dataset \cite{HanRasulVollgraf2017}. After class-wise PCA (retaining components via the spectral gap), peeling along the leading principal components isolates the most common clothing styles, while peeling along the pettiest components captures stylistic multiplicity (e.g., flip-flops appear only in the pettiest subset for sandals). The visual contrast on t-SNE projections and grid-sampled images underscores how the theoretical duality identified in the theoretical results translates directly into complementary scientific insights.

\section{Theoretical justification}\lb{S:Theory}

Let $\bfX \in \bbR^d$ be a random vector with mean $\bfmu$ and full-rank covariance matrix $\bfSigma$. We say that $\bfX$ belongs to the elliptical family if its characteristic function $\psi_\bfX(\bft)$ can be written as
\begin{align}\lb{CFED}
	\psi_\bfX(\bft) = e^{i \bft\bfmu}\phi\left(\bft'\bfSigma \bft\right), 
\end{align}
where $\bft= (t_1, \ldots, t_d)$ is a nonrandom vector in $\bbR^d$ and $\phi$ is a scalar function. We will denote a member of the elliptical family as $\bfX\sim (\bfmu,\bfSigma,\phi)$ or simply as $\bfX\sim (\bfmu,\bfSigma)$ when there is no risk of confusion regarding $\phi$.  Equation \eqref{CFED} implies directly that elliptical distributions are affine transformations of spherical ones:
\begin{align}\lb{AffED}
	\bfX = \bfmu + \bfA \bfO,
\end{align}
where $\bfA$ is such that $\bfA\bfA'=\bfSigma$ and $\bfO$ is a spherical random vector (i.e., its characteristic function satisfies $\psi_\bfO(\bft) = \phi\left(\bft'\bft\right)$). Equation \eqref{CFED} also implies that $\bfX$ admits the stochastic representation
\begin{align}\lb{StochRepED}
	\bfX = L\bfA\bfU,
\end{align}
where $L$ is a nonnegative random variable and $\bfU$ is uniform on the sphere $\bbS^{d-1}$, which implies in turn that when $\bfX$ is spherical, it can be written as $\bfX= L\bfI\bfU = L\bfU$ \cite{BoenteBarreraTyler2014, FangKotzNg1990}. Now, by the spectral decomposition theorem, 
\begin{align}\lb{SDT}
	\bfSigma = \bfV\bfLambda \bfV', 
\end{align}
where $\bfV$ is a $d\times d$ matrix whose columns are the normalized eigenvectors $v_1, \ldots, v_d$ and $\bfLambda$ is a diagonal matrix with eigenvalues $\lambda_1 \ge \lambda_2 \ge \cdots \ge \lambda_d > 0$ on the diagonal. 

\begin{figure}[t]
 \centering
{\includegraphics[width=5.4cm,height=3.5cm]{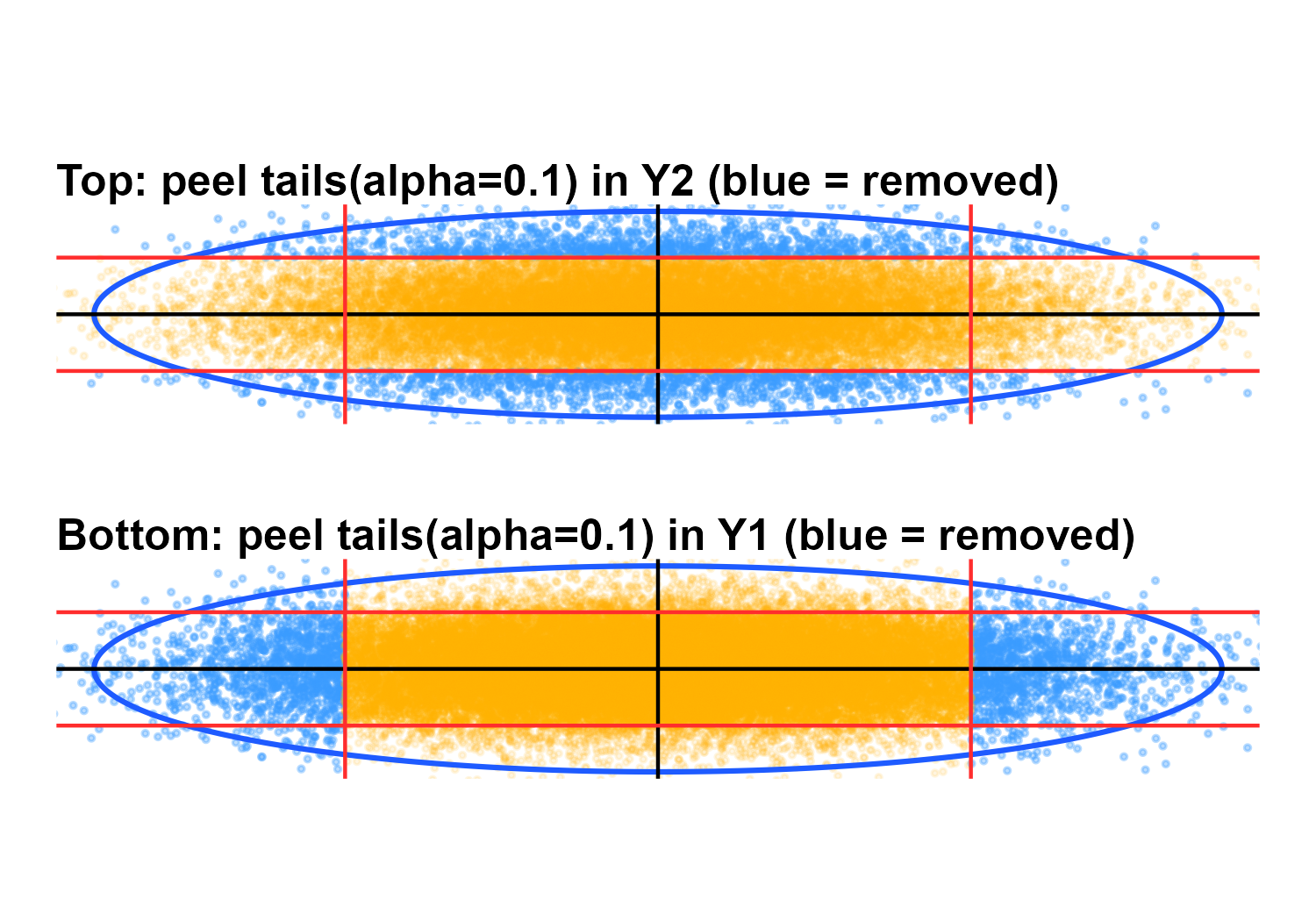}}
  \caption{Level set of a 2-dimensional elliptical distribution.}\label{Peel}
\end{figure}

Let $\calP$ be the set of all rank-$d$ orthogonal matrices, and for $\bfP \in \calP$ define the orthogonal transformation $\tilde\bfY = \bfP\bfX$. For each coordinate $\tilde Y_i$ of $\tilde \bfY$, let 
\begin{align}\lb{Ti}
	\sfT_{\tilde Y_i}^\alpha \defeq \left[Q_{\tilde Y_i}(\alpha/2), Q_{\tilde Y_i}(1-\alpha/2)\right]
\end{align}
be its central $(1-\alpha)$-inter-quantile (so that $Q_{\tilde Y_i}(\alpha)$ is its $\alpha$-quantile). For $[d] = \{1, \ldots, d\}$, let $\calI_k$ be the class of all subsets of $[d]$ with cardinality $k$. For $I_k = \{i_1, \ldots, i_k\} \in \calI_k$, define 
\begin{align}\lb{COV}
	\tilde \bfLambda_{I_k,\alpha} \defeq \Cov\giventhat*{\tilde\bfY}{\tilde Y_{i_j} \in \sfT_{\tilde Y_{i_j}}^\alpha \text{ for all } j \in [k]},
\end{align}
the preserved covariance after {\it simultaneously} truncating $\alpha/2$ from each tail of the coordinates in $I_k$, where $k\alpha \le 1$. In Theorem \ref{TheTh}, we show that, among all orthogonal rotations and all choices of $I_k$, the preserved total variance $\Tr \left(\tilde\bfLambda_{I_k,\alpha}\right)$ and the preserved Frobenius norm $\norm{\tilde \bfLambda_{I_k,\alpha}}_F$ are respectively maximized and minimized when the truncated coordinates correspond to the last and the first $k$ components of the eigenbasis $\bfY= (Y_1, \ldots, Y_d)' \defeq \bfV'\bfX$ (see Figure \ref{Peel} for intuition in 2 dimensions). The nontriviality of these optimization results is highlighted by two additional facts, summarized in Proposition \ref{Counter}. First, the preserved generalized variance $\abs{\tilde\bfLambda_{i,\alpha}}$ is constant over all subsets $I_k$, so the determinant alone cannot distinguish which coordinates are truncated. Second, the preserved operator norm $\norm{\tilde \bfLambda_{I_k,\alpha}}_{\rm op}$ cannot be maximized, and it is only minimized if the first principal component belongs to $I_k$, but it does not depend on the particular choice of the remaining coordinates in $I_k$.

\subsection{Optimizing the preserved variance}

\begin{theorem}\lb{TheTh} 
	Let $\bfX$ belong to the elliptical family. Then, the preserved total variance $\Tr \left(\tilde\bfLambda_{I_k,\alpha}\right)$ and preserved Frobenius norm $\norm{\tilde \bfLambda_{I_k,\alpha}}_F$ are maximized when we simultaneously truncate the last $k$ principal components, and minimized when we simultaneously truncate the first $k$ principal components.
\end{theorem}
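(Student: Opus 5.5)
The plan is to reduce everything to a statement about how truncation shrinks $\tilde\bfSigma \defeq \bfP\bfSigma\bfP'$ along the subspace spanned by the truncated coordinates, and then to optimize over that subspace with Ky Fan/interlacing type inequalities.

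\textbf{Step 1: triangular representation.} Fix $\bfP\in\calP$ and $I_k$, and permute coordinates so that $\tilde\bfY=(\tilde\bfY_W,\tilde\bfY_R)$, where $W=I_k$. By \eqref{AffED} I can take $\bfA$ block lower triangular (a Cholesky-type factor of $\tilde\bfSigma$), so that
\[
\tilde\bfY_W=\tilde\bfmu_W+\bfA_{WW}\bfO_W,\qquad \tilde\bfY_R=\tilde\bfmu_R+\bfB(\tilde\bfY_W-\tilde\bfmu_W)+\bfA_{RR}\bfO_R,
\]
with $\bfB=\tilde\bfSigma_{RW}\tilde\bfSigma_{WW}^{-1}$ and $\bfA_{RR}\bfA_{RR}'=\tilde\bfSigma_{R\cdot W}$ (the Schur complement). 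The truncation event $E$ depends only on $\bfO_W$. By spherical symmetry of $\bfO_R$ given $\bfO_W$, we have $\mathbb E[\bfO_R\mid\bfO_W]=0$ and $\Cov(\bfO_R\mid\bfO_W)=g(\norm{\bfO_W})\bfI$. Writing $\bfC_W\defeq\Cov(\tilde\bfY_W\mid E)$ and $c_E\defeq\mathbb E[g(\norm{\bfO_W})\mid E]$, this gives
\[
\tilde\bfLambda_{I_k,\alpha}=\begin{pmatrix}\bfC_W & \bfC_W\bfB'\\ \bfB\bfC_W & c_E\tilde\bfSigma_{R\cdot W}+\bfB\bfC_W\bfB'\end{pmatrix}.
\]

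\textbf{Step 2: trace and Frobenius norm as a shrinkage.} For $\Tr$, this expresses $\Tr(\tilde\bfLambda_{I_k,\alpha})$ as $\Tr(\bfC_W(\bfI+\bfB'\bfB))+c_E\Tr(\tilde\bfSigma_{R\cdot W})$. In the Gaussian case $c_E=1$, and the loss relative to $\Tr(\bfSigma)$ is $\Tr\bigl((\tilde\bfSigma_{WW}-\bfC_W)\tilde\bfSigma_{WW}^{-1}\tilde\bfSigma_{W\cdot}\tilde\bfSigma_{\cdot W}\tilde\bfSigma_{WW}^{-1}\bigr)$. This is the variance that is removed along the $k$-dimensional column space of $\tilde\bfSigma_{\cdot W}\tilde\bfSigma_{WW}^{-1}$.

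I then want two facts:
\begin{itemize}
\item[(a)] Each univariate marginal truncation removes a fixed fraction $1-\gamma_\alpha$ of that coordinate's variance, because every $\tilde Y_i$ is a scaled copy of the same univariate spherical marginal $O_1$. Hence $\mathrm{diag}(\bfC_W)\preceq\gamma_\alpha\,\mathrm{diag}(\tilde\bfSigma_{WW})$ up to the correction coming from simultaneous truncation.
\item[(b)] The removed mass is at least $(1-\gamma_\alpha)\sum_{j\le k}\lambda_{d-j+1}$ and at most $(1-\gamma_\alpha)\sum_{j\le k}\lambda_j$.
\end{itemize}
At $\bfP=\bfV'$ with $W$ the last (resp. first) $k$ eigen-coordinates, $\bfB=0$, the block is diagonal, and the truncated coordinates are independent of the rest up to the radial factor. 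The loss is then exactly $(1-\gamma)\sum$ of the corresponding eigenvalues, so the bounds in (b) are attained. The bounds themselves follow from Ky Fan's inequality $\sum_{j\le k}\lambda_{d-j+1}\le\Tr(\bfU'\bfSigma\bfU)\le\sum_{j\le k}\lambda_j$ for orthonormal $\bfU\in\bbR^{d\times k}$. For the Frobenius norm I would use the same block formula together with the Hoffman--Wielandt/von Neumann trace inequality: truncation acts as a contraction on a $k$-dimensional subspace, and $\norm{\cdot}_F^2=\sum(\text{eigenvalues})^2$ is Schur-convex. Consequently, contracting the smallest eigen-directions loses the least and contracting the largest loses the most.

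\textbf{Main obstacle.} The hardest step is (a)--(b) for a genuinely rotated box. When $\tilde\bfSigma_{WW}$ is not diagonal, the truncation region in $\bfO_W$-coordinates is a parallelotope rather than a cube. So $\bfC_W$ and, for non-Gaussian $\phi$, the radial factor $c_E$ depend on the correlations, not only on $\alpha$. My first attempt would be to compare with the eigenbasis by a majorization argument. I would show that the vector of conditional variances along the truncated subspace is majorized so that its total shrinkage is bounded by the independent-coordinate case. To do this I would use the Brascamp--Lieb/Prékopa-type fact that conditioning a log-concave-like symmetric law on a symmetric convex set decreases variance in every direction. For the non-Gaussian elliptical case I would handle $c_E$ by conditioning on the radius $L$ in \eqref{StochRepED}, proving the inequalities for each fixed radius (uniform on a sphere), and then integrating.
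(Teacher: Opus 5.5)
\textbf{The gap.} Your Step 1 block formula is correct. The trouble is that everything separating a rotated box from the eigenbasis is pushed into facts (a)--(b), and these are never established. Ky Fan controls $\Tr(\tilde\bfSigma_{WW})$, but the Gaussian loss is $\Tr\bigl((\tilde\bfSigma_{WW}-\mathbf C_W)(\bfI+\mathbf B'\mathbf B)\bigr)$, and you do not connect the two. Worse, the upper bound in (b) is false for $k\ge 2$. That bound is exactly what the ``minimized at the first $k$ components'' half needs over all $\bfP\in\calP$. So the planned majorization (``total shrinkage bounded by the independent-coordinate case'') points the wrong way: correlation makes a box shrink \emph{more}.

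Here is a counterexample. Let $\bfX$ be Gaussian with $\lambda_1>\lambda_2$. Let $\bfP$ have rows $(v_1+v_2)'/\sqrt2$, $(v_1-v_2)'/\sqrt2$, $v_3',\ldots,v_d'$, and take $I_2=\{1,2\}$. Then $\tilde Y_1,\tilde Y_2$ have common variance $s^2=(\lambda_1+\lambda_2)/2$ and correlation $\rho=(\lambda_1-\lambda_2)/(\lambda_1+\lambda_2)\in(0,1)$, and they are independent of $Y_3,\ldots,Y_d$. Put $Z_i=\tilde Y_i/s$, let $c$ be the standard normal $(1-\alpha/2)$-quantile, and set $\gamma=\Var(Z_1\mid\abs{Z_1}\le c)$. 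Define $h(z)=\Pr(\abs{Z_2}\le c\mid Z_1=z)$; it is strictly decreasing in $\abs{z}$ because $Z_2\mid Z_1=z\sim\calN(\rho z,1-\rho^2)$. Applying Lemma~\ref{Cheb} under the law of $Z_1$ given $\abs{Z_1}\le c$ gives $\bfE[Z_1^2\mid \abs{Z_1}\le c,\abs{Z_2}\le c]<\gamma$. The inequality is strict because $z^2$ and $h$ are strictly monotone in $\abs{z}$ in opposite directions. Hence $\Tr(\tilde\bfLambda_{I_2,\alpha})=(\lambda_1+\lambda_2)\bfE[Z_1^2\mid\cdot\,]+\sum_{j\ge3}\lambda_j<\gamma(\lambda_1+\lambda_2)+\sum_{j\ge3}\lambda_j=\Tr(\bfLambda_{\{1,2\},\alpha})$. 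Appending $Y_3,\ldots,Y_k$ to the truncated set gives the same conclusion for every $k\ge2$.

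So that half is true only in the eigenbasis (fix $\bfP=\bfV'$ and optimize over $I_k$), and no argument can prove it over all rotations. The paper reaches ``all rotations'' by asserting $\tilde\bfLambda_{I_k,\alpha}=\diag(b\tilde\lambda_{i_1},\ldots,a\tilde\lambda_{i_d})$ and invoking Ky Fan. Your own block formula shows that the off-diagonal blocks $\mathbf C_W\mathbf B'$ need not vanish, so that step is not available to you either.

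\textbf{The eigenbasis case, and what your route buys.} In the eigenbasis the paper writes $\Tr(\bfLambda_{I_k,\alpha})=a\Tr(\bfSigma)+(b-a)\sum_{j=1}^k\lambda_{i_j}$, with $a,b$ free of $I_k$ by exchangeability of $\abs{O_1},\ldots,\abs{O_d}$. Everything then reduces to $b\le a$, i.e.\ $dI_2\le I_1$, which the paper obtains from Lemma~\ref{Cheb}. For $k\ge2$ that step really needs the independence of $\norm{\bfO_{I_k}}$ and $\bfO_{I_k}/\norm{\bfO_{I_k}}$ to reduce to one variable.

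Your plan never isolates this inequality. Your claim that the eigenbasis loss is exactly $(1-\gamma)$ times the truncated eigenvalues holds only for Gaussian $\bfX$. In general the untruncated block is rescaled by $c_E$ (the paper's $a$), which can exceed the unconditional level: for constant radius, shrinking $\bfO_W$ inflates $\bfO_R$. Your radius-conditioning idea can supply $b\le a$, since the mixing weights $r^2\Pr(E\mid R=r)/\Pr(E)$ are the same for every $I_k$ of size $k$. You would still have to prove the fixed-radius inequality $\bfE[\norm{\bfO_{I_k}}^2\mid E]\le k/d$ for $\bfO$ uniform on $\bbS^{d-1}$.

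What your decomposition does buy beyond the paper is a sound proof of the maximization half for Gaussian $\bfX$ over all rotations. Three facts suffice: $\mathbf C_W\preceq\tilde\bfSigma_{WW}$ (Brascamp--Lieb), $\bfI+\mathbf B'\mathbf B\succeq\bfI$, and $(\mathbf C_W)_{jj}\le\gamma(\tilde\bfSigma_{WW})_{jj}$. The last follows from Anderson's theorem plus Lemma~\ref{Cheb}; no ``correction'' is needed, since simultaneous truncation only adds shrinkage. Together they give loss $\ge\Tr(\tilde\bfSigma_{WW}-\mathbf C_W)\ge(1-\gamma)\Tr(\tilde\bfSigma_{WW})\ge(1-\gamma)\sum_{j\le k}\lambda_{d-j+1}$. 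For non-Gaussian $\phi$ with rotated boxes, $c_E$ depends on the box geometry, and neither you nor the paper offers an argument.
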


\begin{proposition}\lb{Counter}
	Under the same conditions of Theorem \ref{TheTh},
	\begin{enumerate}
		\item The preserved generalized variance $\abs{\tilde\bfLambda_{I_k,\alpha}}$ is constant in all rotations, for fixed $k$.
		\item The preserved operator norm $\norm{\tilde \bfLambda_{I_k,\alpha}}_{\rm op}$ is minimized if $1 \in I_k$.
	\end{enumerate}
\end{proposition}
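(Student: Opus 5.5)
The plan is to reduce everything to the spherical generator and exploit its symmetries, working first in the eigenbasis $\bfY=\bfV'\bfX$ (the situation of Theorem \ref{TheTh}). By \eqref{AffED} and \eqref{SDT} we may write $\bfY-\bfV'\bfmu=\bfLambda^{1/2}\bfO$ with $\bfO$ spherical. Since $Y_i$ is a positive multiple of $O_i$ plus a constant, its $(1-\alpha)$-inter-quantile \eqref{Ti} is the affine image of that of $O_i$. Hence the event $\{Y_{i_j}\in\sfT^\alpha_{Y_{i_j}},\ j\in[k]\}$ coincides with $\{O_{i_j}\in\sfT^\alpha_{O_{i_j}},\ j\in[k]\}=:E_{I_k}$, and
\begin{align*}
\Cov\giventhat*{\bfY}{E_{I_k}}=\bfLambda^{1/2}\,\bfE_{I_k}\,\bfLambda^{1/2},\qquad \bfE_{I_k}\defeq\Cov\giventhat*{\bfO}{E_{I_k}}.
\end{align*}
Because $\bfO$ is invariant under coordinate sign flips and permutations, and $E_{I_k}$ is invariant under all sign flips and under permutations preserving $I_k$, the matrix $\bfE_{I_k}$ is diagonal. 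Its diagonal entries are $a_k$ on $I_k$ and $b_k$ off $I_k$, with $a_k,b_k$ depending only on $k$, $\alpha$ and $\phi$.

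For part 1, this gives immediately
\begin{align*}
\abs{\tilde\bfLambda_{I_k,\alpha}}=\abs{\bfLambda}\,a_k^k\,b_k^{d-k},
\end{align*}
which does not depend on $I_k$. The main obstacle is the claim for a general rotation $\bfP\in\calP$. There $\tilde Y_i$ is proportional to $u_i'\bfO$ with $u_i\propto\bfLambda^{1/2}\bfV'p_i$, and these directions are not orthogonal unless $\bfP\bfV$ permutes the eigenbasis. I would first try to reduce to the eigenbasis case via the same reduction used for Theorem \ref{TheTh}, i.e.\ the class of admissible rotations over which the optimization there is taken. If that fails, I would state part 1 for eigenbasis-aligned rotations (signed permutations of $\bfV'$), which is the case actually needed for the comparison with Theorem \ref{TheTh}.

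For part 2, the eigenvalues of $\tilde\bfLambda_{I_k,\alpha}$ are $\lambda_i a_k$ for $i\in I_k$ and $\lambda_i b_k$ for $i\notin I_k$. The key inequality is $a_k\le b_k$. To prove it, compare a truncated coordinate $O_1$ with an untruncated coordinate $O_d$, conditioning on the remaining coordinates. Assuming a density (the general case follows by approximation), the conditional law of $(O_1,O_d)$ is rotationally invariant in the plane. Write $(O_1,O_d)=R(\cos\theta,\sin\theta)$. Given $R=r$, the truncation keeps $\{|\cos\theta|\le c\}$ with $c=q/r$, where $q$ denotes the upper endpoint of the symmetric interval $\sfT^\alpha_{O_1}$. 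Then
\begin{align*}
b_k-a_k\ \propto\ \int_{\{|\cos\theta|\le c\}}\bigl(\sin^2\theta-\cos^2\theta\bigr)\,\d\theta\ \ge 0.
\end{align*}
This holds because the full-circle integral of $\cos 2\theta$ vanishes, and the complement $\{|\cos\theta|>c\}$ consists of arcs $|\theta|<\varphi$ and $|\theta-\pi|<\varphi$ with $\varphi\le\pi/2$, over which $\int\cos2\theta\,\d\theta=\sin 2\varphi\ge0$. Integrating over $r$ and the conditioning variables gives $a_k\le b_k$.

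Given $a_k\le b_k$, the comparison is direct. If $1\notin I_k$, then $\norm{\tilde\bfLambda_{I_k,\alpha}}_{\rm op}=\lambda_1 b_k$, the largest value possible. If $1\in I_k$, then
\begin{align*}
\norm{\tilde\bfLambda_{I_k,\alpha}}_{\rm op}=\max\bigl(\lambda_1 a_k,\ \lambda_{j^*}b_k\bigr)\le\lambda_1 b_k,
\end{align*}
where $j^*$ is the smallest index outside $I_k$. Hence the minimum of the operator norm is attained when $1\in I_k$.
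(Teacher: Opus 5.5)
Your Part 1 computation in the eigenbasis is correct and matches the paper's formula $a^{d-k}b^k\abs{\bfLambda}$. You get the diagonal form from sign-flip and permutation symmetry, whereas the paper reads it off the conditional expectation \eqref{ConExpII}. Your hesitation about general $\bfP$ is justified, and reusing the reduction behind Theorem~\ref{TheTh} would not help. The paper treats arbitrary $\bfP$ by asserting $\tilde\bfLambda_{I_k,\alpha}=\diag(b\tilde\lambda_{i_1},\ldots,a\tilde\lambda_{i_d})$ and $\prod_j\tilde\lambda_j=\abs{\bfP\bfSigma\bfP'}$. Both fail unless $\bfP\bfSigma\bfP'$ is diagonal; for the second, Hadamard's inequality holds with equality only in the diagonal case. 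This is exactly the non-orthogonality of your $u_i$.

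In fact the all-rotations claim is false for $k\ge2$. Take $\bfX\sim\calN(\mathbf 0,\diag(\lambda_1,\lambda_2))$, $d=k=2$, and let $\bfP$ be the rotation by $\pi/4$. The standardized coordinates $U_1,U_2$ of $\tilde\bfY$ then have correlation $\rho$ with $\abs{\rho}=(\lambda_1-\lambda_2)/(\lambda_1+\lambda_2)$. Let $c=\Var(Z\mid |Z|\le z_{1-\alpha/2})<1$ for $Z\sim\calN(0,1)$. In the eigenbasis the ratio $\abs{\tilde\bfLambda_{I_2,\alpha}}/\abs{\bfSigma}$ equals $c^2$. After the rotation, writing $U_2=\rho U_1+\sqrt{1-\rho^2}\,\varepsilon$, the ratio equals $\det\Cov((U_1,\varepsilon)\mid E)$, which tends to $c$ as $\lambda_2/\lambda_1\to0$.

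For $k=1$ the all-rotations version does hold. Write $\tilde\bfY=(U,\mathbf W)$ with $U$ the truncated coordinate. For elliptical laws $\bfE[\mathbf W\mid U]$ is linear and $\Cov(\mathbf W\mid U)=h(U^2/\Var U)\,\bfSigma_{\mathbf W\cdot U}$, where $\bfSigma_{\mathbf W\cdot U}$ is the residual covariance. Hence $\abs{\tilde\bfLambda_{i,\alpha}}=\abs{\bfSigma}\,\frac{\Var(U\mid E)}{\Var U}\,\bfE[h\mid E]^{d-1}$, which depends only on the common standardized marginal. So your fallback is the right scope: constancy over $I_k$ and over eigenbasis-aligned rotations, which is what the main text actually uses. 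You can add the case $k=1$ for arbitrary $\bfP$.

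Part 2 follows the same case split as the paper but proves the key inequality by a different route. The paper's $b\le a$ is your $a_k\le b_k$ with the labels swapped. The paper obtains it from the sign of the coefficient $(dI_2-I_1)/[(d-k)D]$ of $\lambda_{i_j}$ in \eqref{CondCovII3}, via Lemma~\ref{Cheb}. For $k\ge2$ this applies a one-variable covariance inequality to a $k$-dimensional integral against the dependent law of $\abs{\bfO_I}$. The extra conditioning step needed to make that legitimate is not written out.

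Your planar argument proves $dI_2\le I_1$ directly for every $k<d$: condition on the other coordinates, use the uniform angle, and note that the removed arcs contribute $2\sin2\varphi\ge0$ to $\int\cos2\theta$. So it also secures the sign that Theorem~\ref{TheTh} relies on. It needs no density or approximation step, because rotation invariance of the conditional law of $(O_1,O_d)$ given the remaining coordinates holds for every spherical $\bfO$.

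Your value $\max(\lambda_1a_k,\lambda_{j^*}b_k)$ for $1\in I_k$ is also more accurate than the paper's $\max(b\tilde\lambda_1,a\tilde\lambda_2)$, which tacitly assumes $2\notin I_k$. It shows that the minimum is attained at $I_k=[k]$, and that other sets containing $1$ can do strictly worse. Keeping Part 2 in the eigenbasis is appropriate, since the paper's extension to arbitrary $\bfP$ rests on the same invalid diagonal form.
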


Theorem \ref{TheTh} is proven in Section \ref{Proofs}. The proof is greatly simplified when the components are assumed to be independent. Therefore, in Section \ref{Proofs} we first prove Theorem \ref{Normality}, a version of the result for a multivariate normal $\bfX$, illustrating where the proof fails when independence is not assumed. Interestingly, in Section \ref{Proofs}, we also prove Theorem \ref{Th1}, a preliminary version of Theorem \ref{TheTh} for a single dimension, whose proof was primarily suggested by Grok and validated by the authors. 

Proposition \ref{Th2} below proves that the volume of the box 
\begin{align}\lb{box}
	\sfT_{I_k, \tilde Y} \defeq \sfT_{\tilde Y_{i_1}}^{\alpha} \times \cdots \times \sfT_{\tilde Y_{i_k}}^{\alpha}
\end{align} 
is maximized when the last $k$ principal components are selected, and minimized when the first $k$ principal components are selected.

\begin{proposition}\lb{Th2}
	Let $\bfX$ belong to the elliptical family. Then, under the conditions of Theorem \ref{TheTh}, the volume of the box $\sfT_{I_k, \tilde Y}$ (with $k\alpha \le 1$) is maximized for $\sfT_{[k], Y}$ and minimized for $\sfT_{[d] \setminus [d-k], Y}$.
\end{proposition}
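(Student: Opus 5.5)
The plan is to reduce the box volume to a product of marginal scales and then optimize that product. First, by \eqref{AffED} each rotated coordinate is $\tilde Y_i = p_i'\bfX = p_i'\bfmu + p_i'\bfA\bfO$, where $p_i'$ is the $i$-th row of $\bfP$. Since $\bfO$ is spherical, $p_i'\bfA\bfO$ has the same law as $\norm{\bfA'p_i}\, O_1 = \sqrt{p_i'\bfSigma p_i}\, O_1$. Here $O_1$ is the first coordinate of $\bfO$, whose law depends only on $\phi$ and is symmetric about $0$. Hence $Q_{\tilde Y_i}(u) = p_i'\bfmu + \sqrt{p_i'\bfSigma p_i}\,Q_{O_1}(u)$, so the interval \eqref{Ti} has length $2c_\alpha\sqrt{p_i'\bfSigma p_i}$ with $c_\alpha = Q_{O_1}(1-\alpha/2)$ independent of $\bfP$ and $I_k$. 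Therefore
\begin{align*}
\Vol\left(\sfT_{I_k,\tilde Y}\right) = (2c_\alpha)^k \prod_{j=1}^k \left(p_{i_j}'\bfSigma p_{i_j}\right)^{1/2}.
\end{align*}
So the problem becomes optimizing the product of the diagonal entries of the compression $\bfE = \bfP_{I_k}\bfSigma\bfP_{I_k}'$, where $\bfP_{I_k}$ is the $k\times d$ matrix formed by the selected orthonormal rows.

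For the eigenbasis itself ($\bfP=\bfV'$), the diagonal entries are just the eigenvalues, $p_{i_j}'\bfSigma p_{i_j} = \lambda_{i_j}$. The volume is then $(2c_\alpha)^k\prod_j \lambda_{i_j}^{1/2}$. This is largest for $I_k=[k]$ and smallest for $I_k=[d]\setminus[d-k]$, because the $\lambda_i$ are ordered and positive.

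For the minimum over \emph{all} rotations, I will chain two inequalities. Hadamard's inequality gives $\prod_j \bfE_{jj} \ge \abs{\bfE}$. Cauchy interlacing for compressions gives that the eigenvalues $\mu_1\ge\cdots\ge\mu_k$ of $\bfE$ satisfy $\mu_j \ge \lambda_{d-k+j}$. Combining them,
\begin{align*}
\prod_j \bfE_{jj} \ge \abs{\bfE} = \prod_j \mu_j \ge \prod_{j=1}^k \lambda_{d-k+j},
\end{align*}
with equality when $\bfP_{I_k}$ consists of $v_{d-k+1},\ldots,v_d$. This yields the minimum at $\sfT_{[d]\setminus[d-k],Y}$ globally.

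The main obstacle is the maximum over arbitrary rotations, and I expect it to fail there, so I will restrict that part to subsets of the eigenbasis. The natural route via Schur--Horn only gives weak submajorization of the diagonal by $(\lambda_1,\ldots,\lambda_k)$, and the product is Schur-concave, so the inequality runs the wrong way. A concrete check confirms the failure: take $d=k=2$, $\bfSigma=\diag(2,1)$, and the $45^\circ$ rotation. The diagonal entries become $(3/2,3/2)$, whose product $9/4$ exceeds $2=\lambda_1\lambda_2$. I would therefore state and prove the maximization claim for $\bfP=\bfV'$, that is, over choices of $I_k$ within the eigenbasis. This matches the admissible class used in the later No Free Lunch proposition. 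The minimization claim I would keep over all $\bfP\in\calP$.
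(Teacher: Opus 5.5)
Your proposal is correct, and where it departs from the paper it is more careful than the paper's own proof. The first step is the same: the paper also reduces the volume to $(2q)^k\prod_{j}\tilde\lambda_{i_j}^{1/2}$ with $\tilde\lambda_{i_j}=p_{i_j}'\bfSigma p_{i_j}$. The paper writes the exponent as $k/2$, which is a typo; your $1/2$ is correct.

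The paper then simply asserts that this product is extremized at the leading and pettiest eigenvectors. That assertion implicitly treats the diagonal entries $p_{i_j}'\bfSigma p_{i_j}$ as ordered eigenvalues, which is valid only for $\bfP=\bfV'$. The Ky Fan bound \eqref{OptPC}, used for the trace in Theorem~\ref{GenTh1}, controls sums, not products. Your Hadamard-plus-Poincar\'e-interlacing chain supplies the missing argument for the global minimum.

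Your $45^\circ$ example is valid and shows that the maximization claim is false when read over all $\bfP\in\calP$, as the statement intends. The failure is generic, not a corner case. For $k\ge 2$, AM--GM together with Ky Fan gives $\prod_j p_{i_j}'\bfSigma p_{i_j}\le\bar\lambda^{k}$, where $\bar\lambda=(\lambda_1+\cdots+\lambda_k)/k$. By Schur--Horn this bound is attained by a rotation inside $\mathrm{span}(v_1,\ldots,v_k)$ that equalizes the diagonal. So the largest volume over all rotations is $(2c_\alpha)^k\bar\lambda^{k/2}$, strictly above the value for $\sfT_{[k],Y}$ whenever $\lambda_1>\lambda_k$.

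Hence the maximization part holds over all rotations only for $k=1$, where $p'\bfSigma p\le\lambda_1$. Otherwise it holds only within the eigenbasis, so your restricted formulation is the right one. It is also exactly the setting of Algorithms~\ref{FastPRIMI}--\ref{FastPRIMII} and Proposition~\ref{NFLT}.
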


\section{Algorithms}\lb{S:Algos}

The Patient Rule Induction Method (PRIM) is a widely used bump-hunting algorithm introduced by Friedman and Fisher that works as follows \cite{FriedmanFisher1999}: Let $\bfX = (X_1, \ldots, X_d) \in \calX_1 \subset \bbR^d$ be a $d$-dimensional random vector with density $f_\bfX$, whose distribution is not necessarily elliptical. For a real-valued function $m:\calX_1 \to \bbR$, define
\begin{align*}
  \bar m_\sfA &= \frac{\int_\sfA m (\bfx) f_\bfX(\bfx) \d \bfx}{\int_\sfA f_\bfX(\bfx) \d \bfx},
\end{align*}
the conditional expectation of $m$ given $\bfX \in \sfA$, where $\sfA$ is a Borel set. The method aims to find a box $\sfB \subset \calX$ for which $\bar m_\sfB$ is maximized under the constraint that $\sfB$ has probability $\beta$, where $\beta$ is a tuning meta-parameter for the smallest permitted box measure. To do so, the algorithm is divided into stages:

\begin{algorithm} [t]
        \caption{FastPRIM for minimizing volume }\lb{FastPRIMI}
        \DontPrintSemicolon
	\KwIn{Elliptical random vector $\bfX \in \bbR^d$, peeling parameter $\beta \in (0,1)$.}
	\KwOut{$\giventhat*{\bfY}{\bigcap_{i=d-k+1}^d \left\{ Y_i \in \sfT_{Y_i}^\alpha \right\}}$.}
        Compute the orthogonal transformation $\bfY = \bfV' \bfX$, sorting its components by decreasing variance.\;
        Choose $k$ (say, by maximization of the spectral gap).\
        Peel simultaneously a probability $\alpha/2 = (1-\beta)/(2k)$ of each of the tails of the $k$ pettiest components.\;
\end{algorithm}

\begin{enumerate}
	\item {\bf Peeling.} Take the collection $\calB_\alpha$ of $2d$ extremal sub-boxes of probability $\alpha$ (a second tuning meta-parameter) of the form
\begin{align*}
 b_{j_-} &= \left\{ x \mid x_j < Q_{X_j} (\alpha) \right\},\\
  b_{j_+} &= \left\{ x \mid x_j > Q_{X_j} (1 - \alpha) \right\},
\end{align*}
and peels the box 
\begin{align}\lb{yi}
   b^{\ast} = \arg \max_{b \in \calB_\alpha} \bar m_{\calX_1 \setminus b}.
\end{align}
that maximizes the preserved conditional expectation among all boxes $b \in \calB_\alpha$. Update the new box to $\sfB = \calX_1 \setminus b^{\ast}$ and run the same procedure on the new $\sfB$. The process is iterated $L$ times until the final box $\sfB_1$ has probability $\beta = (1-\alpha)^L$. Update the space to $\calX_2 = \calX_1 \setminus \sfB_1$.
	\item {\bf Covering.} For $i$ from 1 to $t$, the space $\calX_i$ is peeled, obtaining a new peeled box $\sfB_i \in \calX_{i-1} \setminus \sf \bigcup_{j=1}^{i-1} \sfB_j$. The final box selected is $\boldsymbol{\sfB} \defeq \bigcup_{j=1}^t \sfB_j$.
\end{enumerate}

To correct for the greediness effect of peeling, the original PRIM algorithm includes an intermediate stage called {\it pasting}, which we ignore here. However, PRIM has multiple shortcomings. For instance, Polonik and Wang demonstrated that (1) even in dimension 2, PRIM was unable to distinguish between two modes in a mixture of normals \cite{PolonikWang2010}; (2) PRIM is painfully slow even in moderate dimensions, as it requires evaluating and comparing $\bar m_{\calX \setminus b}$ for each of the $2d$ boxes $b \in \calB_\alpha$ at each of the $Lt$ iterations of the peeling and covering stages combined; and (3) PRIM is not well-behaved in situations of near collinearity. For these reasons, Rao and Dazard introduced Local Sparse Bump Hunting (LSBH) \cite{DazardRao2010}, which under minimal assumptions, proceeds as follows: first, it makes a partition of the space via Classification and Regression Trees into, say, $\ell$ subspaces \cite{BreimanEtAl1984}; second, each subspace obtained from the partition is dimensionally reduced via sparse PCA \cite{ZouHastieTibshirani2006}; and third, PRIM is performed in each dimensionally reduced subspace. Moreover, in \cite{DazardRao2010}, the authors empirically showed that LSBH outperformed PRIM. LSBH was applied to medicine, revealing molecular heterogeneity of colon tumors  \cite{DazardRaoMarkowitz2012}. However, LSBH also has at least a couple of shortcomings: First, although faster than PRIM, it is still slow, as it must evaluate and compare $k_1 + \cdots + k_\ell$ boxes in each of the $L$ peeling iterations, where $k_i$ is the number of dimensions of partition $i$ after applying PCA, with $1 \le i \le \ell$. Second, since Dazard and Rao's analysis was empirical rather than theoretical, it was difficult to determine whether PCA-based dimension reduction was optimal across all maximization problems.

\begin{algorithm} [t]
        \caption{FastPRIM for minimizing variance}\lb{FastPRIMII}
        \DontPrintSemicolon
	\KwIn{Elliptical random vector $\bfX \in \bbR^d$, peeling parameter $\beta \in (0,1)$.}
	\KwOut{$\giventhat*{\bfY}{\bigcap_{i=1}^k \left\{ Y_i \in \sfT_{Y_i}^\alpha \right\}}$.}
        Compute the orthogonal transformation $\bfY = \bfV' \bfX$, sorting its components by decreasing variance.\;
        Choose $k$ (say, by maximization of the spectral gap).\;
        Peel simultaneously a probability $\alpha/2 = (1-\beta)/(2k)$ of each of the tails of the $k$ leading principal components.\;
\end{algorithm}

To address the first problem, it was clear that the family of distributions needed to be constrained in order to allow peeling multiple tails simultaneously. Therefore, Díaz--Pachón, Rao, and Dazard proposed the fastPRIM algorithm for mode-hunting of a multivariate normal distribution, which first reduced dimensionality via PCA and then peeled in a single step the $2k$ tails of the preserved $k$ ($< d$) dimensions  \cite{DiazDazardRao2017}. To address the second problem of LSBH, Liu et al extended fastPRIM to hunt $\beta$-modes---rectangular boxes of probability $\beta$ where the volume of the selected box is minimized---to truncated multivariate normal and symmetric Laplace distributions \cite{LiuEtAl2023}, proving that the minimizer of the $k$-dimensional volume of $\beta$-modes is obtained for the centered box in the projection of the {\it pettiest components} (the eigenvectors of the covariance matrix with the smallest variance), not the leading principal components. This result was proven using active information (AIN) as the objective function, where AIN is defined as $\ain(\sfB) \defeq \log[\bfP(\sfB)/\bfP_0(\sfB)]$ \cite{DiazMarks2020a, DiazSaenzRao2020, DiazHossjer2022, HossjerDiazRao2022, ZhouEtAl2023, DiazHossjerMathew2024, HossjerEtAl2024, DiazEtAl2025, ChenDiaz2026}. In their analysis, $\bfP$ and $\bfP_0$ are the empirical and uniform distributions, respectively, over $\calX$. Thus, there is a local mode in $\sfB$ if $\ain(\sfB) > 0$ \cite{DiazEtAl2019}. However, generalization to more multivariate distributions was hindered by a counterexample (Remark 1 and Figure 2 of \cite{LiuEtAl2023}). 

When $\bfX$ is a random vector in the elliptical family, we propose Algorithms \ref{FastPRIMI} and \ref{FastPRIMII} for mode-hunting. Theorem \ref{TheTh} proves that Algorithm \ref{FastPRIMI} maximizes the total variance of the final truncated space. Moreover, Proposition \ref{Th2} proves that the volume of the $k$-dimensional peeled box in Algorithm \ref{FastPRIMI} is minimal.  In the other direction, Algorithm \ref{FastPRIMII} optimizes for modes hunted by variance minimization, as peeling the leading principal components reduces the total variance and the Frobenius norm the most (by Theorem \ref{TheTh}) while maximizing the volume of its $k$-dimensional peeled box of probability $\beta$. Theorem \ref{TheTh} and Proposition \ref{Th2} significantly extend Theorems 3 and 4 of \cite{LiuEtAl2023}, where it was shown that the last principal components (referred to as pettiest components) minimize the volume of $\sfT_{I_k, \tilde Y}$ when $\bfX$ is either multivariate normal or symmetric multivariate Laplace: first, the minimization part of Proposition \ref{Th2} generalizes this finding to all random vectors with elliptical distributions; second, Theorem \ref{TheTh} shows that this is equivalent to maximizing the total variance (and the Frobenius norm) when the points in the truncated $k$ dimensional box are sent back to the whole $d$-dimensional space, making Algorithm \ref{FastPRIMI} stable. The maximization part demonstrates that the volume of $\sfT_{I_k, \tilde Y}$ is maximized when the $k$ leading principal components are truncated by peeling $\alpha/2$ from each of their tails, therefore optimizing for mode-hunting via variance minimization---a fact that Algorithm \ref{FastPRIMII} exploits.

\begin{figure}[t]
\centering
{\includegraphics[width=4cm,height=3.6cm]{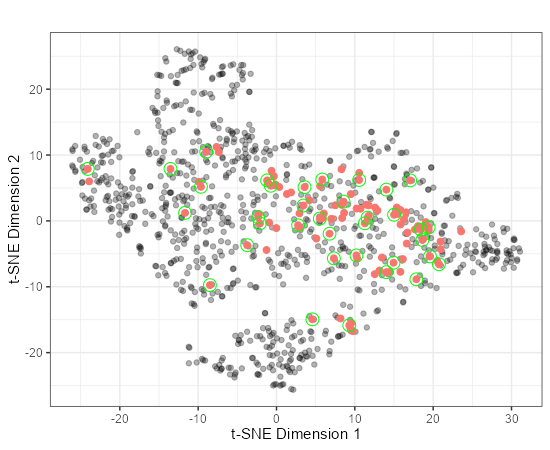}}\quad\raisebox{0.0\height}
{\includegraphics[width=4cm,height=3.6cm]{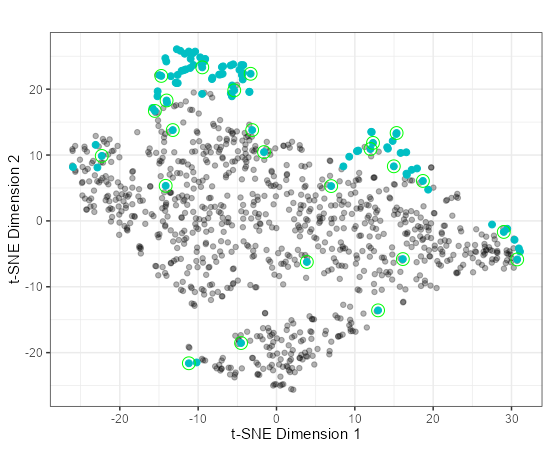}}
\\[1em]
{\includegraphics[width=4cm,height=4.4cm]{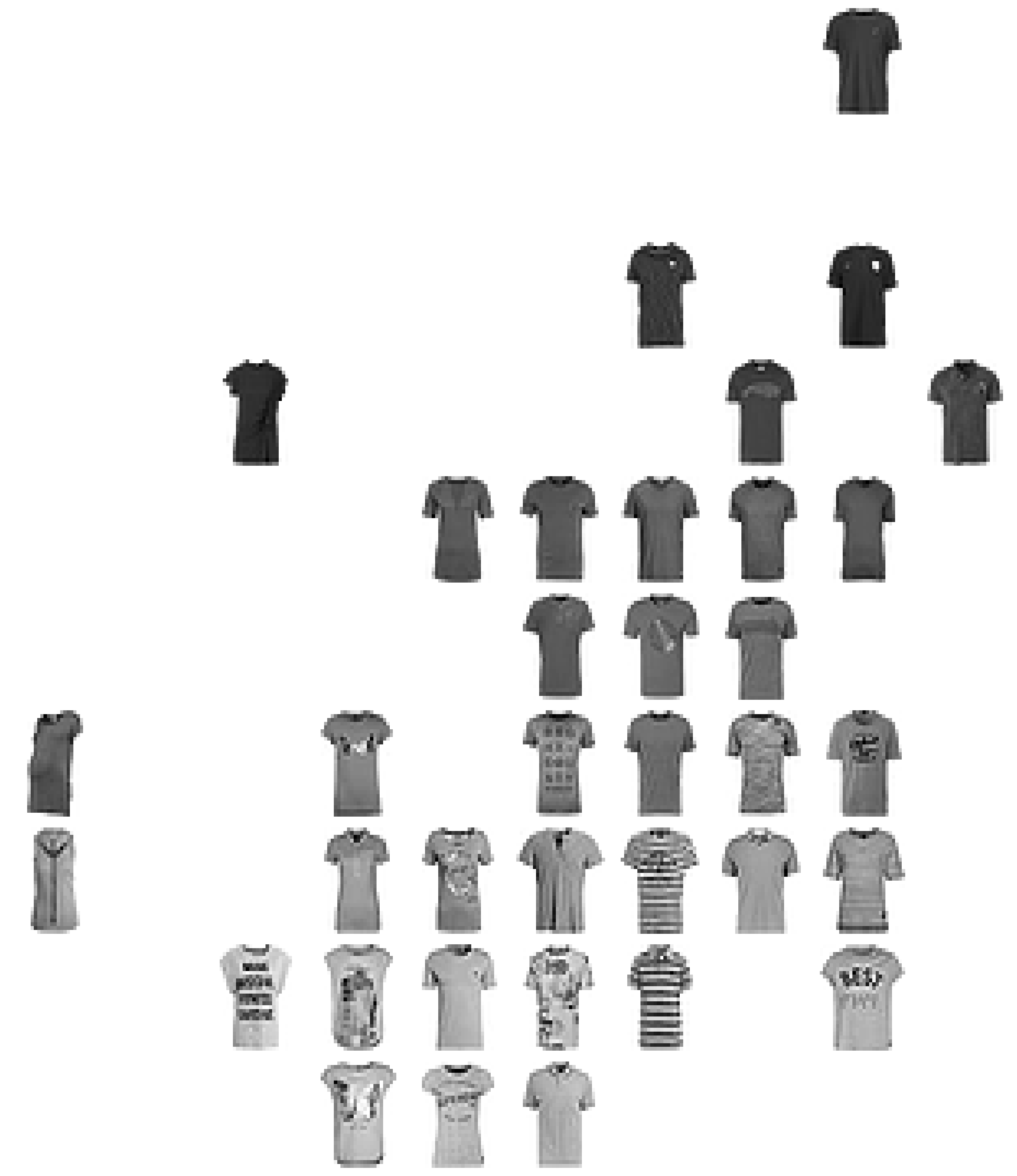}}\quad\raisebox{0.0\height}
{\includegraphics[width=4cm,height=4.4cm]{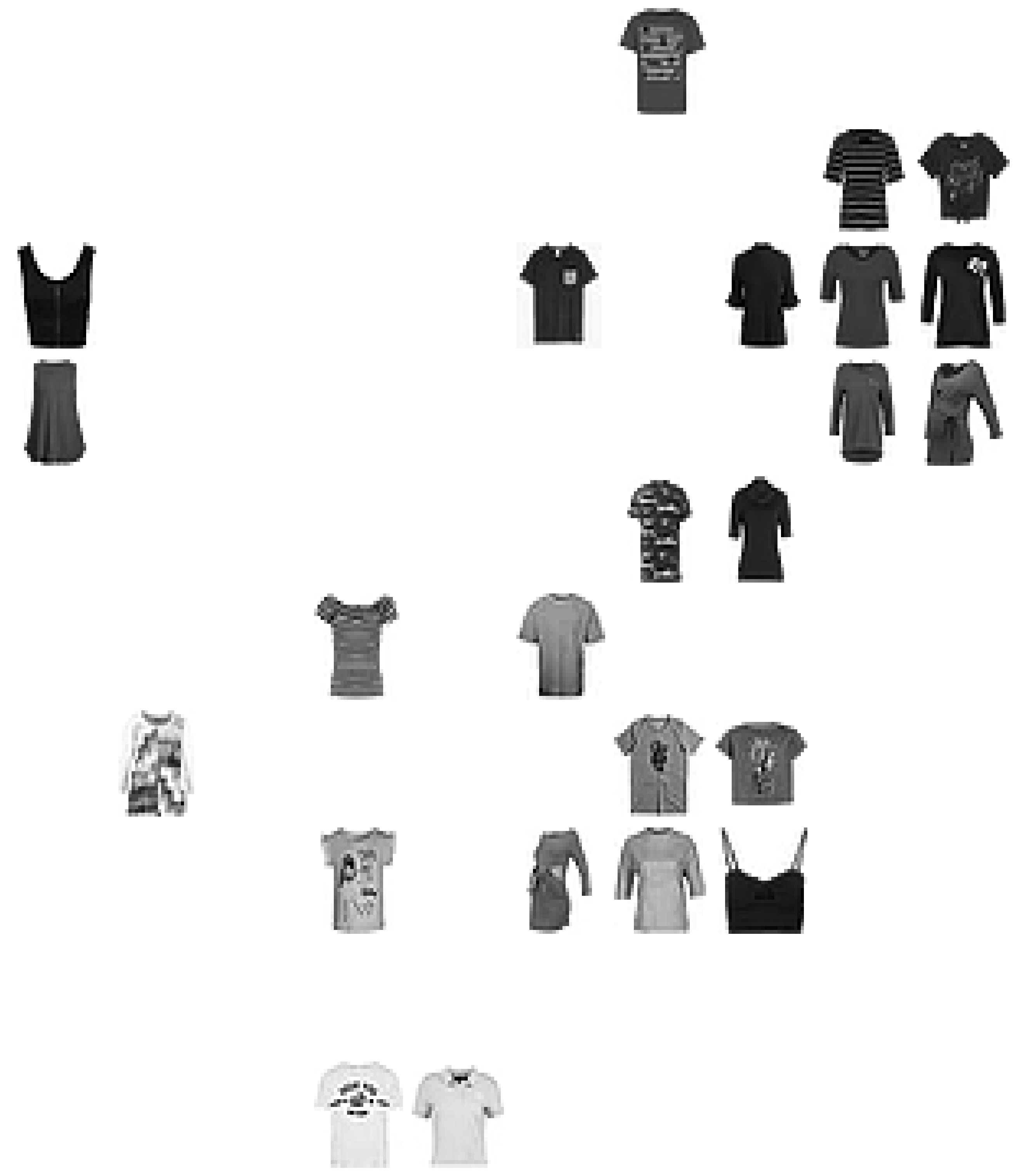}}
  \caption{T-shirt example. Left) peeling principal directions. Right) peeling pettiest
  directions} \label{t-shirt}
\end{figure}

\section{No Free Lunch}

With the established proportionality between bounding-box volume and variance, we notice a clear symmetry between the two objectives. Peeling the pettiest directions maximizes the preserved variance, corresponding to a smaller bounding-box volume in $k$ dimensions; while peeling the principal directions minimizes the preserved variance, thus maximizing the bounding-box volume in $k$ dimensions. Essentially, these two objectives are dual versions of bump-hunting problems, differing only in the direction of optimization. This observation can be complemented by a finite-domain No-Free-Lunch result: once the admissible peeling rules are restricted to the finite set of principal-component coordinate subsets, the classical NFLTs for finite optimization apply directly.

\begin{proposition}[No free lunch for dimension selection]\lb{NFLT}
Fix $k \in \{1,\ldots,d\}$ and $\alpha \in (0,1)$ with $k\alpha \leq 1$. Let $\calI_k:=\{I \subset [d]: \abs{I}=k\}$ be the finite set of admissible dimension selection strategies in the eigenbasis $\bfV$, let $\calY$ be a finite set, and let $\calF:=\calY^{\calI_k}$ be the set of all objective functions $f:\calI_k \to \calY$. Let $a$ be a deterministic non-revisiting search algorithm on $\calI_k$, and let $Y(f,m,a)$ denote the sequence of objective values observed by $a$ after $m$ queries. Then for any two such algorithms $a_1$ and $a_2$, and for every $y \in \calY^m$,
\[
\bigl|\{f \in \mathcal F : Y(f,m,a_1)=y\}\bigr|
=
\bigl|\{f \in \mathcal F : Y(f,m,a_2)=y\}\bigr|.
\]
Hence, averaged over all objective functions in $\calF$, no dimension selection algorithm has an a priori advantage over any other.
\end{proposition}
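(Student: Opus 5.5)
The plan is to recognize Proposition \ref{NFLT} as the Wolpert--Macready No Free Lunch theorem for finite search spaces, instantiated with the search space $\calI_k$. The only property of $\calI_k$ we need is that it is finite, of cardinality $\binom{d}{k}$. Its interpretation as a family of peeling rules in the eigenbasis plays no role. I would prove the counting identity directly, by induction on $m$, rather than merely cite it, so that the argument is self-contained.

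First I formalize a deterministic non-revisiting algorithm. Such an algorithm $a$ is a rule that, given the history $((x_1,y_1),\ldots,(x_j,y_j))$ of previously queried points and observed values, outputs a new point $x_{j+1}\in\calI_k\setminus\{x_1,\ldots,x_j\}$. The first point $x_1$ is fixed. We assume $m\le\binom{d}{k}$ so that $m$ distinct queries exist. Then $Y(f,m,a)=(f(x_1),\ldots,f(x_m))$, where the $x_j$ are generated adaptively.

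The key claim is that for every algorithm $a$ and every $y\in\calY^m$,
\[
\bigl|\{f\in\calF : Y(f,m,a)=y\}\bigr| = \abs{\calY}^{\binom{d}{k}-m}.
\]
This count does not depend on $a$, which gives the proposition. I prove the claim by induction on $m$.

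For the base case $m=1$, $x_1$ is fixed. The condition $f(x_1)=y_1$ fixes one coordinate of $f\in\calY^{\calI_k}$ and leaves the others free, giving $\abs{\calY}^{\binom{d}{k}-1}$ functions.

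For the inductive step, condition on the event $Y(f,m-1,a)=(y_1,\ldots,y_{m-1})$. On this event the queried points $x_1,\ldots,x_{m-1}$ are determined by the prefix $y_{1:m-1}$ alone, because the algorithm is deterministic and each $x_j$ depends only on earlier values. Hence $x_m$ is one fixed point, distinct from the earlier ones by the non-revisiting property. The functions in the event have their values at $x_1,\ldots,x_{m-1}$ prescribed and are free elsewhere. Imposing $f(x_m)=y_m$ fixes exactly one more free coordinate. This divides the count by $\abs{\calY}$ and completes the induction.

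The final sentence of the proposition, about averaging, follows at once. For any performance measure $\Phi(Y(f,m,a))$, the sum $\sum_{f}\Phi(Y(f,m,a))=\sum_{y}\Phi(y)\,\abs{\{f:Y(f,m,a)=y\}}$ is the same for $a_1$ and $a_2$.

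The main obstacle is the inductive step. The point to argue carefully is that the next query $x_m$ is constant on the event $\{Y(f,m-1,a)=y_{1:m-1}\}$. This requires a precise definition of determinism: each query is a function of the observed history only. The non-revisiting assumption is also essential, since it guarantees that $x_m$ is a genuinely new coordinate of $f$. Without it, revisiting would cause the count to depend on $a$.

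Finally, I would remark why this is consistent with Theorem \ref{TheTh} and Proposition \ref{Th2}. Those results single out $[k]$ or $[d]\setminus[d-k]$ only for particular objectives, such as trace, Frobenius norm and volume. Averaged over all of $\calF$, neither choice is privileged.
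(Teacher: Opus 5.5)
Your proof is correct and takes the paper's route: the paper gives no separate argument, only observing that once strategies are restricted to the finite set $\calI_k$ of size $\binom{d}{k}$, the classical Wolpert--Macready No Free Lunch theorem for finite search spaces applies directly. Your induction proves that cited result from scratch instead of quoting it, and it also gives the explicit algorithm-independent count $\abs{\calY}^{\binom{d}{k}-m}$.
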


Proposition \ref{NFLT} addresses the class of all finite-valued objective functions on the finite strategy space $\calI_k$, not just the specific geometric objectives studied in Section \ref{S:Theory}. It shows that, absent structural assumptions on the objective, no dimension selection rule is universally preferred. By contrast, Theorem \ref{TheTh} and Proposition \ref{Th2} identify such a structure for the objectives of optimizing either preserved variance or bounding-box volume. Within this structured subclass, choosing the pettiest and principal components optimizes different scientifically meaningful criteria, as explained in the last paragraph of Section \ref{S:Algos}.  This is in sharp contrast to supervised learning with cross-validation as the objective function, where ``anti-cross-validation'' performs equally well under the NFLTs but lacks comparable scientific motivation.

\begin{figure}[t]
\centering
{\includegraphics[width=4cm,height=3.6cm]{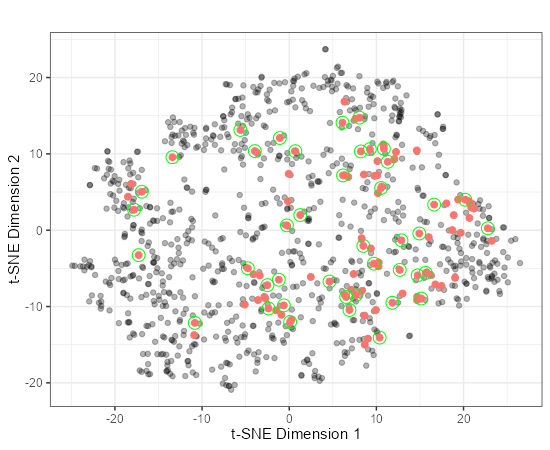}}\quad\raisebox{0.0\height}
{\includegraphics[width=4cm,height=3.6cm]{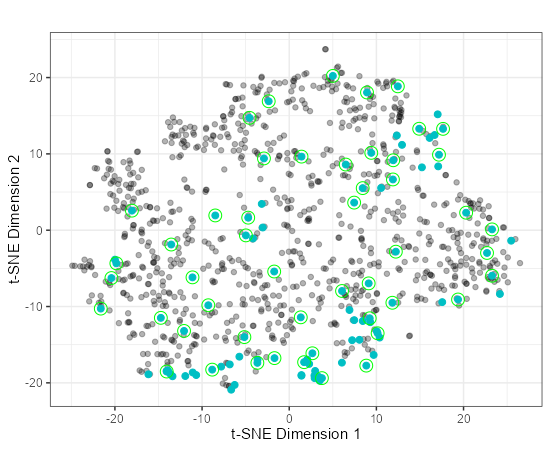}}
\\[1em]
{\includegraphics[width=4cm,height=4.4cm]{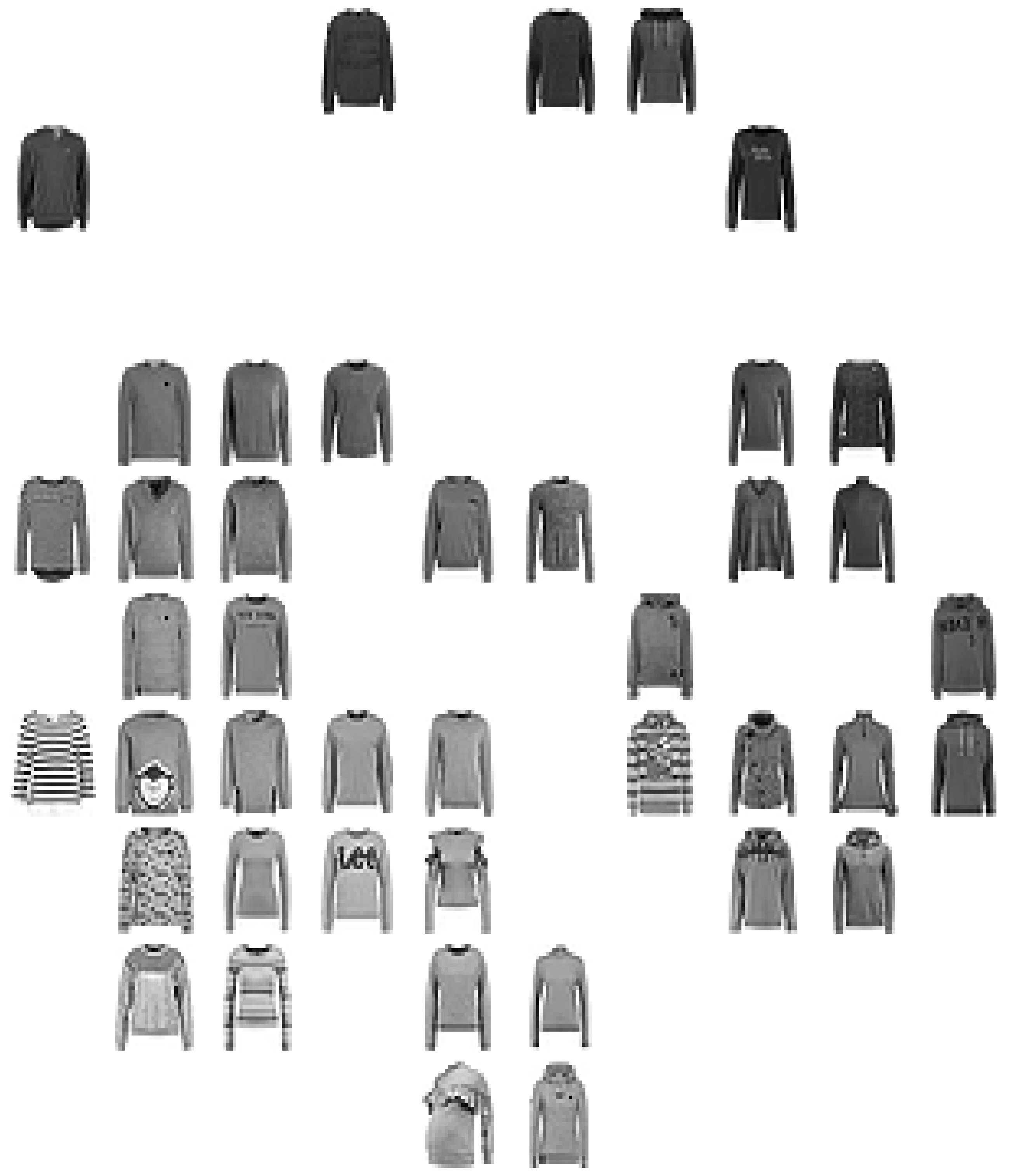}}\quad\raisebox{0.0\height}
{\includegraphics[width=4cm,height=4.4cm]{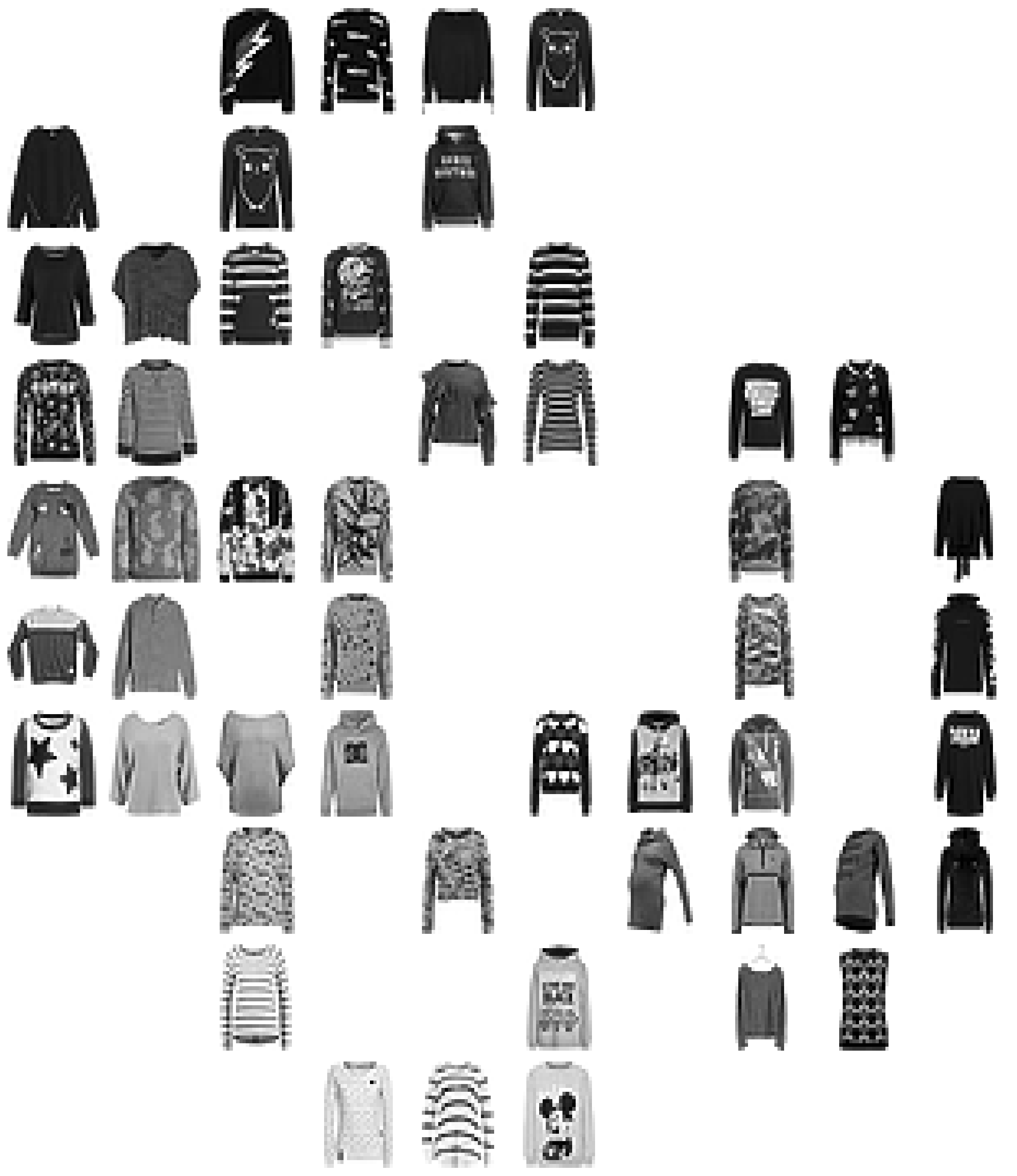}}
  \caption{Pullover example. Left) peeling principal directions. Right) peeling pettiest
  directions} \label{pullover}
\end{figure}

\section{Fashion MNIST Example}\lb{Example}

We demonstrate our algorithms on the Fashion MNIST data set, with code available at GitHub (\url{https://github.com/TH20255/PRIM-Fashion}). Because our methods assume unimodal data, we first classify the images by label and carry out the analysis separately on each class. Prior to peeling, we apply PCA and identify the pettiest components using the \emph{log spectral gap} criterion (see Section~\ref{sec:lsg}): the pettiest region is defined as the $k$ consecutive principal components that straddle the largest drop in the log-scale scree plot. The principal
region comprises the leading $k$ components. We then execute Algorithms \ref{FastPRIMI} and \ref{FastPRIMII}.

\begin{figure}[t]
\centering
\includegraphics[width=\columnwidth]{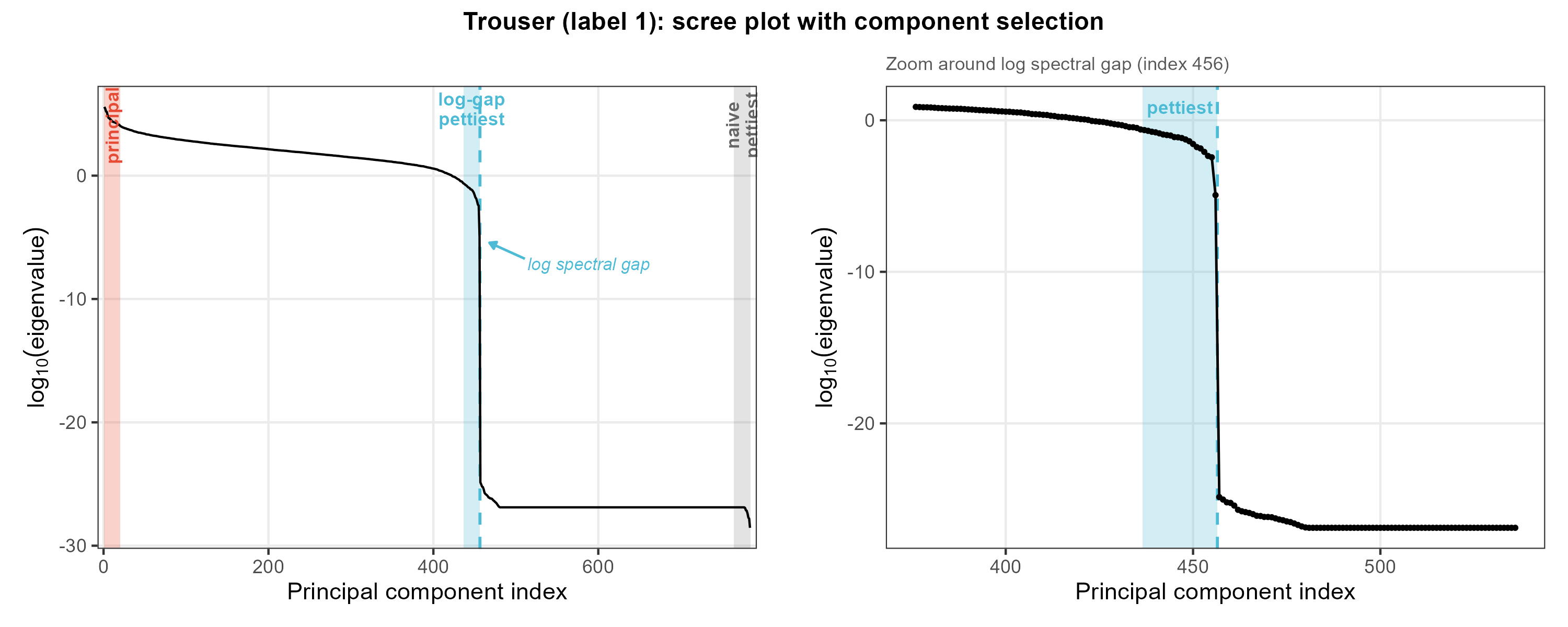}
\caption{Log-scale scree plot for the trouser class. Red band: principal
  region (top 20 PCs). Blue band: pettiest region selected by the log
  spectral gap (PCs~437--456). Gray band: naive tail selection (PCs~765--784).
  The dashed vertical line marks the log spectral gap at index~456.
  The right panel zooms into the neighborhood of the gap.}
\label{fig:scree_trouser}
\end{figure}

The results can be interpreted as follows. Peeling the principal directions identifies the mode---the most common clothing styles within each class---while peeling the pettiest directions preserves the greatest variability, offering a representative overview of the diverse styles present in the data. For instance, Fig.\@~\ref{t-shirt} displays results for t-shirts, highlighting the selected points from the principal subset (in red) and the pettiest subset (in blue) on a t-SNE plot, and also showcasing the grid-sampled results (marked with green circles). We overlay a $10\times10$ grid onto the t-SNE projection and, for each occupied cell, randomly select one representative image.

The results vary meaningfully across categories. \textbf{T-shirts/tops} (Fig.\@~\ref{t-shirt}): the principal subset concentrates on plain, dark-coloured basic crew-neck tees, while the pettiest subset captures a wide variety of styles---cropped tops, tank
tops, long-sleeve shirts, and heavily printed or striped garments. \textbf{Pullovers} (Fig.\@~\ref{pullover}): the principal subset
selects plain, solid-coloured sweatshirts and basic long-sleeve tops, whereas the pettiest subset is dominated by bold graphic prints, large logos, character motifs, and patterned knits---a visually striking contrast. The remaining categories are analyzed in Appendix \ref{CntEx}. The t-SNE visualization confirms that the principal subset (red) is more concentrated, while the pettiest subset (blue) appears more dispersed, capturing finer or less common distinctions. Examining both results together provides valuable insight into the garment market structure and can inform business decisions. 

Table~\ref{tab:prim_statistics} reports the bootstrap estimates (and standard errors) of the preserved variance statistics for each label, based on $B=1{,}000$ replicates. Quantitatively, the pettiest subset consistently yields a substantially larger total variance than the principal subset---by a factor of roughly $1.6$--$4.2\times$ across all ten categories---confirming the theoretical prediction of Theorem~\ref{TheTh}. The Frobenius norm and operator norm follow the same ordering. The log generalized variance, which by
Proposition~\ref{Counter} is invariant to the choice of dimensions under the elliptical model, shows only modest differences between the two subsets in practice, reflecting that the theoretical invariance is approximate for finite samples and non-elliptical data. Bootstrap standard errors are uniformly small relative to the point estimates, attesting to the stability of the procedure.


\subsection{The role of log spectral gap selection}\label{sec:lsg}

A key practical question in applying Algorithms~\ref{FastPRIMI} and~\ref{FastPRIMII} is how to identify the pettiest component region. A natural but naive approach is to simply take the last $k$ eigenvectors of the sample covariance matrix, irrespective of where in the eigenvalue spectrum they fall. We show here that this naive choice can be uninformative: when the chosen tail includes components that still carry a genuine signal, the resulting pettiest box becomes indistinguishable from a near-uniform sample of the data, and the variance contrast between the principal and pettiest boxes collapses.

\begin{figure}[t]
\centering
{\includegraphics[width=4cm,height=3.6cm]{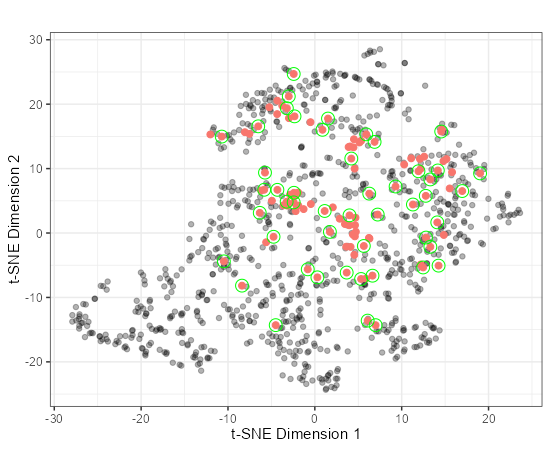}}%
\quad
\raisebox{0.0\height}{%
  \includegraphics[width=4cm,height=3.6cm]{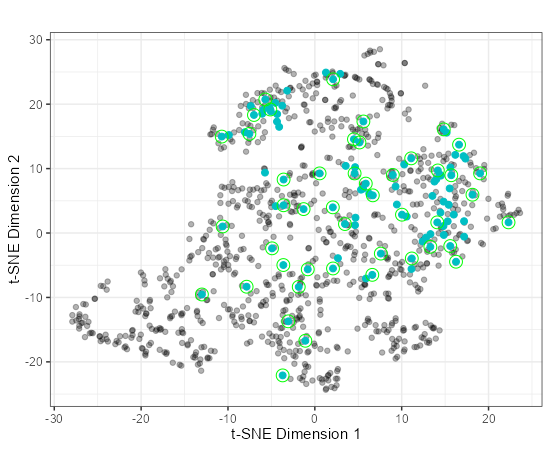}}
\\[1em]
{\includegraphics[width=4cm,height=4.4cm]{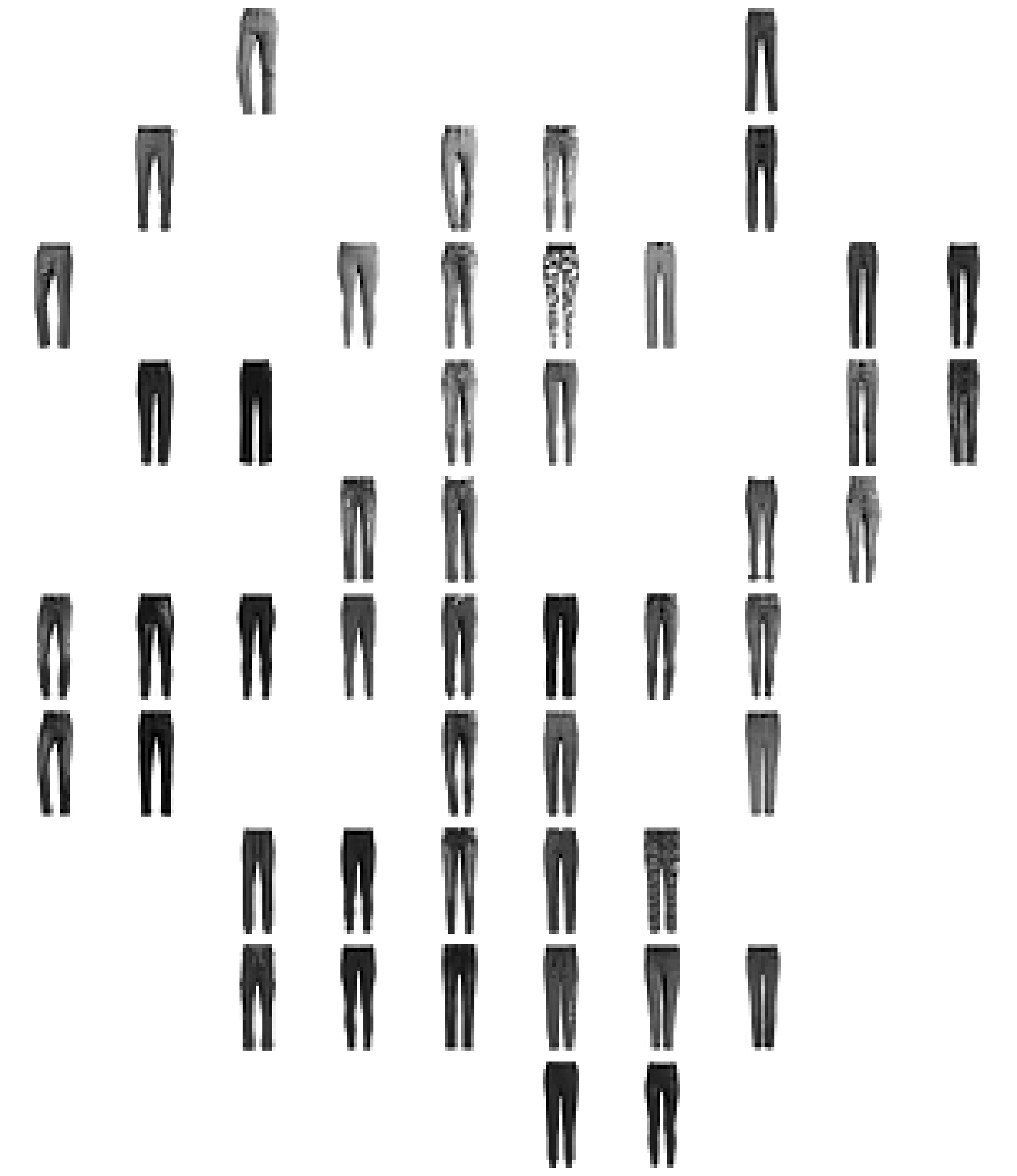}}%
\quad
\raisebox{0.0\height}{%
  \includegraphics[width=4cm,height=4.4cm]{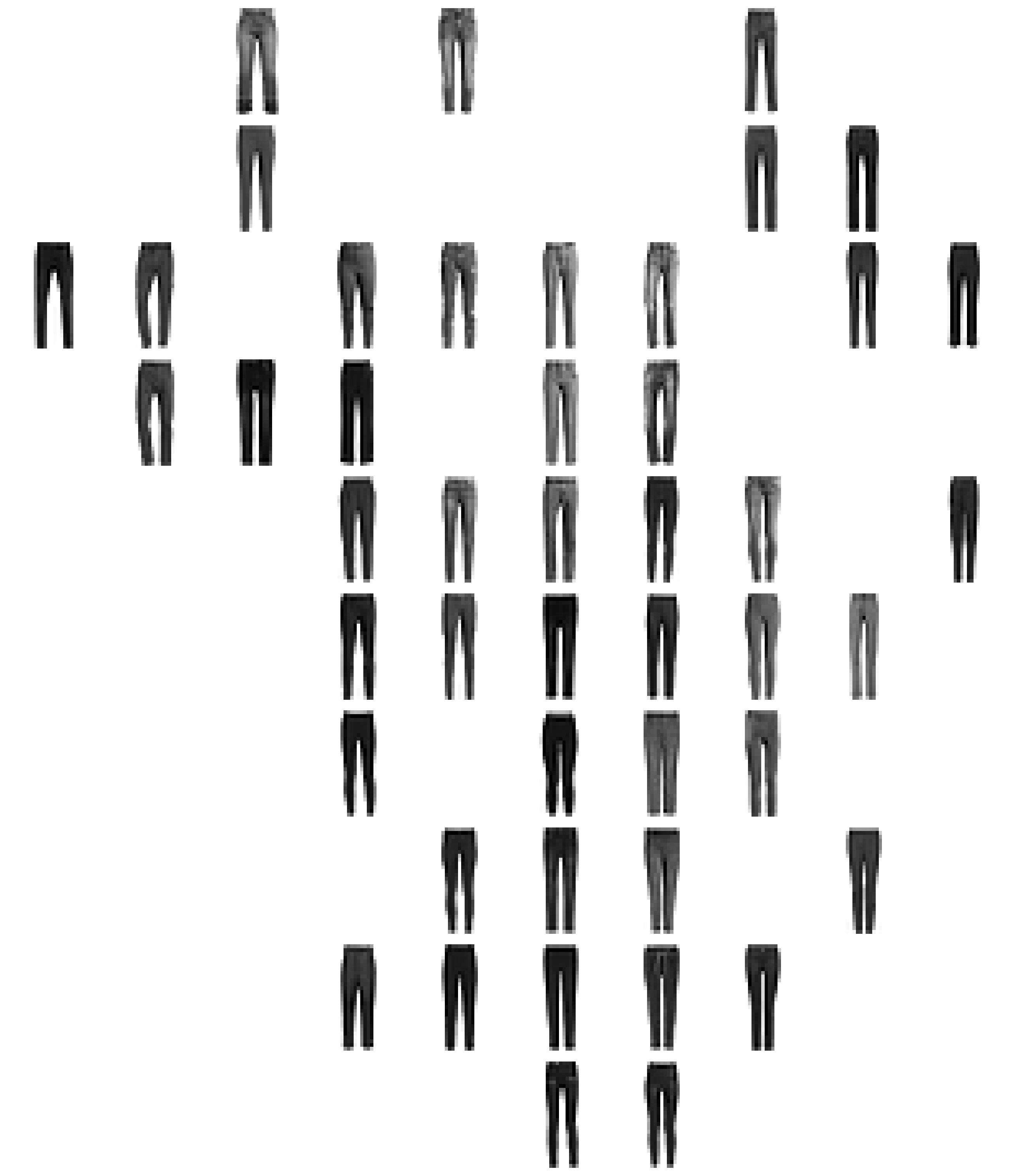}}
\caption{Trouser class with \emph{naive} pettiest selection (raw tail components, no log spectral gap). Left) principal subset. Right) pettiest subset. The pettiest subset is visually indistinguishable from a uniform sample, with no discernible structure separating it from the full data.}\label{trouser_naive}
\end{figure}

\subsubsection*{The log spectral gap criterion}

Let $\hat\lambda_1 \ge \cdots \ge \hat\lambda_p$ be the sample eigenvalues. The \emph{log spectral gap} is defined as the index
\[
  j^\ast = \arg\max_{1 \le j \le p-1}
           \bigl(\log\hat\lambda_j - \log\hat\lambda_{j+1}\bigr),
\]
i.e.\ the position of the largest drop in the log-scale scree plot. The pettiest region is then the $k$ components immediately below $j^\ast$: $\{\hat\lambda_{j^\ast-k+1}, \ldots, \hat\lambda_{j^\ast}\}$, and the principal region comprises the top $k$ components $\{\hat\lambda_1, \ldots, \hat\lambda_k\}$.  Working on the log scale is essential: on the original scale, the gap is dominated by the top eigenvalues, which merely reflects the overall magnitude of variation rather than the structural break that separates signal from noise \cite{AhnHorenstein2013}.

\subsubsection*{Empirical consequence: naive tails give near-uniform sampling}

Fig.\@~\ref{fig:scree_trouser} shows the $\log_{10}$-scale scree plot for the trouser class.  The spectrum exhibits a sharp structural break at principal component~456, where the log-scale eigenvalue drops by more than two orders of magnitude over a narrow band of components (indices~437--456, highlighted in blue).  A naive tail selection, by contrast, picks the last 20 components (indices~765--784, gray band),which lie in the flat noise floor where all eigenvalues are nearly equal.

To illustrate the practical consequence, Fig.\@~\ref{trouser_naive} shows the grid-sampled results obtained with naive pettiest selection (the last $k$ raw-spectrum components). The pettiest subset is visually indistinguishable from a near-uniform draw from the data: images span the full gamut of trouser styles without discernible structure, while the principal subset correctly
concentrates on the most common trouser designs.

This observation is corroborated numerically. With naive selection, the pettiest total variance for trousers is $8.61\times10^5$, only $24\%$ above the principal value of $6.92\times10^5$---a negligible contrast. Under the log spectral gap criterion, the same statistics become $2.81\times10^6$ versus $6.74\times10^5$, a ratio of $4.2\times$, consistent with the theoretical prediction. Similar collapses are observed across other categories with the naive approach: for sneakers, the pettiest-to-principal ratio is $1.32$ with naive selection versus $2.15$ with log spectral gap; for ankle boots, it is $1.38$ versus $2.50$.

The explanation is straightforward: the very last eigenvalue components are dominated by numerical noise and image artifacts rather than any garment-specific structure. Because PRIM searches for a high-density box in this near-isotropic subspace, it essentially recovers a uniform sample. The log spectral gap provides exactly that structural knowledge by identifying the natural
break between the informative and the noise-dominated portion of the spectrum.

\begin{table*}[t]
\tiny
\centering
\caption{Bootstraps estimations and standard errors by label.Statistics are evaluated on the points selected by the final box }
\label{tab:prim_statistics}
\begin{tabular}{llcccc}
  \toprule
Clothing Type & Model & Total variance & Frobenius norm
              & $\log$ generalized variance & Operator norm \\
  \midrule
T-shirt/top & Principal
  & $1.26\times10^{6}$ ($2.57\times10^{3}$)
  & $4.83\times10^{5}$ ($1.61\times10^{3}$)
  & $541.71$ ($1.04$)
  & $4.44\times10^{5}$ ($1.76\times10^{3}$) \\
 & Pettiest
  & $3.91\times10^{6}$ ($5.94\times10^{3}$)
  & $1.49\times10^{6}$ ($4.05\times10^{3}$)
  & $615.92$ ($1.17$)
  & $1.33\times10^{6}$ ($4.59\times10^{3}$) \\
  \addlinespace[2pt]\midrule\addlinespace[2pt]
Pullover & Principal
  & $1.48\times10^{6}$ ($2.76\times10^{3}$)
  & $5.25\times10^{5}$ ($2.02\times10^{3}$)
  & $552.26$ ($1.07$)
  & $4.52\times10^{5}$ ($2.32\times10^{3}$) \\
 & Pettiest
  & $3.91\times10^{6}$ ($5.69\times10^{3}$)
  & $1.35\times10^{6}$ ($3.95\times10^{3}$)
  & $622.44$ ($1.18$)
  & $1.14\times10^{6}$ ($4.74\times10^{3}$) \\
  \addlinespace[2pt]\midrule\addlinespace[2pt]
Shirt & Principal
  & $1.36\times10^{6}$ ($2.74\times10^{3}$)
  & $4.23\times10^{5}$ ($1.49\times10^{3}$)
  & $557.03$ ($1.05$)
  & $3.25\times10^{5}$ ($1.70\times10^{3}$) \\
 & Pettiest
  & $3.54\times10^{6}$ ($4.81\times10^{3}$)
  & $1.28\times10^{6}$ ($3.26\times10^{3}$)
  & $621.56$ ($1.23$)
  & $1.12\times10^{6}$ ($3.54\times10^{3}$) \\
  \addlinespace[2pt]\midrule\addlinespace[2pt]
Dress & Principal
  & $1.43\times10^{6}$ ($2.26\times10^{3}$)
  & $4.04\times10^{5}$ ($1.10\times10^{3}$)
  & $563.31$ ($1.07$)
  & $2.95\times10^{5}$ ($1.25\times10^{3}$) \\
 & Pettiest
  & $2.74\times10^{6}$ ($4.34\times10^{3}$)
  & $9.90\times10^{5}$ ($2.49\times10^{3}$)
  & $585.93$ ($1.11$)
  & $7.46\times10^{5}$ ($2.72\times10^{3}$) \\
  \addlinespace[2pt]\midrule\addlinespace[2pt]
Coat & Principal
  & $1.83\times10^{6}$ ($2.61\times10^{3}$)
  & $6.79\times10^{5}$ ($1.71\times10^{3}$)
  & $560.00$ ($1.06$)
  & $5.65\times10^{5}$ ($2.01\times10^{3}$) \\
 & Pettiest
  & $3.28\times10^{6}$ ($4.12\times10^{3}$)
  & $1.11\times10^{6}$ ($2.64\times10^{3}$)
  & $607.88$ ($1.18$)
  & $8.84\times10^{5}$ ($3.36\times10^{3}$) \\
  \addlinespace[2pt]\midrule\addlinespace[2pt]
Sandal & Principal
  & $1.35\times10^{6}$ ($2.02\times10^{3}$)
  & $3.21\times10^{5}$ ($7.51\times10^{2}$)
  & $575.03$ ($1.15$)
  & $2.14\times10^{5}$ ($9.01\times10^{2}$) \\
 & Pettiest
  & $3.48\times10^{6}$ ($5.27\times10^{3}$)
  & $9.16\times10^{5}$ ($2.12\times10^{3}$)
  & $629.17$ ($1.28$)
  & $5.95\times10^{5}$ ($2.26\times10^{3}$) \\
  \addlinespace[2pt]\midrule\addlinespace[2pt]
Ankle boot & Principal
  & $1.39\times10^{6}$ ($2.82\times10^{3}$)
  & $4.56\times10^{5}$ ($1.65\times10^{3}$)
  & $553.11$ ($1.07$)
  & $3.80\times10^{5}$ ($1.86\times10^{3}$) \\
 & Pettiest
  & $3.46\times10^{6}$ ($3.46\times10^{3}$)
  & $1.17\times10^{6}$ ($2.23\times10^{3}$)
  & $609.40$ ($1.18$)
  & $9.46\times10^{5}$ ($2.77\times10^{3}$) \\
   \addlinespace[2pt]\midrule\addlinespace[2pt]
Bag & Principal
  & $2.60\times10^{6}$ ($3.20\times10^{3}$)
  & $1.08\times10^{6}$ ($2.00\times10^{3}$)
  & $594.49$ ($1.16$)
  & $1.00\times10^{6}$ ($2.14\times10^{3}$) \\
 & Pettiest
  & $4.10\times10^{6}$ ($5.05\times10^{3}$)
  & $1.37\times10^{6}$ ($2.80\times10^{3}$)
  & $629.72$ ($1.25$)
  & $1.08\times10^{6}$ ($3.55\times10^{3}$) \\
  \addlinespace[2pt]\midrule\addlinespace[2pt]
Sneaker & Principal
  & $9.87\times10^{5}$ ($1.61\times10^{3}$)
  & $2.87\times10^{5}$ ($9.54\times10^{2}$)
  & $542.36$ ($1.06$)
  & $2.20\times10^{5}$ ($1.19\times10^{3}$) \\
 & Pettiest
  & $2.12\times10^{6}$ ($3.54\times10^{3}$)
  & $8.29\times10^{5}$ ($2.09\times10^{3}$)
  & $563.13$ ($1.05$)
  & $7.43\times10^{5}$ ($2.27\times10^{3}$) \\
  \addlinespace[2pt]\midrule\addlinespace[2pt]
Trouser & Principal
  & $6.74\times10^{5}$ ($1.31\times10^{3}$)
  & $2.38\times10^{5}$ ($6.63\times10^{2}$)
  & $480.80$ ($0.90$)
  & $1.74\times10^{5}$ ($7.29\times10^{2}$) \\
 & Pettiest
  & $2.81\times10^{6}$ ($5.96\times10^{3}$)
  & $1.33\times10^{6}$ ($3.58\times10^{3}$)
  & $548.01$ ($0.98$)
  & $1.22\times10^{6}$ ($3.75\times10^{3}$) \\
  \bottomrule
\end{tabular}
\end{table*}

\section{Conclusion}\lb{Conclusions}
We have proposed an integrated framework for extracting desired subsets from high-dimensional data. Under regularity conditions---specifically, when the distribution of the data belongs to the elliptical family---we demonstrated that peeling along the pettiest components maximizes preserved variance, while peeling along the principal component minimizes the $k$-dimensional bounding-box volume. Our analysis reveals a natural duality between these two legitimate mode-hunting objectives, which naturally invites a No-Free-Lunch interpretation. Moreover, we addressed concerns regarding the stability of peeling the pettiest components because Theorem \ref{TheTh} shows that the preserved total variance and Frobenius norm depend mainly on the unpeeled leading principal components. Our demonstration on the Fashion MNIST data set produced striking results, revealing both prevailing trends and the diversity of available designs.

Our method relies on several assumptions. First, the theoretical guarantees assume that projected features follow an elliptical distribution; performance on heavily skewed data is unknown. Second, the fastPRIM peeling uses marginal quantiles and may fail to handle multiple modes. Finally, our experiments focus on grayscale image data of moderate resolution; extensions to color, higher‑resolution, or non‑image data will require additional validation. Addressing these limitations constitutes promising future work.

\appendices

\section{Auxiliary results for the principal theorems}\lb{Proofs}

In this Appendix, we prove a series of auxiliary results leading to the proof of Theorem \ref{TheTh} and Proposition \ref{Th2}. When $k=1$ and the optimization problem on the total variance is restricted to the eigenbasis transformation, we begin by proving Theorem \ref{Normality}, a particular case of the main result that assumes the independence of the principal components. Then, using Lemma \ref{Cheb}, we prove Theorem \ref{Th1}, an extension of Theorem \ref{Normality} to all elliptical distributions. As a direct implication of Theorem \ref{Th1}, we obtain the important Corollary \ref{CondCovM}, a result that explicitly derives the form of the preserved covariance matrix. We then use Corollary \ref{CondCovM} to extend Theorem \ref{Th1} to all $k$-dimensional subspaces of all orthogonal transformations $\bfP$ of $\bfX$ (Theorem \ref{GenTh1}), and to other variance measures like the Frobenius norm, the generalized variance, and the operator norm (Theorem \ref{Gen}). Finally, with all the background in place, we prove Theorem \ref{TheTh} and Proposition \ref{Th2}.

\begin{theorem}\lb{Normality}
Let $\bfX \sim {\calN}({\bf 0}, \boldsymbol{\Sigma})$ be a $d$-dimensional normal random vector with eigenvalues $\lambda_1 \geq \cdots \geq \lambda_d$ and corresponding orthonormal eigenvectors $v_1, \ldots, v_d$. Let the random vector $\bfY = (Y_1, \ldots, Y_d) = \bfV'\bfX$ be the rotation of $\bfX$ in the direction of the eigenbasis of $\bfSigma$, where $\bfV = (v_1, \ldots, v_d)$. For $\alpha \in [0,1]$, define the peeled box 
\[ \sfT_{Y_i}^\alpha \defeq \left[Q_{Y_i}(\alpha/2), Q_{Y_i}(1-\alpha/2)\right] \]
as the interquantile range between the $\alpha/2$ and $(1-\alpha/2)$ quantiles of $Y_i$. Then, the preserved total variance in the peeled box $\mathrm{Tr} \left(\bfLambda_{i,\alpha}\right)$ is maximized by peeling $Y_d$ and minimized by peeling $Y_1$.
\end{theorem}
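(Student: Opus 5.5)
Since $\bfX$ is Gaussian and $\bfY=\bfV'\bfX\sim\calN({\bf 0},\bfLambda)$ with $\bfLambda=\diag(\lambda_1,\ldots,\lambda_d)$, the coordinates $Y_1,\ldots,Y_d$ are independent. I would use this independence to compute the preserved covariance explicitly. Truncating $Y_i$ to $\sfT_{Y_i}^\alpha$ is an event that depends only on $Y_i$. Hence, conditionally on $\{Y_i\in\sfT_{Y_i}^\alpha\}$, the other coordinates keep their laws and stay independent of $Y_i$ and of each other. The matrix $\bfLambda_{i,\alpha}$ is therefore diagonal. It agrees with $\bfLambda$ except in entry $(i,i)$, which is replaced by $\Var\giventhat*{Y_i}{Y_i\in\sfT_{Y_i}^\alpha}$. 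This gives
\[
\Tr\left(\bfLambda_{i,\alpha}\right)=\sum_{j=1}^d\lambda_j-\lambda_i+\Var\giventhat*{Y_i}{Y_i\in\sfT_{Y_i}^\alpha}.
\]

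The next step is scaling. Write $Y_i=\sqrt{\lambda_i}\,Z$ with $Z\sim\calN(0,1)$. Then $Q_{Y_i}(p)=\sqrt{\lambda_i}\,\Phi^{-1}(p)$, so the event $\{Y_i\in\sfT_{Y_i}^\alpha\}$ equals $\{Z\in[-z_{\alpha},z_\alpha]\}$ with $z_\alpha=\Phi^{-1}(1-\alpha/2)$. Consequently
\[
\Var\giventhat*{Y_i}{Y_i\in\sfT_{Y_i}^\alpha}=\lambda_i\,c_\alpha,\qquad c_\alpha\defeq\Var\giventhat*{Z}{\abs{Z}\le z_\alpha},
\]
where $c_\alpha$ does not depend on $i$. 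Substituting, $\Tr(\bfLambda_{i,\alpha})=\sum_j\lambda_j-(1-c_\alpha)\lambda_i$.

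Everything now reduces to showing $c_\alpha\le 1$, ideally with strict inequality for $\alpha\in(0,1]$. For $\alpha=0$ the statement is trivial because all choices coincide. For $\alpha>0$ I would give a direct argument. By symmetry the conditional mean is zero, so $c_\alpha=\bfE\giventhat*{Z^2}{\abs{Z}\le z_\alpha}$. Since $Z^2$ is increasing in $\abs{Z}$, conditioning on the lower part of $\abs{Z}$ gives $\bfE\giventhat*{Z^2}{\abs Z\le z}\le \bfE Z^2=1$. Equivalently, one can write $\bfE(Z^2\mid\abs Z\le z)\le z^2\le \bfE(Z^2\mid \abs Z>z)$ and note that $1$ is a convex combination of these two quantities. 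The inequality is strict when $0<\alpha<1$. When $\alpha=1$ the interval collapses to $\{0\}$ and $c_\alpha=0$. The explicit formula $c_\alpha=1-2z_\alpha\varphi(z_\alpha)/(1-\alpha)$ is an alternative check.

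With $1-c_\alpha>0$, the function $i\mapsto\Tr(\bfLambda_{i,\alpha})$ is a decreasing affine function of $\lambda_i$. Since $\lambda_1\ge\cdots\ge\lambda_d$, the trace is maximized at $i=d$ and minimized at $i=1$. I do not expect a real obstacle here. The only care needed is to justify that the truncation leaves the other coordinates' covariance unchanged, which is exactly where independence is used. This is also the step that breaks for general elliptical laws, which is why Lemma \ref{Cheb} is needed later.
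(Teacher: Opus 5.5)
Your proposal is correct and follows the paper's proof essentially step for step: independence of the Gaussian principal components makes $\bfLambda_{i,\alpha}$ diagonal with only the $(i,i)$ entry changed, the scaling $Y_i=\sqrt{\lambda_i}\,Z$ turns that entry into $c_\alpha\lambda_i$, and $c_\alpha<1$ makes the trace a decreasing affine function of $\lambda_i$. Your convex-combination argument for $c_\alpha<1$ and your separate handling of $\alpha=0$ are more careful than the paper, whose claim $c\in(0,1)$ fails there since $c=0$. One small caveat: at $\alpha=1$ the conditioning event $\{Y_i=0\}$ has probability zero, so $c_\alpha=0$ is a convention or limit rather than a genuine conditional variance.
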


\begin{proof}[Proof of Theorem \ref{Normality}]
Let $Z \sim \calN(0,1)$ be a standard normal random variable. Then $Y_j = \sqrt{\lambda_j}Z$, for $j = \{1, \ldots, d\}$, and 
\begin{align}\lb{NormalBound}
	\Var\giventhat[\big]{Y_i}{Y_i \in \sfT_{Y_i}^\alpha} = \lambda_i \Var\giventhat[\big]{Z}{Z \in \sfT_Z^\alpha} < \lambda_i,
\end{align}
where the inequality in \eqref{NormalBound} is obtained because $\Var\giventhat{Z}{Z \in \sfT_Z^\alpha} < \Var(Z) = 1$. Therefore, calling $\Delta_i$ the total variance removed by peeling along $v_i$,
\begin{align}\lb{Indep}
\begin{aligned}
	\Delta_i &\defeq \mathrm{Tr}(\Cov(\bfY)) - \mathrm{Tr} \left(\Cov\giventhat[\big]{\bfY}{Y_i \in \sfT_{Y_i}^\alpha}\right) \\
		&= \mathrm{Tr}(\bfLambda) - \mathrm{Tr}(\bfLambda_{i,\alpha}) \\
		&= \sum_{j=1}^d \lambda_j - \left(\sum_{j \ne i} \lambda_j + \lambda_i\Var\giventhat[\big]{Z}{Z \in \sfT_Z^\alpha} \right)\\
		&= c\lambda_i,
\end{aligned}
\end{align}
where the second equality of \eqref{Indep} used \eqref{NormalBound} and independence of the principal components under normality, and $c \in (0,1)$ is an absolute constant. Therefore, $\Delta_i$ is minimized when $i=d$ and maximized when $i=1$, proving the result.
\end{proof}

A generalization of Theorem \ref{Normality} that assumes only uncorrelatedness rather than independence does not allow the application of \eqref{Indep}. A different route must be explored.

\begin{lemma}\lb{Cheb}
	For $f,g$ two real measurable functions on $\calX$, where $f$ is increasing and $g$ is decreasing, $\bfE [f(X)g(X)] \le \bfE f(X) \bfE g(X)$.
\end{lemma}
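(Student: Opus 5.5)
This is Chebyshev's association (sum) inequality, and I would prove it by the standard symmetrization trick. Throughout I interpret $X$ as a real-valued random variable on $\calX \subset \bbR$ (so that ``increasing'' and ``decreasing'' make sense). I assume $f(X)$, $g(X)$ and $f(X)g(X)$ are integrable, for instance $\bfE f(X)^2<\infty$ and $\bfE g(X)^2<\infty$, since otherwise the expectations need not be defined.

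Let $X'$ be an independent copy of $X$, with the same law and independent of $X$, on a product probability space. The key pointwise observation concerns
\[
\bigl(f(X)-f(X')\bigr)\bigl(g(X)-g(X')\bigr).
\]
This product is always $\le 0$. If $X\ge X'$, then $f(X)-f(X')\ge 0$ because $f$ is increasing, and $g(X)-g(X')\le 0$ because $g$ is decreasing. If $X< X'$, both signs flip, and the product is again nonpositive. Monotonicity in the weak sense is enough here, so no strictness is needed.

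Next I take expectations and expand:
\[
0 \ge \bfE\bigl[(f(X)-f(X'))(g(X)-g(X'))\bigr] = \bfE[f(X)g(X)] + \bfE[f(X')g(X')] - \bfE[f(X)g(X')] - \bfE[f(X')g(X)].
\]
Because $X'$ has the same law as $X$, the first two terms are both equal to $\bfE[f(X)g(X)]$. By independence, each cross term factorizes as $\bfE f(X)\,\bfE g(X)$. The right-hand side is therefore $2\bigl(\bfE[f(X)g(X)]-\bfE f(X)\,\bfE g(X)\bigr)$, and the claimed inequality follows after dividing by $2$.

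The only delicate point is integrability: expanding the expectation of the product requires each of the four terms to be finite. I would handle this by the moment assumption above, using Cauchy--Schwarz to control $\bfE|f(X)g(X')|$ and $\bfE|f(X)g(X)|$. Alternatively, I would first prove the inequality for bounded truncations of $f$ and $g$, which remain monotone, and then pass to the limit by monotone or dominated convergence. Apart from this, the argument is routine.
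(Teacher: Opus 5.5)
Your proof is correct and is essentially the paper's argument. The paper also takes an independent copy $Y$ of $X$, notes that $\bfE\{[f(X)-f(Y)][g(X)-g(Y)]\}\le 0$, and expands this to $2\bfE[f(X)g(X)]-2\bfE f(X)\,\bfE g(X)$. Your extra care about integrability, and your restriction to real-valued $X$ so that the pointwise sign argument rests on a total order, supply details the paper leaves implicit.
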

\begin{proof}
Let $Y$ be an independent copy of $X$. Then,
\begin{align*}
	 0 &\ge \bfE\{f(X) - f(Y)][g(X) - g(Y)]\} \\
	 &= \bfE [f(X)g(X) - f(X)g(Y) - f(Y)g(X) +f(Y)g(Y)] \\
	 &= 2\bfE [f(X) g(X)] -2\bfE f(X) \bfE g(X),
\end{align*}
which proves the result.
\end{proof}

\begin{theorem}\lb{Th1} 
Let $\bfX \in \calX \subset \bbR^d$ be a random vector in the elliptic family with covariance matrix $\bfSigma$ having eigenvalues $\lambda_1 \geq \cdots \geq \lambda_d$ and corresponding orthonormal eigenvectors $v_1, \ldots, v_d$. Let the random vector $\bfY = (Y_1, \ldots, Y_d) = \bfV' \bfX$ be the rotation of $\bfX$ in the direction of the eigenbasis of $\bfSigma$, where $\bfV = (v_1, \ldots, v_d)$. For $\alpha \in [0,1]$, define the peeled box 
\[ \sfT_{Y_i}^\alpha \defeq \left[Q_{Y_i}(\alpha/2), Q_{Y_i}(1-\alpha/2)\right] \]
as the interquantile range between the $\alpha/2$ and $(1-\alpha/2)$ quantiles of $Y_i$. Then, the preserved total variance in the peeled box $\mathrm{Tr} \left(\Cov\giventhat[\big]{\bfY}{Y_i \in \sfT_{Y_i}^\alpha}\right)$ is maximized by peeling $Y_d$ and minimized by peeling $Y_1$.
\end{theorem}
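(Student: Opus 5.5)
Write $\bfY = \bfV'\bfX$. Since $\bfX$ is elliptical, $\bfY - \bfV'\bfmu$ is elliptical with diagonal scale $\bfLambda$. By \eqref{StochRepED} we can write $\bfY - \bfV'\bfmu = \bfLambda^{1/2}\bfO$, where $\bfO = L\bfU$ is spherical with $\Cov(\bfO) = \bfI$. Truncation is invariant under translation, so we take $\bfmu = 0$.

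The event $\{Y_i \in \sfT_{Y_i}^\alpha\}$ equals $\{\abs{O_i} \le q\}$, where $q = Q_{O_1}(1-\alpha/2)$. By sphericity, the marginals $O_j$ are all identically distributed and symmetric, so $q$ does not depend on $i$. Conditioning on this symmetric event preserves the sign symmetry $O_j \mapsto -O_j$ for every $j$, so the conditional covariance of $\bfO$ is diagonal. Hence
\begin{align*}
\Tr\left(\Cov\giventhat*{\bfY}{Y_i \in \sfT_{Y_i}^\alpha}\right) = \lambda_i a + \sum_{j\ne i}\lambda_j b,
\end{align*}
where $a = \bfE\giventhat*{O_i^2}{\abs{O_i}\le q}$ and $b = \bfE\giventhat*{O_j^2}{\abs{O_i}\le q}$ for $j \ne i$. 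By exchangeability of the coordinates of $\bfO$, neither $a$ nor $b$ depends on $i$ or $j$.

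The trace can therefore be rewritten as $b\Tr(\bfLambda) - (b-a)\lambda_i$. It is maximized at $i=d$ and minimized at $i=1$ provided $b \ge a$, with strict ordering when $b > a$ and the eigenvalues are distinct. The whole proof thus reduces to the comparison $a \le b$.

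\textbf{Main obstacle: showing $a \le b$.} This is where independence, which drove the proof of Theorem \ref{Normality}, is unavailable. We use Lemma \ref{Cheb}. Conditioning on $\abs{O_i} = s$ and using the uniform-on-sphere structure, we have $\bfE\giventhat*{O_j^2}{\abs{O_i} = s} = (\bfE[L^2 \mid \abs{O_i} = s] - s^2)/(d-1)$. Summing over coordinates, $\bfE\giventhat*{\norm{\bfO}^2}{\abs{O_i}\le q} = a + (d-1)b$. So $b \ge a$ is equivalent to $a \le \bfE\giventhat*{\norm{\bfO}^2}{\abs{O_i}\le q}/d$.

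We first try writing $O_i^2 = L^2 U_i^2$, where $L$ is independent of $\bfU$. The variable $U_i^2 \sim \mathrm{Beta}(1/2,(d-1)/2)$ has mean $1/d$. The conditioning event $\{L\abs{U_i} \le q\}$ penalizes large $U_i^2$ for each fixed $L$. Conditional on $L$, the weight $\mathbf 1\{U_i^2 \le q^2/L^2\}$ is a decreasing function of $U_i^2$. Applying Lemma \ref{Cheb} to the increasing function $f(u) = u$ and the decreasing weight $g(u) = \mathbf 1\{u \le q^2/L^2\}$ gives $\bfE[U_i^2 \mid L, \text{event}] \le 1/d$. Multiplying by $L^2$ and averaging, we get $\bfE[L^2U_i^2 \mid \text{event}] \le \bfE[L^2 \mid \text{event}]/d$.

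The right-hand side equals $\bfE\giventhat*{\norm{\bfO}^2}{\abs{O_i}\le q}/d$. One caveat: averaging the conditional-on-$L$ bound must be weighted by the conditional law of $L$ given the event, and this weighting is consistent on both sides. This yields $a \le (a+(d-1)b)/d$, i.e. $a \le b$, which completes the argument.
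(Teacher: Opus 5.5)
Your proof is correct and follows essentially the paper's route: you use the same stochastic representation $\bfY = L\bfLambda^{1/2}\bfU$, you reduce the preserved trace to the sign of the coefficient of $\lambda_i$ (your $a$ and $b$ correspond to the paper's $b = I_2/D$ and $a = (I_1-I_2)/[(d-1)D]$ in Corollary~\ref{CondCovM}), and you close with Lemma~\ref{Cheb} applied to $U_i^2$, whose mean is $1/d$, against a decreasing truncation weight. The only difference is cosmetic: you apply the lemma conditionally on the radius with the indicator $\mathbf{1}\{U_i^2 \le q^2/L^2\}$ and then average over $L$, whereas the paper first integrates out the radius to get the decreasing weight $M(q/\omega)$; both yield the same inequality $I_2 \le I_1/d$, and your explicit treatment of the weighting by the conditional law is cleaner than the first line of \eqref{CondCov2}.
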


\begin{proof}
Assume without loss of generality that the mean of $\bfX$ is zero, as the result is location-invariant. By \eqref{StochRepED}, $\bfX$ admits the stochastic representation 
\begin{align}\lb{SRTh}
	\bfX = R \bfA \bfO, 
\end{align}
where $\bfO = (O_1, \dots, O_d)$ is a random vector uniformly distributed on the unit sphere $\bbS^{d-1}$, $R \geq 0$ is a nonnegative random variable representing the radius and independent of $\bfO$, and $\bfA$ is a matrix such that $\bfA\bfA' = \bfSigma$. In the principal coordinate system, 
\begin{align}\lb{PCRot}
	\bfY = \bfV'\bfX = \bfV'(R \bfA \bfO) = R \left(\bfV' \bfA\right) \bfO = R \bfLambda^{1/2} \bfO, 
\end{align}
where the second equality follows from \eqref{SDT} and the definition of $\bfA$. Then, 
\begin{align}\lb{CondCov}
\begin{aligned}
	\Tr\left(\bfLambda_{i,\alpha}\right) &= \bfE\giventhat*{\|\bfY\|^2}{Y_i \in \sfT_{Y_i}^\alpha}\\
		&= \bfE\giventhat*{R^2 \sum_{j=1}^d \lambda_j O_j^2}{\lvert Y_i \rvert \leq q_i} \\
		&= \bfE\giventhat*{R^2 \sum_{j=1}^d \lambda_j O_j^2}{R \lvert O_i \rvert \leq q},
\end{aligned}
\end{align} 
where the first equality is obtained because $\bfY \mid Y_i \in \sfT_{Y_i}^\alpha$ retains the symmetry around 0; the second because of \eqref{SRTh} after defining $q_i \defeq Q_{Y_i}(1 - \alpha/2)$; and the third because $q_i = \sqrt{\lambda_i} \, q$ with $q = Q_{|S|}(1 - \alpha/2)$ fixed across $i$, where $S \stackrel{d}{=} Y_j/\sqrt{\lambda_j}$ for all $j$, by \eqref{PCRot} and symmetry of $\bfO$. By the law of total expectation and independence of $R$ and $\bfO$, \eqref{CondCov} becomes
\begin{align}\lb{CondCov2}
\begin{aligned}
	&\Tr\left(\bfLambda_{i,\alpha}\right) \\
		&= \bigintsss_0^1 \bfE\giventhat*{R^2 \sum_{j=1}^d \lambda_j O_j^2}{R \omega \leq q} f_{\lvert O_i \rvert}(\omega)\d \omega \\
		&= \frac{\int_0^1 \bfE\giventhat*{\sum_{j=1}^d \lambda_j O_j^2}{\lvert O_i \rvert = \omega}  \int_0^{q/\omega} r^2 F_R(\d r) f_{\lvert O_i \rvert}(\omega)\d \omega}{\int_0^1 F_R(q/\omega) f_{\lvert O_i \rvert}(\omega)\d \omega},
\end{aligned}
\end{align} 
where $F_R(\cdot)$ is the distribution of $R$ and $f_{\lvert O_i \rvert}(\cdot)$ is the density of $\lvert O_i \rvert$. Now, 
\begin{align}\lb{ConExp}
\begin{aligned}
	\bfE\giventhat*{\sum_{j=1}^d \lambda_j O_j^2}{\lvert O_i \rvert = \omega} 
	&= \lambda_i\omega^2 + \frac{\operatorname{Tr}(\bfSigma) - \lambda_i}{d-1}(1-\omega^2)\\
	&= \lambda_i \left(\frac{d\omega^2-1}{d-1}\right) + \operatorname{Tr}(\bfSigma)\frac{1-\omega^2}{d-1}
\end{aligned}
\end{align}
where the first equality in \eqref{ConExp} is obtained because $1 = \bfE \| \bfO \|_2^2 = \bfE \sum_{j=1}^d O_j^2 = \sum_{j=1}^d \bfE \|O_j \|_2^2$, and $\bfE\| O_j \|_2^2 = 1/d$ because $O_1, \ldots, O_d$ are identically distributed. Then, inserting \eqref{ConExp} into \eqref{CondCov2},
\begin{align}\lb{CondCov3}
\begin{aligned}
	\Tr\left(\bfLambda_{i,\alpha}\right) 
	&=\lambda_i \frac{d I_2 - I_1}{(d-1)D} + \operatorname{Tr}(\bfSigma)\frac{I_1-I_2}{(d-1)D}	
\end{aligned}
\end{align} 
where 
\begin{align*}
	D &\defeq \int_0^1 F_R(q/\omega) f_{\lvert O_i \rvert}(\omega)\d \omega, \\
	I_2 &\defeq \int_0^1 \omega^2 M(q/\omega) f_{\lvert O_i \rvert}(\omega)\d \omega, \\
	I_1 &\defeq \int_0^1 M(q/\omega) f_{\lvert O_i \rvert}(\omega)\d \omega,\\
	M(q/\omega) &\defeq \int_0^{q/\omega} r^2 F_R(\d r).
\end{align*}

Since $\omega \in (0,1)$, the second term in the RHS of \eqref{CondCov3} is nonnegative. On the other hand, since $\omega^2$ is increasing in $\omega$ and $M(q/\omega)$ is decreasing in $\omega$, Lemma \ref{Cheb} proves that $I_2 \le \bfE \| O_i \|_2^2 I_1 = (1/d) I_1$. Therefore, the coefficient of $\lambda_i$ in \eqref{CondCov3} is nonpositive. That is, the preserved total variance decreases with $\lambda_i$. Thus, $\Tr\left(\bfLambda_{i, \alpha}\right)$ is maximized when $\lambda_i= \lambda_ d$ and minimized when $\lambda_i= \lambda_1$.
\end{proof}


The following corollary is a direct consequence of the first equality in \eqref{ConExp}.

\begin{corollary}\lb{CondCovM}
\begin{align}
\begin{aligned}
	\bfLambda_{i,\alpha} &\defeq\Cov\giventhat[\big]{\bfY}{Y_i \in \sfT_{Y_i}^\alpha} \\
	&= \diag(a\lambda_1, \ldots, a\lambda_{i-1}, b\lambda_i, a\lambda_{i+1}, \ldots, a\lambda_d),
\end{aligned}
\end{align}
where $a= (I_1-I_2)/[(d-1)D]$ and $b= I_2/D$.
\end{corollary}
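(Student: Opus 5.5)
The plan is to read off every entry of $\bfLambda_{i,\alpha}$ from the representation $\bfY = R\bfLambda^{1/2}\bfO$ in \eqref{PCRot}, using the same conditioning device as in the proof of Theorem \ref{Th1}. As there, I assume $\bfmu=0$. By \eqref{CondCov}, the conditioning event $\{Y_i\in\sfT_{Y_i}^\alpha\}$ equals $E_i\defeq\{R\lvert O_i\rvert\le q\}$. This event depends only on $R$ and $\lvert O_i\rvert$. I would first settle the mean vector and the off-diagonal entries, and then treat the two kinds of diagonal entries separately.

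\textbf{Step 1 (mean and off-diagonal entries).} For each $j$, the sign flip $O_j\mapsto -O_j$ is orthogonal, so it preserves the uniform law of $\bfO$ on $\bbS^{d-1}$. It also leaves $R$, $\lvert O_i\rvert$, and hence the event $E_i$ unchanged. Therefore $\bfE\giventhat*{Y_j}{E_i}=0$ for every $j$. For $j\neq l$, flipping only the sign of one of them shows that $\bfE\giventhat*{Y_jY_l}{E_i} = \sqrt{\lambda_j\lambda_l}\,\bfE\giventhat*{R^2O_jO_l}{E_i}=0$. This holds whether or not $i\in\{j,l\}$: when $j=i$ we flip $O_l$ instead, and $l\neq i$. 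Consequently $\bfLambda_{i,\alpha}$ is diagonal and equals the matrix of conditional second moments.

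\textbf{Step 2 (the entry $j=i$).} Condition on $\lvert O_i\rvert=\omega$ and use the independence of $R$ and $\bfO$, exactly as in \eqref{CondCov2}. This gives
\begin{align*}
\bfE\giventhat*{R^2\lambda_iO_i^2}{E_i} = \lambda_i\,\frac{\int_0^1\omega^2 M(q/\omega) f_{\lvert O_i\rvert}(\omega)\d\omega}{\int_0^1 F_R(q/\omega)f_{\lvert O_i\rvert}(\omega)\d\omega} = \lambda_i\frac{I_2}{D},
\end{align*}
so $b=I_2/D$.

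\textbf{Step 3 (the entries $j\neq i$).} Here I need $\bfE\giventhat*{O_j^2}{\lvert O_i\rvert=\omega}=(1-\omega^2)/(d-1)$. I would justify this by noting that, given $O_i$, the remaining coordinates $(O_l)_{l\neq i}$ are uniform on the sphere of radius $\sqrt{1-\omega^2}$ in $\bbR^{d-1}$. This follows from invariance of the law of $\bfO$ under orthogonal maps fixing $e_i$. Hence these coordinates are exchangeable, and their squares sum to $1-\omega^2$. This conditional exchangeability is exactly the content behind the first equality in \eqref{ConExp}, and it is more precise than the unconditional identity $\bfE O_j^2=1/d$ quoted there. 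Plugging this into the same total-expectation formula yields
\begin{align*}
\bfE\giventhat*{R^2\lambda_jO_j^2}{E_i} = \lambda_j\frac{I_1-I_2}{(d-1)D},
\end{align*}
so $a=(I_1-I_2)/[(d-1)D]$. The constants $a$ and $b$ do not depend on $j$, and they do not depend on $i$ either, because $q$ is common to all $i$ and $\lvert O_i\rvert$ has the same law for every $i$.

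\textbf{Main obstacle.} The step needing the most care is Step 3: making the conditional distribution of $(O_l)_{l\neq i}$ given $\lvert O_i\rvert=\omega$ rigorous, via regular conditional distributions and orthogonal invariance, rather than appealing to unconditional moments. A secondary check is that $E_i$ has positive probability, which gives $D>0$. This holds since $1-\alpha>0$.
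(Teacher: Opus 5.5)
Your proposal is correct and follows essentially the paper's route. The paper gives the corollary as a direct consequence of the first equality in \eqref{ConExp}, read termwise and combined with the total-expectation formula \eqref{CondCov2}, which is exactly your Steps 2--3. Your sign-flip argument for the vanishing conditional mean and off-diagonal entries, and your conditional-exchangeability justification of $\bfE\giventhat*{O_j^2}{\lvert O_i\rvert=\omega}=(1-\omega^2)/(d-1)$, fill in details the paper leaves implicit; its appeal to $\bfE O_j^2=1/d$ alone does not justify that conditional identity.
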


Corollary \ref{CondCovM} also reveals the following important extension to Theorem \ref{Th1}:

\begin{theorem}\lb{GenTh1}
	Under the conditions of Theorem \ref{Th1}, let $\calP$ be the collection of all rank-$d$ orthogonal rotations. For $\bfP \in \calP$, let $\bfP\bfX \defeq \tilde\bfY = (\tilde Y_1, \ldots, \tilde Y_d)$ be the scalar projection of $\bfX$ in the direction of $\bfP$. Then 
	\begin{align}
	\begin{aligned}
		\max_{\bfP \in \calP} \Tr \left(\Cov\giventhat*{\tilde\bfY}{\tilde Y_i \in \sfT_{\tilde Y_i}^\alpha}\right) &= \Tr \left(\Cov\giventhat*{\bfY}{Y_d \in \sfT_{Y_d}^\alpha}\right) \\
		\min_{\bfP \in \calP} \Tr \left(\Cov\giventhat*{\tilde\bfY}{\tilde Y_i \in \sfT_{\tilde Y_i}^\alpha}\right) &= \Tr \left(\Cov\giventhat*{\bfY}{Y_1 \in \sfT_{Y_1}^\alpha}\right),
	\end{aligned}
	\end{align}
	where $\bfY = \bfV' \bfX$ is the scalar projection in the direction of the principal components. 
\end{theorem}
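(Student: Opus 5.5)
The plan is to reduce an arbitrary rotation $\bfP$ and coordinate $i$ to the one-dimensional computation already carried out in the proof of Theorem \ref{Th1}. In that computation the eigenvalue $\lambda_i$ gets replaced by an ``effective eigenvalue'' attached to the peeling direction, and a Rayleigh-quotient argument then pins that quantity between $\lambda_d$ and $\lambda_1$.

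\emph{Reduction to a direction.} Assume $\bfmu=\mathbf 0$, as in Theorem \ref{Th1}. Put $u \defeq \bfP' e_i$, a unit vector, so that $\tilde Y_i = u'\bfX$. The trace of a covariance matrix is invariant under orthogonal maps, so
\[
\Tr\Cov\giventhat*{\tilde\bfY}{\tilde Y_i\in\sfT^\alpha_{\tilde Y_i}}=\Tr\Cov\giventhat*{\bfX}{u'\bfX\in\sfT^\alpha_{u'\bfX}} .
\]
The event is invariant under $\bfX\mapsto-\bfX$, because the interquantile of the symmetric variable $u'\bfX$ is symmetric. Hence the conditional mean is zero and the trace equals $\bfE\giventhat*{\norm{\bfX}^2}{\abs{u'\bfX}\le q_u}$.

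\emph{Rotating the sphere.} Use the representation \eqref{SRTh}, $\bfX=R\bfA\bfO$, and set $s\defeq\norm{\bfA'u}=\sqrt{u'\bfSigma u}$. Choose an orthogonal $\bfW$ with $\bfW\bfA'u=s e_1$, and let $\bfO^\ast\defeq\bfW\bfO$, which is again uniform on $\bbS^{d-1}$ and independent of $R$. Then $u'\bfX=sR\,O^\ast_1$. By spherical symmetry $u'\bfX/s\stackrel{d}{=}S$, where $S$ is the same standardized variable as in \eqref{CondCov}. Therefore $q_u=s\,q$ with the same $q$ for every direction, and the conditioning event becomes $\{R\abs{O^\ast_1}\le q\}$, exactly as in \eqref{CondCov}. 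Moreover $\norm{\bfX}^2=R^2\,\bfO^{\ast\prime}\bfB\bfO^\ast$ with $\bfB\defeq\bfW\bfA'\bfA\bfW'$. This matrix satisfies $\Tr\bfB=\Tr\bfSigma$, and
\[
B_{11}=\frac{u'\bfA\bfA'\bfA\bfA'u}{s^2}=\frac{u'\bfSigma^2u}{u'\bfSigma u}\eqqcolon\lambda_u .
\]

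\emph{Conditional expectation given $\abs{O^\ast_1}=\omega$.} Conditionally on $\abs{O^\ast_1}=\omega$, the vector $(O^\ast_2,\dots,O^\ast_d)$ is uniform on a sphere of radius $\sqrt{1-\omega^2}$ and sign-symmetric. All off-diagonal terms of $\bfO^{\ast\prime}\bfB\bfO^\ast$ therefore vanish in expectation, and we obtain the analogue of the first line of \eqref{ConExp}:
\[
\bfE\giventhat*{\bfO^{\ast\prime}\bfB\bfO^\ast}{\abs{O^\ast_1}=\omega}=\lambda_u\omega^2+\frac{\Tr(\bfSigma)-\lambda_u}{d-1}(1-\omega^2).
\]
This is the step that needs the most care: one must check that, after the rotation, neither the cross terms nor the law of $R$ depend on $u$. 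Granting it, the rest of the proof of Theorem \ref{Th1} goes through verbatim. The quantities $D,I_1,I_2$ are unchanged since they depend only on $q$, $F_R$ and the law of $\abs{O_1}$. Hence
\[
\Tr\Cov\giventhat*{\tilde\bfY}{\tilde Y_i\in\sfT^\alpha_{\tilde Y_i}}=\lambda_u\frac{dI_2-I_1}{(d-1)D}+\Tr(\bfSigma)\frac{I_1-I_2}{(d-1)D},
\]
and by Lemma \ref{Cheb} the coefficient of $\lambda_u$ is nonpositive. The preserved trace is thus a nonincreasing affine function of $\lambda_u$, with coefficients independent of $\bfP$ and $i$.

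\emph{Bounding $\lambda_u$.} With $w\defeq\bfSigma^{1/2}u$ we have $\lambda_u=w'\bfSigma w/w'w$, a Rayleigh quotient, so $\lambda_d\le\lambda_u\le\lambda_1$. Equality holds at $u=v_d$ and $u=v_1$ respectively, and for those choices $\lambda_u=\lambda_d$ or $\lambda_1$, which reproduces Theorem \ref{Th1}. Consequently the maximum over $\calP$ is attained by peeling $Y_d$ and the minimum by peeling $Y_1$, which is the claim.
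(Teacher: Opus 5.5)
Your proof is correct, and it takes a genuinely different route from the paper's, one that is in fact more accurate. The paper asserts, citing \eqref{ConExp}, that for every rotation the conditional covariance has the diagonal form \eqref{ConCov1} with $\tilde\lambda_j=\Var(\tilde Y_j)$. The preserved trace would then be $a\Tr(\bfSigma)+(b-a)\,u'\bfSigma u$ with $u=\bfP'e_i$, and the paper finishes with the Ky Fan--type bound \eqref{OptPC} on the diagonal of $\tilde\bfSigma\defeq\bfP\bfSigma\bfP'$.

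That diagonal form is valid only when $\bfP$ diagonalizes $\bfSigma$. Your alignment argument, applied to each pair of coordinates, gives in general $\Cov\giventhat*{\tilde\bfY}{\tilde Y_i\in\sfT^\alpha_{\tilde Y_i}}=a\tilde\bfSigma+(b-a)\,\tilde\bfSigma e_ie_i'\tilde\bfSigma/(\tilde\bfSigma)_{ii}$. Its trace is your $a\Tr(\bfSigma)+(b-a)\lambda_u$ with $\lambda_u=u'\bfSigma^2u/u'\bfSigma u$. By Cauchy--Schwarz $\lambda_u\ge u'\bfSigma u$, with equality iff $u$ is an eigenvector, so even the paper's trace identity fails off the eigenbasis whenever $b\ne a$. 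In the Gaussian case, where $a=1$, it would assert $\Var(\tilde Y_j\mid\cdot)=\tilde\lambda_j$ for $j\ne i$ despite the correlation with $\tilde Y_i$. The paper's conclusion survives only because $u'\bfSigma u$ and $\lambda_u$ both range over $[\lambda_d,\lambda_1]$ with extremes at $v_d$ and $v_1$. Your Rayleigh-quotient step is the correct justification, and it gives both bounds at once.

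The check you flag is immediate. $\mathbf W$ is deterministic, so $\bfO^\ast$ stays uniform and independent of $R$, and $\bfE(O^\ast_j\mid O^\ast_1)=0$ for $j\ge2$.

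What the paper's diagonal formula was meant to buy is a one-line passage to $k$ peeled coordinates in Theorem \ref{TheTh} via \eqref{OptPC}. Your argument does not give that for free. For $k\ge2$ the vectors $\bfA'\bfP'e_{i_j}$ are orthogonal only when the corresponding off-diagonal entries of $\tilde\bfSigma$ vanish, so they cannot in general be rotated onto $e_1,\ldots,e_k$ simultaneously. Consequently the computation behind \eqref{ConExpII} does not transfer directly.
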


\begin{proof}
	Define $\tilde\bfLambda_{i,\alpha} \defeq \Cov\giventhat*{\tilde\bfY}{\tilde Y_i \in \sfT_{\tilde Y_i}^\alpha}$. Again, by \eqref{ConExp}, 
	\begin{align}\lb{ConCov1}
		\tilde\bfLambda_{i,\alpha} = \diag\left(a\tilde \lambda_1, \ldots, a\tilde\lambda_{i-1}, b\tilde\lambda_i, a\tilde\lambda_{i+1}, \ldots, a\tilde\lambda_d\right),
	\end{align}
	where $a,b$ are defined as in Corollary \ref{CondCovM}. On the other hand, since for all $k \in \{1, \ldots, d\}$,
\begin{align}\lb{OptPC}
	\tilde\lambda_1 + \cdots + \tilde\lambda_k \le \lambda_1 + \cdots + \lambda_k,  
\end{align}	
(see, e.g., Exercise 3.4 of \cite{Vershynin2026}), the result follows. 
\end{proof}

Corollary \ref{CondCovM} also allows us to draw some conclusions about other variance measures and norms of $\tilde\bfLambda_{i,\alpha}$. First, the preserved Frobenius norm behaves like the total variance. Second, the preserved generalized variance remains constant among peeling directions. Third, the preserved operator norm exhibits a behavior between the Frobenius norm and the generalized variance: it is minimized when $i=1$ and constant when $i\ne1$:

\begin{theorem}\lb{Gen}
	Under the conditions of Theorem \ref{Th1}, 
	\begin{enumerate}
		\item The Frobenius norm $\norm{\bfLambda_{i,\alpha}}_F$ is maximized when $i=d$ and minimized when $i=1$.
		\item The generalized variance $\abs{\bfLambda_{i,\alpha}}$ is constant for all $i\in\{1,\ldots,d\}$.
		\item The operator norm $\norm{\bfLambda_{i,\alpha}}_{\mathrm{op}}$ is minimized by peeling the first principal component, and it is constant in all other directions.
	\end{enumerate}
\end{theorem}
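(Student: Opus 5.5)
All three items follow from the explicit diagonal form of $\bfLambda_{i,\alpha}$ given in Corollary \ref{CondCovM}, namely $\bfLambda_{i,\alpha}=\diag(a\lambda_1,\ldots,b\lambda_i,\ldots,a\lambda_d)$. The constants $a=(I_1-I_2)/[(d-1)D]$ and $b=I_2/D$ do not depend on $i$. The reason is that $q_i=\sqrt{\lambda_i}\,q$ with $q$ fixed across $i$, and $\abs{O_i}$ has the same law for every $i$, so $D$, $I_1$, $I_2$ are common to all peeling directions. The key scalar fact to extract first is the ordering $0\le b\le a$. From Lemma \ref{Cheb}, exactly as in the proof of Theorem \ref{Th1}, we have $I_2\le I_1/d$. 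This gives $(d-1)I_2\le I_1-I_2$, i.e. $b\le a$. Nonnegativity of $b$ is immediate, and $a\ge 0$ follows from $\omega\le 1$.

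For the Frobenius norm, compute
\[
\norm{\bfLambda_{i,\alpha}}_F^2=a^2\sum_{j=1}^d\lambda_j^2-(a^2-b^2)\lambda_i^2 .
\]
Since $a^2-b^2\ge 0$, this quantity is nonincreasing in $\lambda_i$. It is therefore largest when $\lambda_i=\lambda_d$ ($i=d$) and smallest when $\lambda_i=\lambda_1$ ($i=1$). This mirrors \eqref{CondCov3} for the trace.

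For the generalized variance, the determinant of the diagonal matrix is
\[
\abs{\bfLambda_{i,\alpha}}=a^{d-1}b\prod_{j=1}^d\lambda_j=a^{d-1}b\,\abs{\bfSigma},
\]
which is manifestly independent of $i$.

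For the operator norm, $\norm{\bfLambda_{i,\alpha}}_{\rm op}$ is the largest diagonal entry. If $i\neq 1$, the entry $a\lambda_1$ is present and dominates every other entry, because $a\lambda_1\ge a\lambda_j$ and $a\lambda_1\ge b\lambda_i$. Hence the norm equals $a\lambda_1$, which is constant in $i$. If $i=1$, the norm is $\max(b\lambda_1,a\lambda_2)\le a\lambda_1$, again using $b\le a$. So peeling the first component attains the minimum.

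The main obstacle is justifying that $a$ and $b$ are genuinely $i$-independent and that $b\le a$; everything else is bookkeeping. Both points rest on the symmetry of $\bfO$ and the scaling $q_i=\sqrt{\lambda_i}q$ already used in \eqref{CondCov}, together with Lemma \ref{Cheb}. If the strictness of $b<a$ is wanted, one would note that $M(q/\omega)$ is strictly decreasing on a set of positive measure unless $R$ is degenerate. I would mention this only as a remark, since the theorem is stated with nonstrict optimality.
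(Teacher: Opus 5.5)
Your proof is correct and is essentially the paper's argument: both read the Frobenius norm, determinant, and operator norm directly off the diagonal form in Corollary \ref{CondCovM}. The key input in both is $0\le b\le a$, which the paper phrases as $b-a=(dI_2-I_1)/[(d-1)D]$ being the nonpositive coefficient of $\lambda_i$ in \eqref{CondCov3}, obtained from Lemma \ref{Cheb}. As a minor aside, your closing strictness remark gets the condition wrong: strict inequality in Lemma \ref{Cheb} only needs $M(q/\abs{O_i})$ to be non-constant, which already holds for degenerate $R\equiv r_0$ whenever $0<\alpha<1$ (then $M(q/\omega)=r_0^2$ for $\omega\le q/r_0$ and $0$ otherwise), though this plays no role in the non-strict statement.
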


\begin{proof}
	{\bf Frobenius norm:}
	\begin{align}\lb{Frob}
	\begin{aligned}
		\norm{\bfLambda_{i,\alpha}}_F^2 &= a^2\sum_{j=1}^d \lambda_j^2 +(b^2 -a^2)\lambda_i^2 \\
							&=a^2\norm{\bfLambda}_F^2 +(b+a)(b-a)\lambda_i^2.
	\end{aligned}
	\end{align}
	The first term in the last equality of \eqref{Frob} is constant on $i$. On the other hand, observe that $a,b \ge 0$, whereas 		$b-a$ equals the coefficient of $\lambda_i$ in the RHS of \eqref{CondCov3}, which is nonpositive. Therefore, the second term is nonpositive, which proves the result.
	
	{\bf Generalized variance:}
	\begin{align*}
		\abs{\bfLambda_{i,\alpha}} = a^{d-1}b\prod_{j=1}^d \lambda_j = a^{d-1}b\abs{\bfLambda},
	\end{align*} 
	and the RHS is independent of $i$. That is, given $\alpha$, the generalized variance is constant for all $i$.
	
	{\bf Operator norm:}
	\begin{align}\lb{Op}
	\begin{aligned}
		\norm{\bfLambda_{i,\alpha}}_{\mathrm{op}} &=\max(a\lambda_1, \ldots, a\lambda_{i-1}, b\lambda_i, a\lambda_{i+1}, \ldots, a\lambda_d)
	\end{aligned}
	\end{align}
	For $i \ne 1$, $\norm{\bfLambda_{i,\alpha}}_{\mathrm{op}} = a\lambda_1 = a\norm{\bfLambda}$, showing that the operator norm is constant by peeling in any direction $i \ne 1$. On the other hand, when $i=1$, $\norm{\bfLambda_{i,\alpha}}_{\mathrm{op}}^2 = \max(b\lambda_1, a\lambda_2) \le a\lambda_1 = a \norm{\bfLambda}$, which minimizes the operator norm.
\end{proof}

Using Theorem \ref{Th1} and Proposition \ref{Gen}, we can prove now Theorem \ref{TheTh}:

\begin{proof}[Proof of Theorem \ref{TheTh}]

	{\bf Preserved total variance:}
	
	Define $\bfLambda_{I_k,\alpha} \defeq \Cov\giventhat*{\bfY}{\bigcap_{j=1}^k \left\{ Y_{i_j} \in \sfT_{Y_{i_j}}^\alpha\right\}}$. Analogously to \eqref{CondCov},
	\begin{align}\lb{CondCovII}
	\begin{aligned}
	\Tr\left(\bfLambda_{I_k,\alpha}\right) &= \bfE\giventhat*{\|\bfY\|^2}{\bigcap_{j=1}^k \left\{ Y_{i_j} \in \sfT_{Y_{i_j}}^\alpha\right\}}\\
		&= \bfE\giventhat*{R^2 \sum_{j=1}^d \lambda_j O_j^2}{\bigcap_{j=1}^k \left\{ \lvert Y_{i_j} \rvert \leq q_{i_j} \right\} } \\
		&= \bfE\giventhat*{R^2 \sum_{j=1}^d \lambda_j O_j^2}{\bigcap_{j=1}^k \left\{ R \lvert O_{i_j} \rvert \leq q_{i_j} \right\}},
\end{aligned}
\end{align} 

Analogously to \eqref{CondCov2},
\begin{align}\lb{CondCovII2}
\begin{aligned}
	&\Tr\left(\bfLambda_{I_k,\alpha}\right) \\
		&= \bigintsss_{[0,1]^k} \bfE\giventhat*{R^2 \sum_{j=1}^d \lambda_j O_j^2}{\bigcap_{j=1}^k \left\{ R \omega_j \leq q \right\}} f_{\lvert \bfO_I \rvert}(\bfomega)\d \bfomega \\
		&= \frac{\int_{[0,1]^k} \bfE\giventhat*{\sum_{j=1}^d \lambda_j O_j^2}{\left\{ \lvert \bfO_I \rvert = \bfomega \right\} }  M(m) f_{\lvert \bfO_I \rvert}(\bfomega)\d \bfomega}{\int_{[0,1]^k} F_R(m) f_{\lvert \bfO_I \rvert}(\bfomega)\d \bfomega},
\end{aligned}
\end{align} 
where $\abs{\bfO_I} \defeq \left(\abs{O_{i_1}}, \ldots, \abs{O_{i_k}}\right)$ is the $k$-dimensional marginal random vector of $\lvert \bfO \rvert \defeq \left( \lvert O_1 \rvert, \ldots, \lvert O_d\rvert \right)$, $\bfomega \defeq (\omega_1, \ldots, \omega_k)$, $M(x) = \int_0^x r^2 F_R(\d r)$, and $m = m(\bfomega) \defeq \min(q/\omega_1, \ldots, q/\omega_k)$. Now, analogously to \eqref{ConExp}, with $k< d$,
\begin{align}\lb{ConExpII} 
\begin{aligned}
	&\bfE\giventhat*{\sum_{j=1}^d \lambda_j O_j^2}{\left\{ \lvert \bfO_I \rvert = \bfomega \right\} } \\
	&= \sum_{j=1}^k \lambda_{i_j} \omega_j^2 + \frac{\Tr(\bfSigma) - \sum_{j=1}^k \lambda_{i_j}}{d-k} \left(1- \sum_{j=1}^k\omega_j^2 \right)\\
	&= \sum_{j=1}^k \lambda_{i_j}\frac{(d - k)\omega_j^2 - 1-  \sum_{j=1}^k \omega_j^2}{d-k} + \Tr(\bfSigma)\frac{1- \sum_{j=1}^k \omega_j^2}{d-k}
\end{aligned}
\end{align}

Define
\begin{align}\lb{Ints}
\begin{aligned}
	D &\defeq \int_{[0,1]^k} F_R(m) f_{\lvert \bfO_I \rvert}(\bfomega)\d \bfomega, \\
	I_2^{(j)} &\defeq \int_{[0,1]^k} \omega_j^2 M(m) f_{\lvert \bfO_I \rvert}(\bfomega)\d \bfomega, \\
	I_1 &\defeq \int_{[0,1]^k} M(m) f_{\lvert \bfO_I \rvert}(\bfomega)\d \bfomega.
\end{aligned}
\end{align}

Since $I_2^{(1)} = \cdots = I_2^{(k)}$ by symmetry, we can simply write it as $I_2$. Therefore, inserting \eqref{ConExpII} into \eqref{CondCovII2},
\begin{align}\lb{CondCovII3}
\begin{aligned}
	\Tr\left(\bfLambda_{I_k,\alpha}\right) 
	&=\sum_{j=1}^k \lambda_{i_j} \frac{d I_2 - I_1}{(d-k)D} + \Tr(\bfSigma)\frac{I_1 - k I_2}{(d-k)D}
\end{aligned}
\end{align} 

Observe that the coefficient of $\Tr(\bfSigma)$ in \eqref{ConExpII} is positive by sphericity, making the coefficient of $\Tr(\bfSigma)$ in \eqref{CondCovII3} positive. On the other hand, for $j \in \{1, \ldots, k \}$, $M$ is decreasing in $m(\omega_1, \ldots, \omega_{j-1}, \cdot, \omega_{j+1}, \ldots, \omega_k)$. Therefore, we can apply Lemma \ref{Cheb} to $I_2$, obtaining that the coefficient of $\lambda_{i_j}$ is negative. The result follows for the space of eigenvectors of $\Sigma$.

When $k=d$, the RHS of \eqref{ConExpII} becomes $\lambda_1\omega_1^2 + \cdots + \lambda_d\omega_d^2$, and the result follows directly.

To extend the result to all orthogonal rotations $\calP$, observe that, for $\tilde\bfLambda_{I_k,\alpha} \defeq \Cov\giventhat*{\tilde\bfY}{\bigcap_{j=1}^k \left\{ \tilde Y_{i_j} \in \sfT_{\tilde Y_{i_j}}^\alpha\right\}}$, by the first equality in \eqref{ConExpII},
\begin{align}
	\tilde\bfLambda_{I_k,\alpha} = \diag \left(b\tilde\lambda_{i_1}, \ldots, b\tilde\lambda_{i_k}, a\tilde\lambda_{i_{k+1}}, \ldots, a\tilde\lambda_{i_d} \right),
\end{align}
where $b = I_2/D$, $a = (I_1 - kI_2)/[(d-k)D]$, and $I_1, I_2, D$ as in \eqref{Ints}. Then, \eqref{OptPC} obtains the result for the space of all rank-$d$ orthogonal transformations.

{\bf Frobenius norm:}

\begin{align}\lb{FrobII}
	\begin{aligned}
		\norm{\bfLambda_{I_k,\alpha}}_F^2 &= a^2\sum_{i=1}^d \tilde\lambda_i^2 +(b^2 -a^2)\sum_{j=1}^k \tilde\lambda_{i_j}^2 \\
							&=a^2\norm{\tilde\bfLambda}_F^2 +(b+a)(b-a)\sum_{j=1}^k \tilde\lambda_{i_j}^2.
	\end{aligned}
	\end{align}
	Reasoning as in \eqref{Frob}, the result is obtained by an application of \eqref{OptPC}.
\end{proof}

\begin{proof}
	{\bf Generalized variance:}
	\begin{align*}
		\abs{\bfLambda_{I_k,\alpha}} = a^{d-k}b&k\prod_{j=1}^d \tilde\lambda_j = a^{d-k}b^k\abs{\tilde\bfLambda} = a^{d-k}b^k\abs{\bfLambda},
	\end{align*} 
	where the last equality is obtained because, for any $\bfP \in \calP$, $\abs{\bfP\Sigma\bfP^T} = \abs{\Sigma}$. That is, given $\alpha$, the preserved generalized variance is constant for all rotations.
	
	{\bf Operator norm:}
	\begin{align}\lb{Op}
	\begin{aligned}
		\norm{\tilde\bfLambda_{I_k,\alpha}}_{\mathrm{op}} &=\max\left(b\tilde\lambda_{i_1}, \ldots, b\tilde\lambda_{i_k}, a\tilde\lambda_{i_{k+1}}, \ldots, a\tilde\lambda_{i_d}\right)
	\end{aligned}
	\end{align}
	For $i \notin I_k$, $\norm{\tilde\bfLambda_{I_k,\alpha}}_{\mathrm{op}} = a\tilde\lambda_1 = a\norm{\tilde\bfLambda}_{\mathrm{op}}$, showing that the operator norm is constant by peeling in any direction $i \ne 1$. On the other hand, when $i \in I_k$, $\norm{\tilde \bfLambda_{I_k,\alpha}}_{\mathrm{op}}^2 = \max(b\tilde\lambda_1, a\tilde\lambda_2) \le a\tilde\lambda_1 = a \norm{\tilde\bfLambda}_{\mathrm{op}}$, which minimizes the operator norm. Since $a\tilde\lambda_1 \le a\lambda_1$ by \eqref{OptPC}, the proof is complete.
\end{proof}

\begin{proof}[Proof of Proposition \ref{Th2}]
By \eqref{Ti}, $\Vol \left( \sfT_{I_k, \tilde Y} \right) = \prod_{j=1}^k 2q_{i_j} = (2q)^k \prod_{j=1}^k \tilde \lambda_{i_j}^{k/2}$, which is maximized and minimized when the $k$ principal components of largest and smallest variance, respectively, are chosen.
\end{proof}

\section{Continuation of Fashion MNIST example}\lb{CntEx}

This Appendix extends the Fashion MNIST example from Section \ref{Example} by adding additional categories.

\textbf{Shirts} (Fig.\@~\ref{shirt}): the principal subset captures classic button-down shirts in plain or subtly patterned fabrics, while the pettiest subset reveals ruffled blouses, off-shoulder tops, bold prints, and elaborate decorative styles spanning a far wider range of garment types. \textbf{Dresses} (Fig.\@~\ref{dress}): the principal subset covers a broad representative cross-section of dress lengths and silhouettes; the pettiest subset leans toward more unusual items---mini dresses, asymmetric cuts, and styles that blur the boundary between dress and top. \textbf{Coats} (Fig.\@~\ref{coat}): the principal subset spans casual to formal mid- and full-length coats, while the pettiest subset reveals more distinctive designs---lightweight puffer jackets, camouflage patterns, and technical or quilted styles. \textbf{Sandals} (Fig.\@~\ref{sandal}): the principal subset groups everyday flat or low-wedge sandals, while the pettiest subset is dominated by stiletto heels, strappy evening sandals, and platform wedges---an entirely different footwear character. \textbf{Ankle boots} (Fig.\@~\ref{ankle}): the principal subset concentrates on classic low-heeled Chelsea-style boots in dark tones, while the pettiest subset uncovers high-heeled, platform, metallic, and exotic-patterned designs. \textbf{Bags} (Fig.\@~\ref{bag}): the principal subset selects compact cross-body bags, clutches, and structured shoulder bags, while the pettiest subset spans large tote bags, sports bags, backpacks, and luxury handbags---a much wider range of shapes and purposes. \textbf{Sneakers} (Fig.\@~\ref{sneaker}): the principal subset concentrates on classic low-profile trainers, while the pettiest subset includes high-tops, platform trainers, and chunky-soled styles. \textbf{Trousers} (Fig.\@~\ref{trouser}): the principal subset covers classic full-length trousers and jeans across a range of tones, while the pettiest subset clearly diverges---including shorts, joggers, sweatpants, and patterned leggings.

\begin{figure}[t]
\centering
{\includegraphics[width=4cm,height=3.6cm]{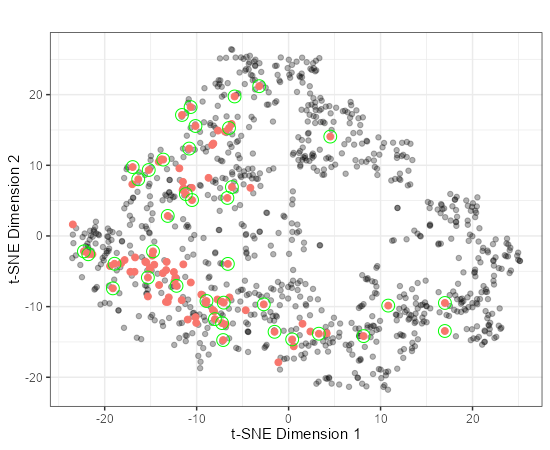}}\quad\raisebox{0.0\height}
{\includegraphics[width=4cm,height=3.6cm]{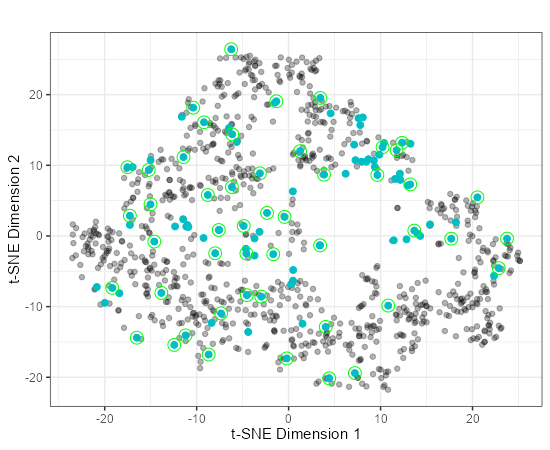}}
\\[1em]
{\includegraphics[width=4cm,height=4.4cm]{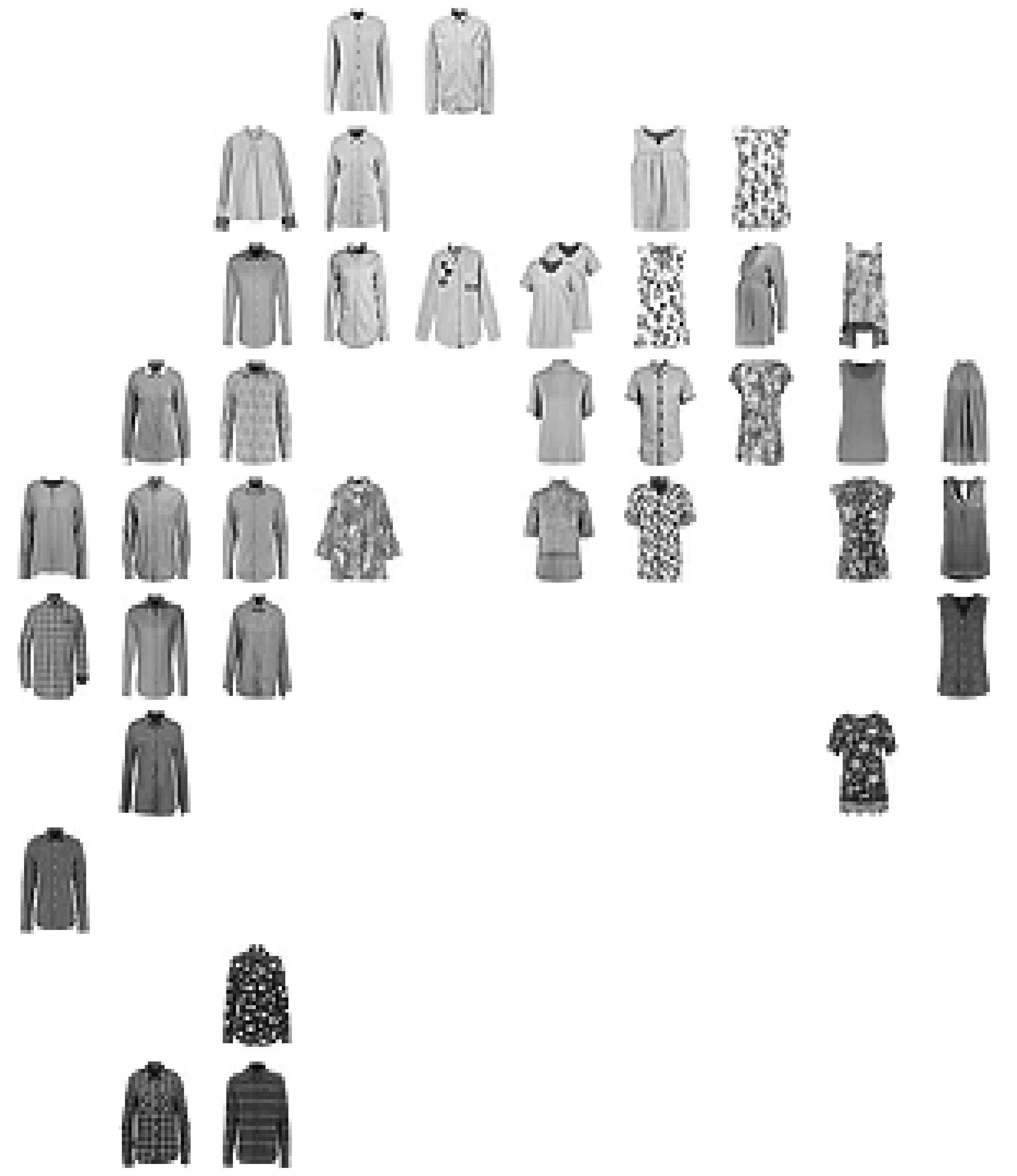}}\quad\raisebox{0.0\height}
{\includegraphics[width=4cm,height=4.4cm]{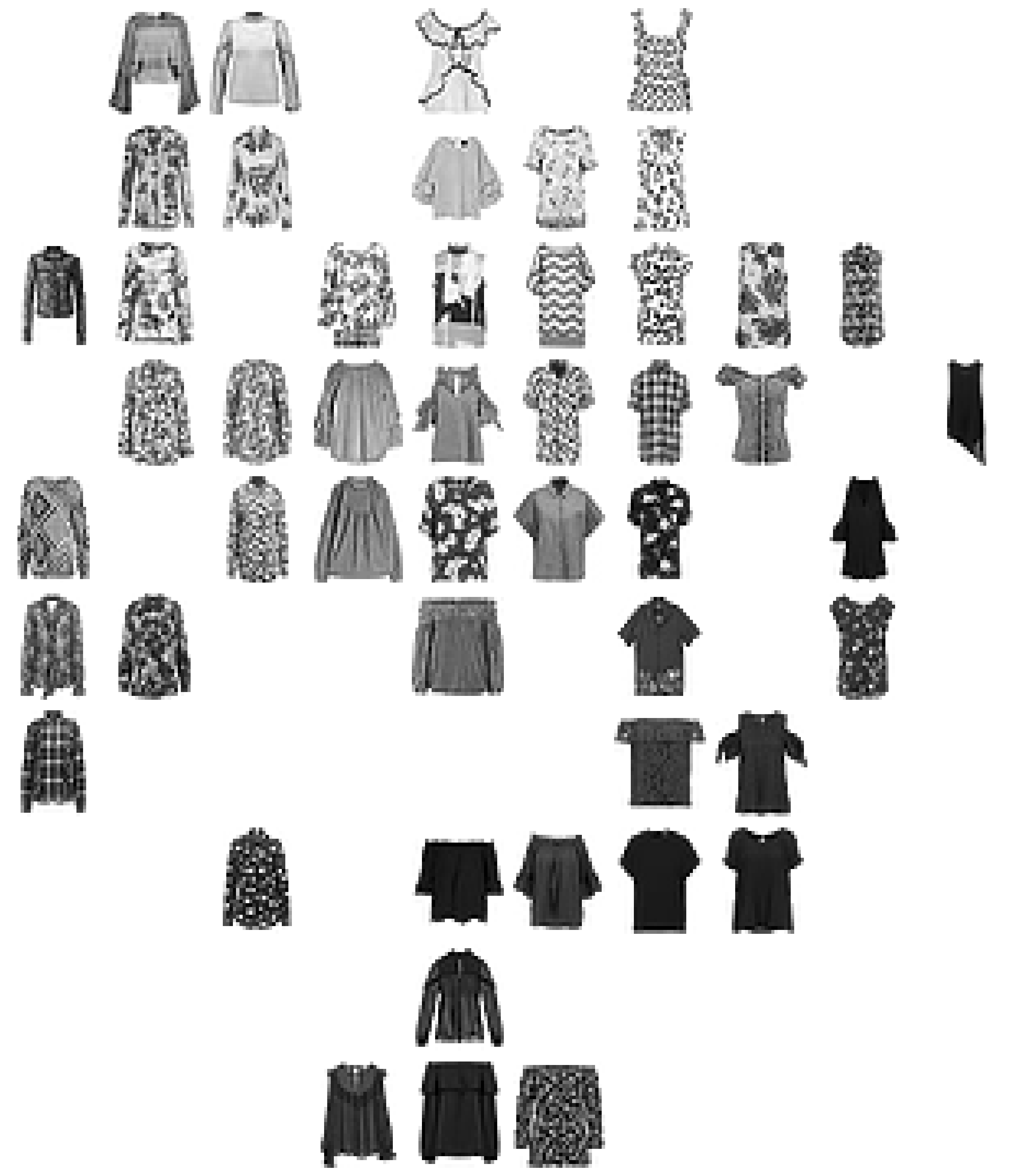}}
  \caption{Shirt example. Left) peeling principal directions. Right) peeling pettiest
  directions} \label{shirt}
\end{figure}

\begin{figure}[t]
\centering
{\includegraphics[width=4cm,height=3.6cm]{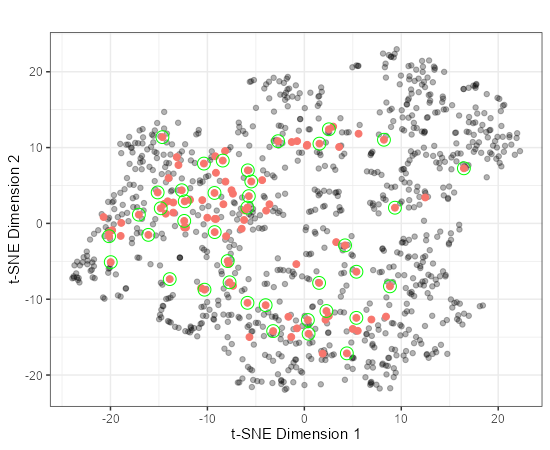}}\quad\raisebox{0.0\height}
{\includegraphics[width=4cm,height=3.6cm]{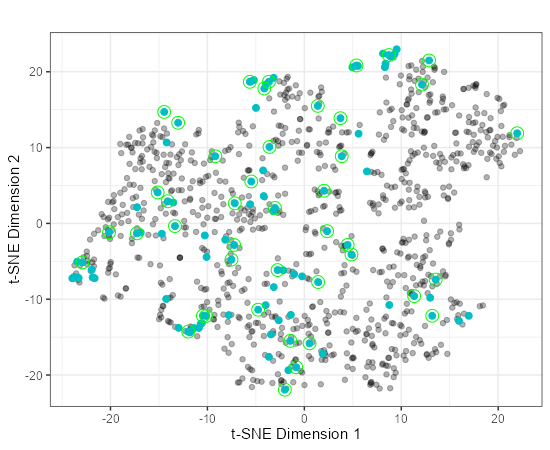}}
\\[1em]
{\includegraphics[width=4cm,height=4.4cm]{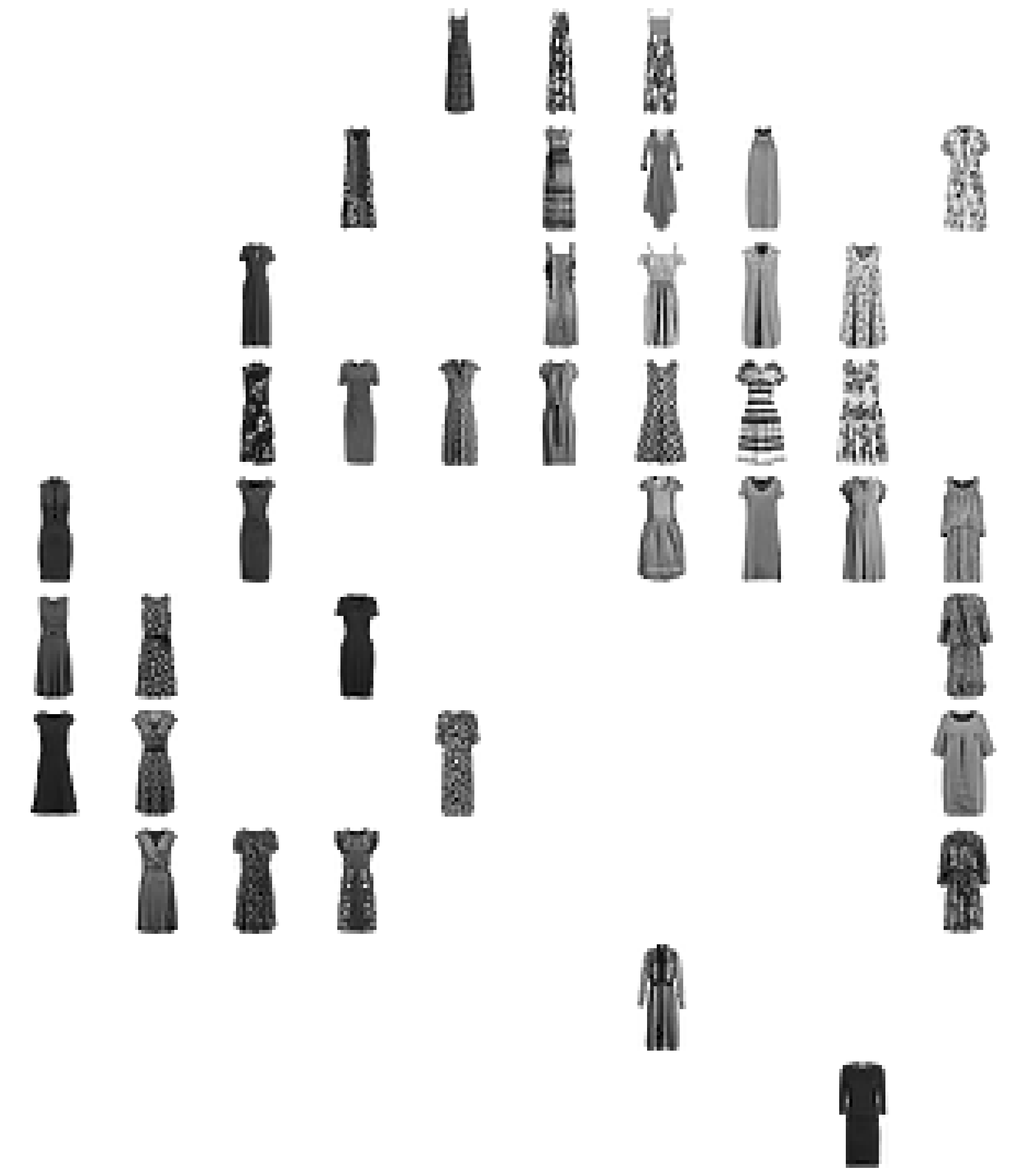}}\quad\raisebox{0.0\height}
{\includegraphics[width=4cm,height=4.4cm]{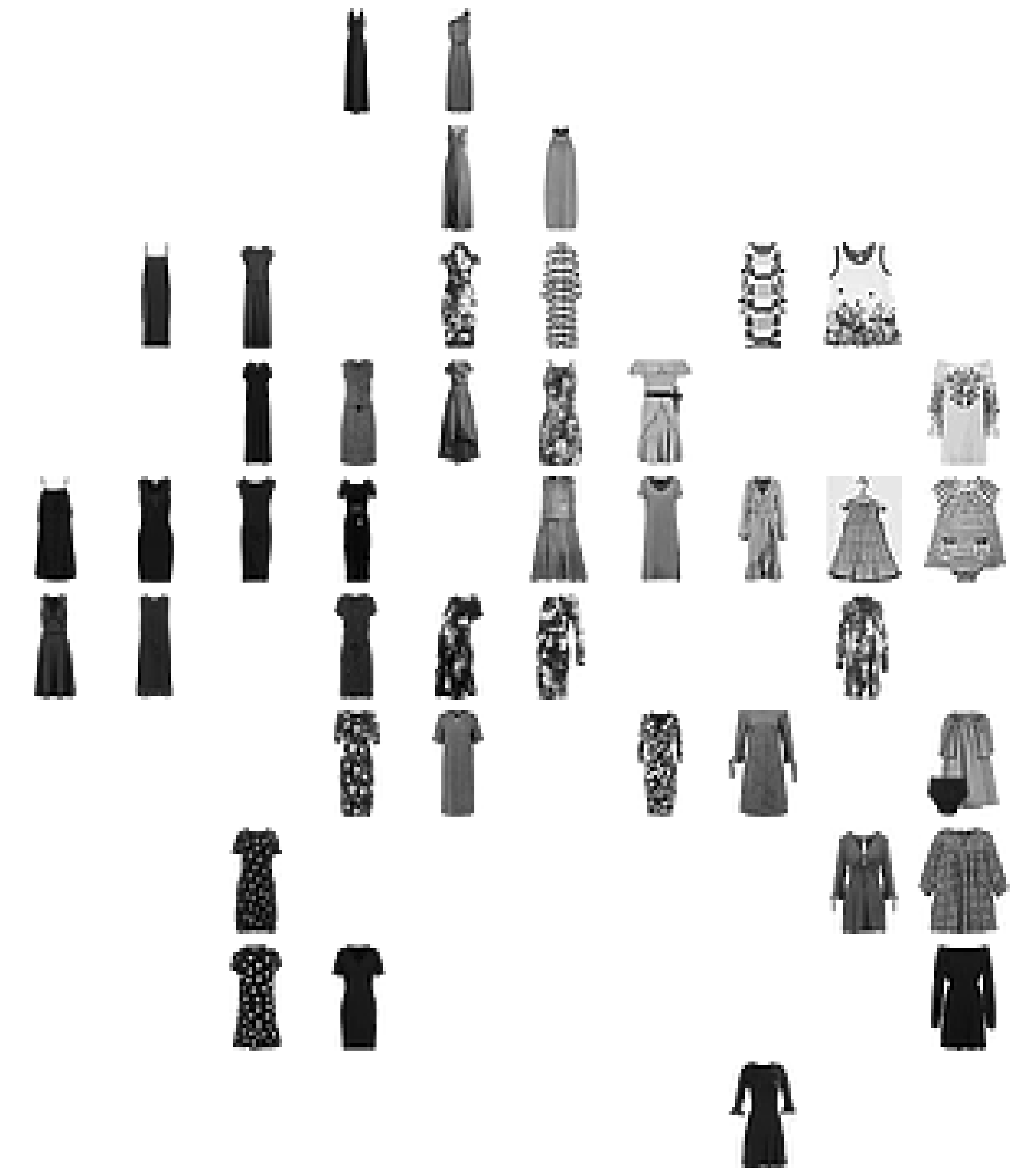}}
  \caption{Dress example. Left) peeling principal directions. Right) peeling pettiest
  directions} \label{dress}
\end{figure}

\begin{figure}[t]
\centering
{\includegraphics[width=4cm,height=3.6cm]{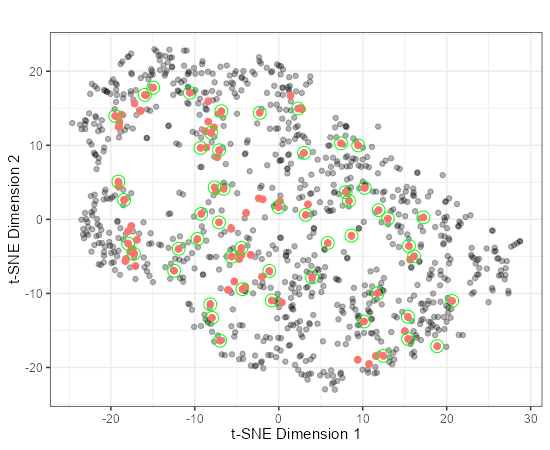}}\quad\raisebox{0.0\height}
{\includegraphics[width=4cm,height=3.6cm]{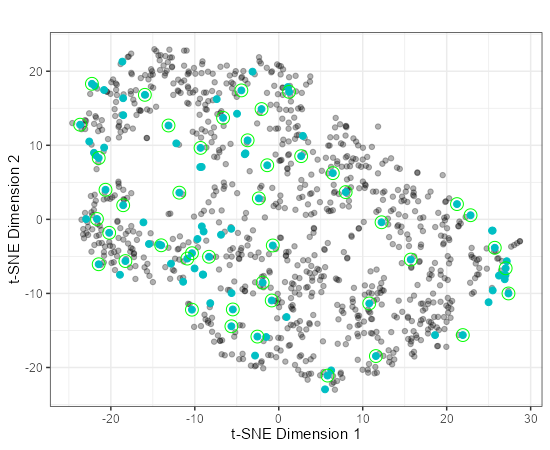}}
\\[1em]
{\includegraphics[width=4cm,height=4.4cm]{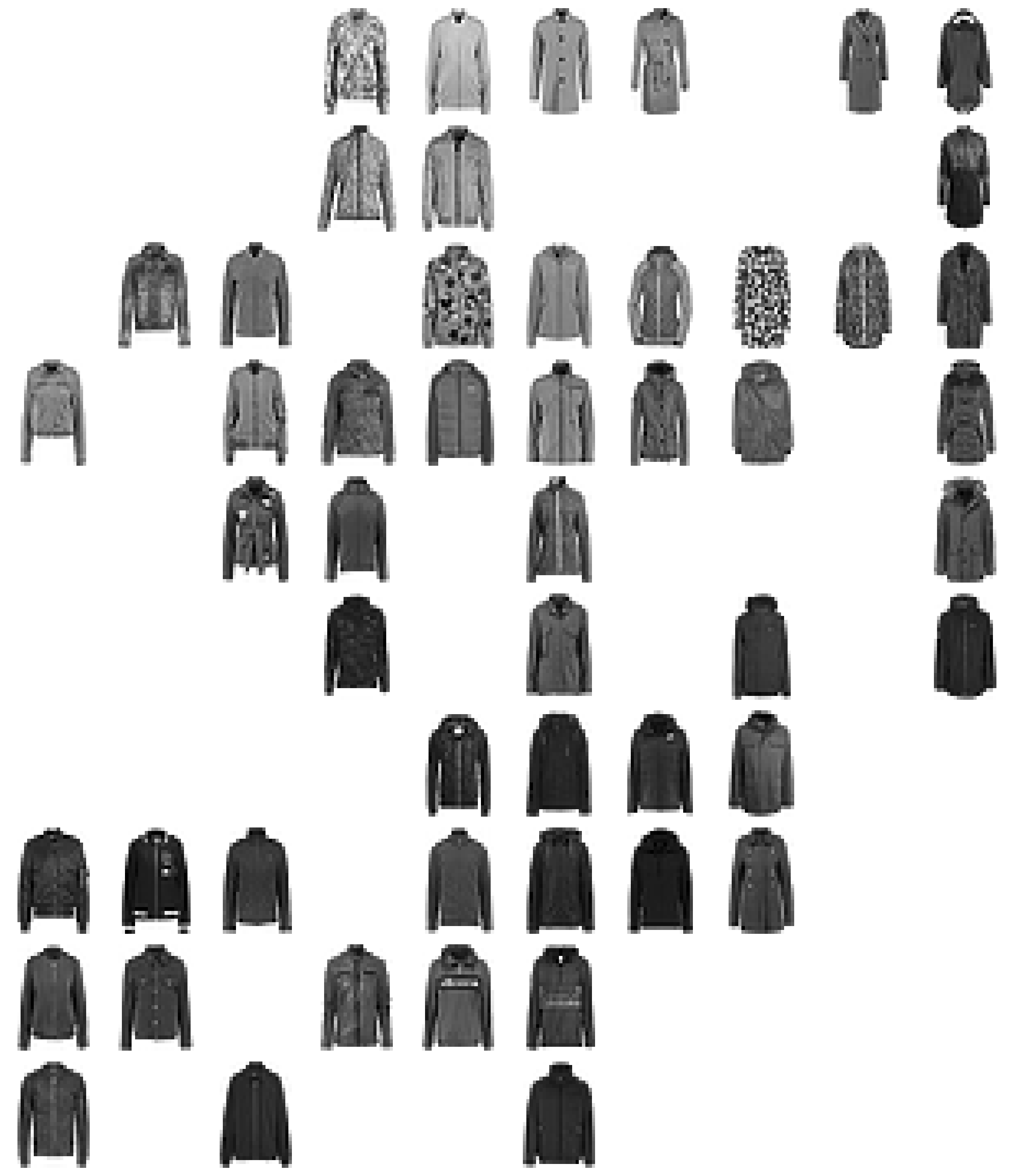}}\quad\raisebox{0.0\height}
{\includegraphics[width=4cm,height=4.4cm]{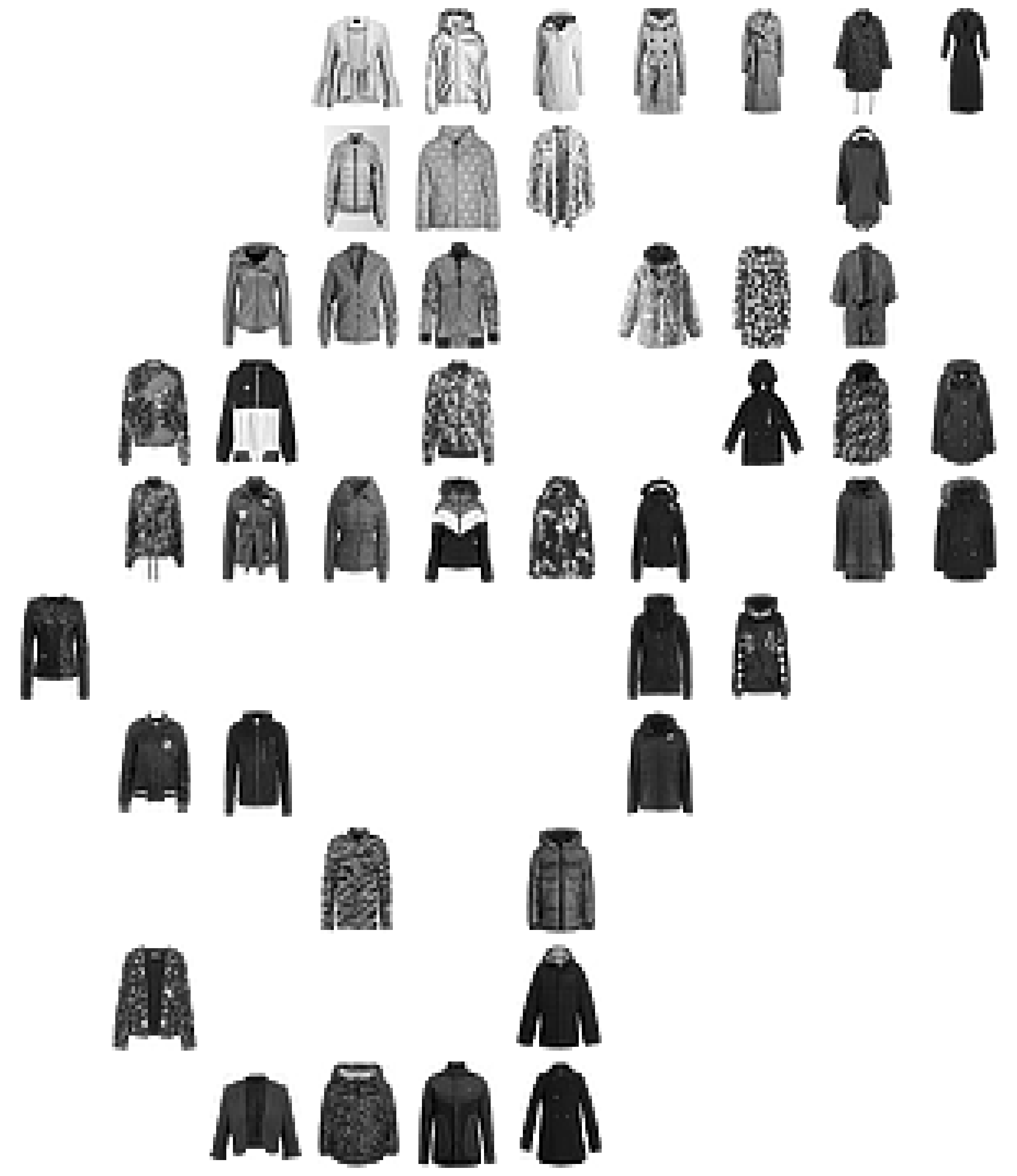}}
  \caption{Coat example. Left) peeling principal directions. Right) peeling pettiest
  directions} \label{coat}
\end{figure}

\begin{figure}[t]
\centering
{\includegraphics[width=4cm,height=3.6cm]{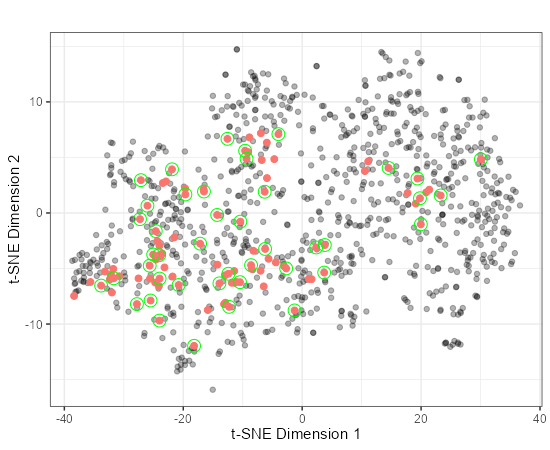}}\quad\raisebox{0.0\height}
{\includegraphics[width=4cm,height=3.6cm]{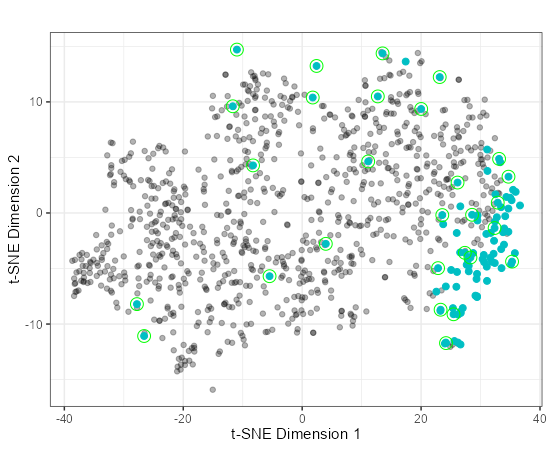}}
\\[1em]
{\includegraphics[width=4cm,height=4.4cm]{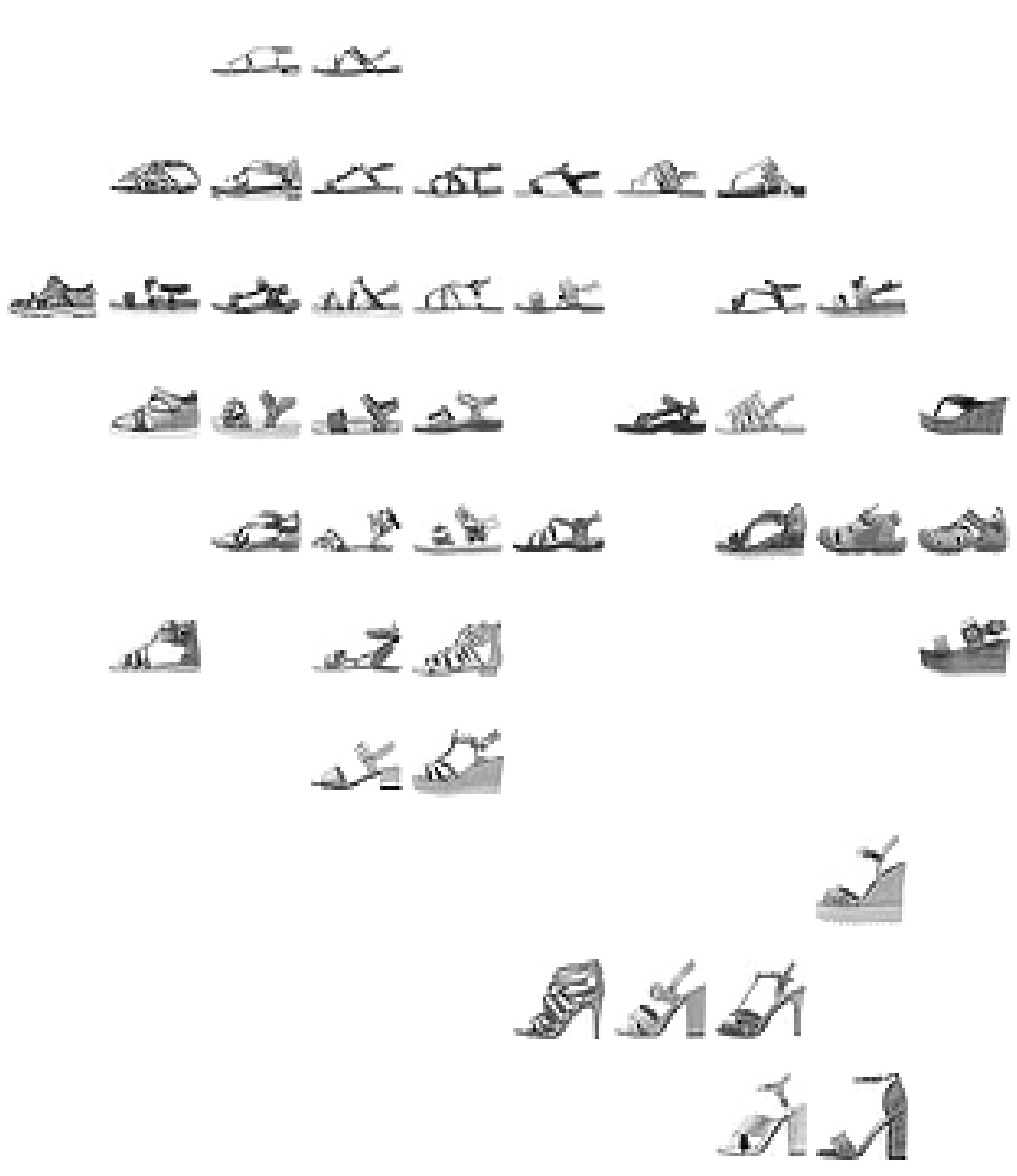}}\quad\raisebox{0.0\height}
{\includegraphics[width=4cm,height=4.4cm]{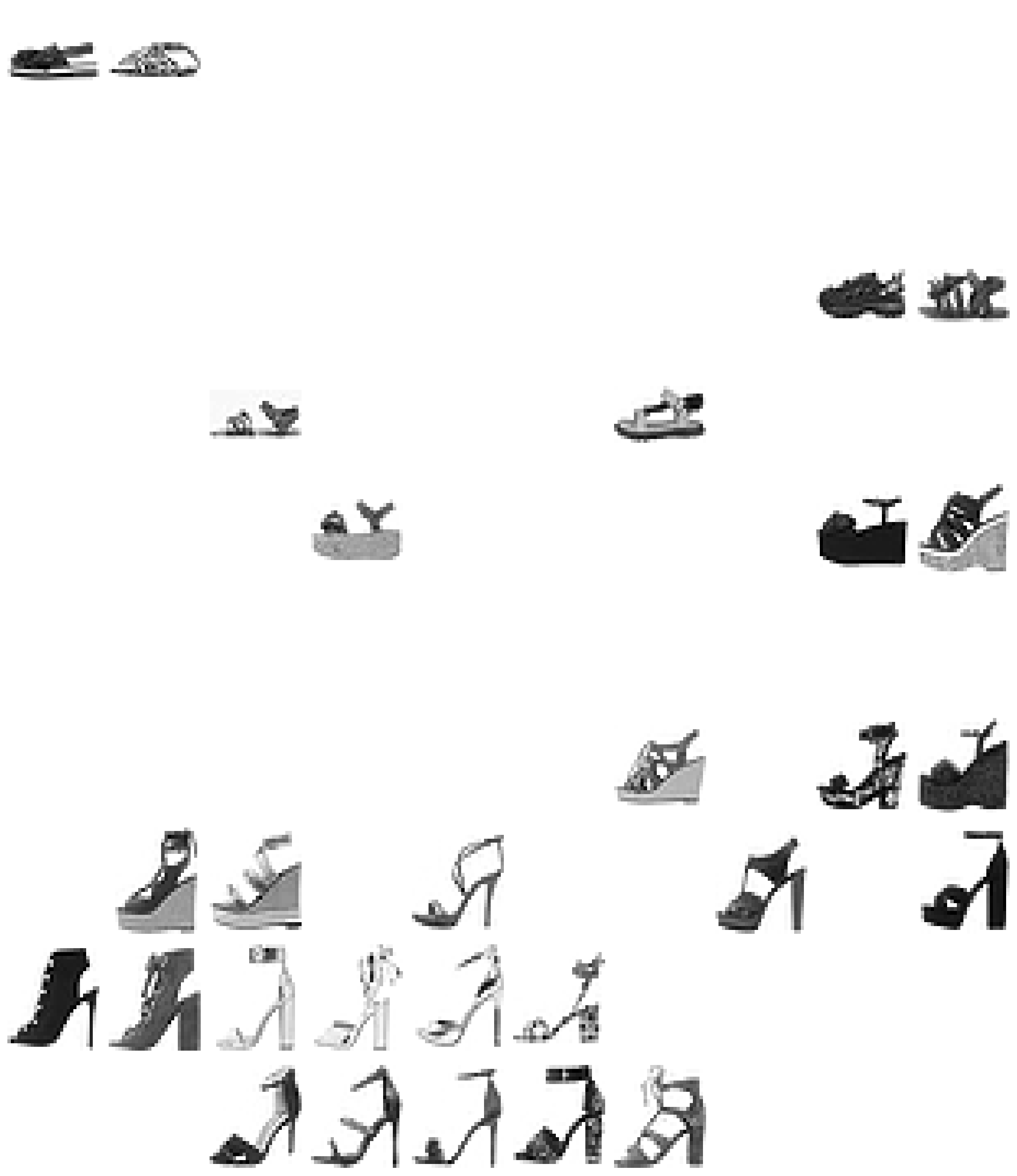}}
  \caption{Sandal example. Left) peeling principal directions. Right) peeling pettiest
  directions} \label{sandal}
\end{figure}

\begin{figure}[t]
\centering
{\includegraphics[width=4cm,height=3.6cm]{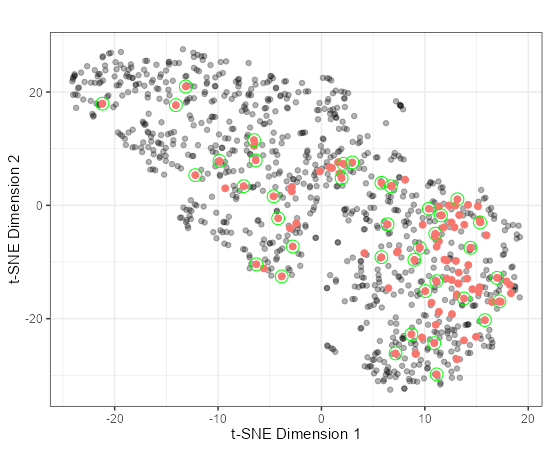}}\quad\raisebox{0.0\height}
{\includegraphics[width=4cm,height=3.6cm]{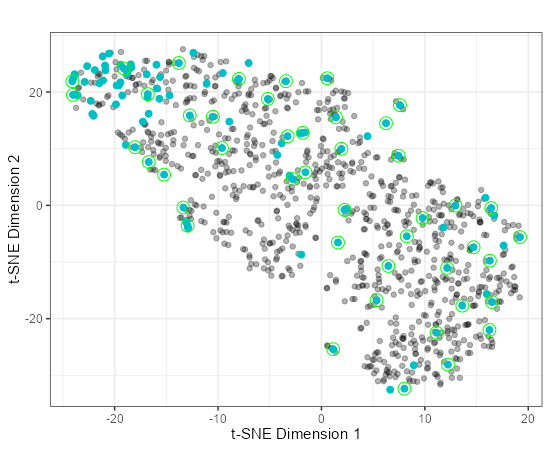}}
\\[1em]
{\includegraphics[width=4cm,height=4.4cm]{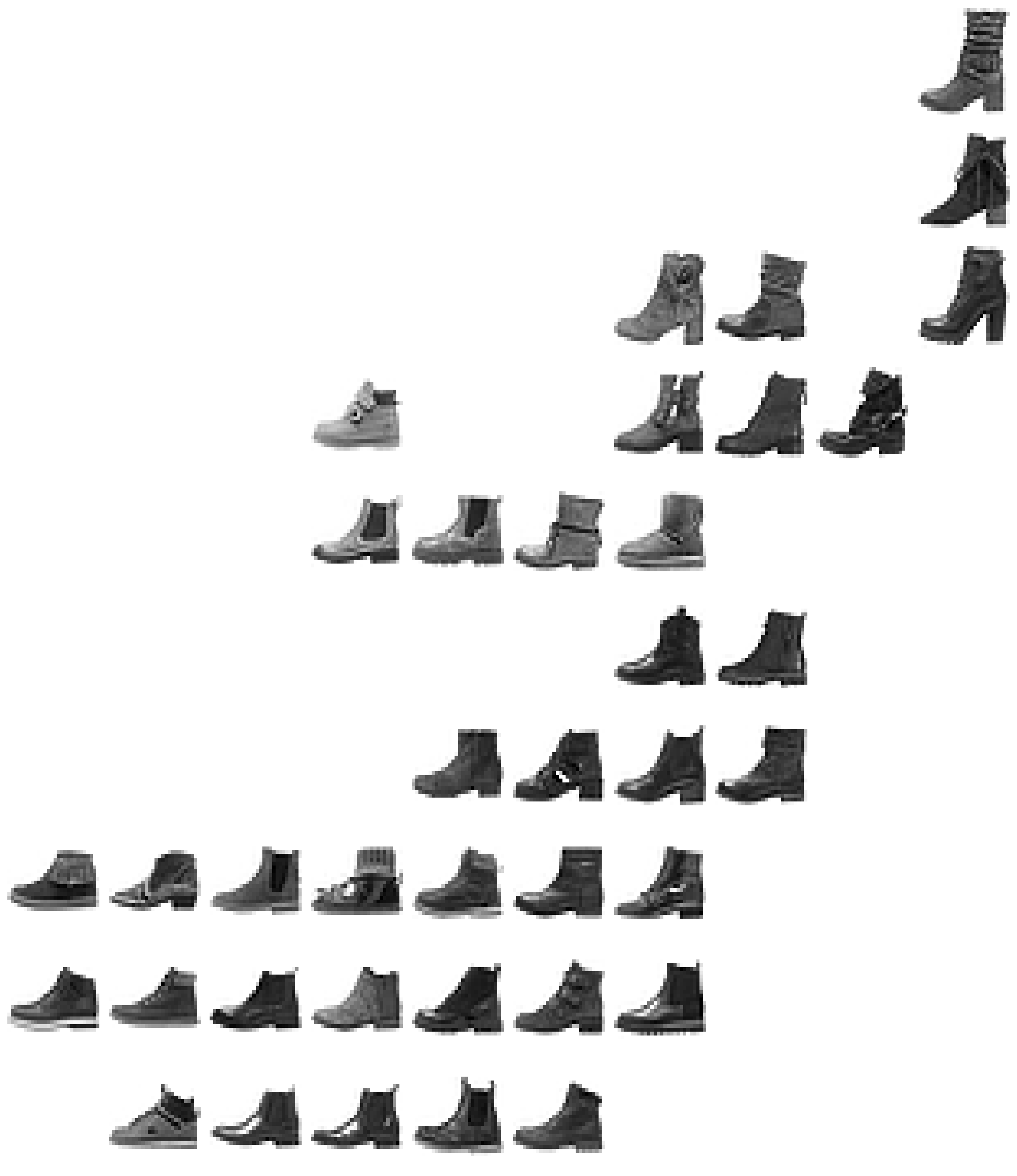}}\quad\raisebox{0.0\height}
{\includegraphics[width=4cm,height=4.4cm]{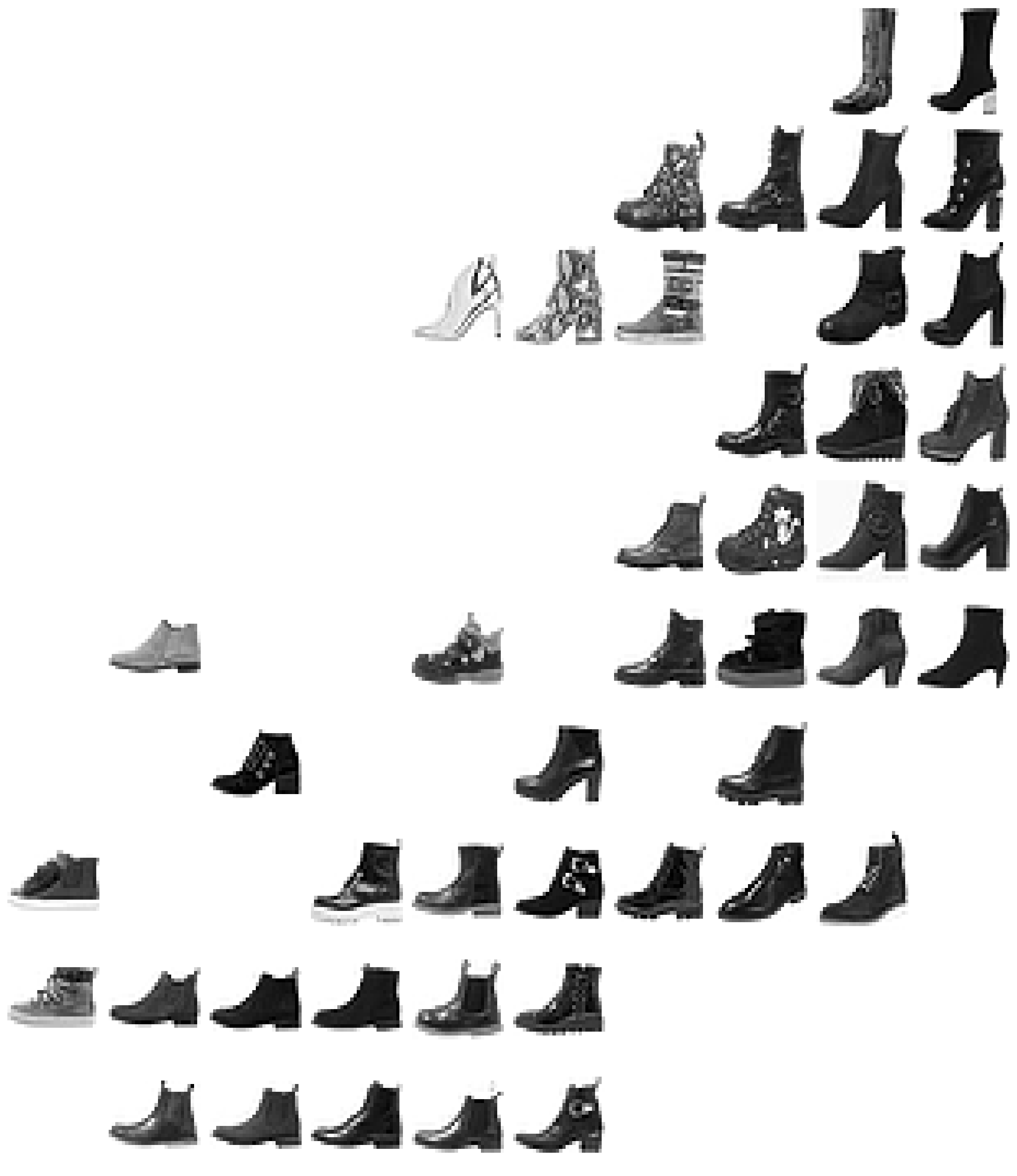}}
  \caption{Ankle boot example. Left) peeling principal directions. Right) peeling pettiest
  directions} \label{ankle}
\end{figure}

\begin{figure}[t]
\centering
{\includegraphics[width=4cm,height=3.6cm]{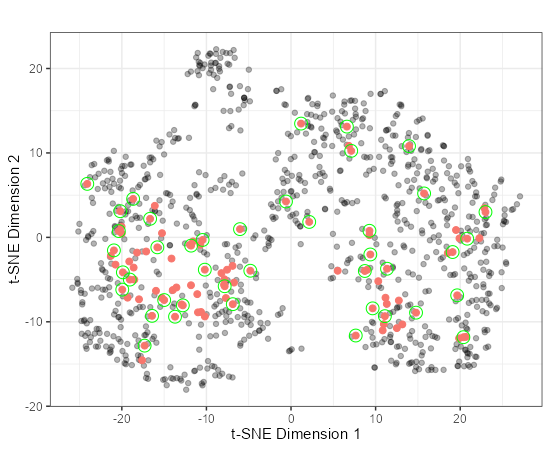}}\quad\raisebox{0.0\height}
{\includegraphics[width=4cm,height=3.6cm]{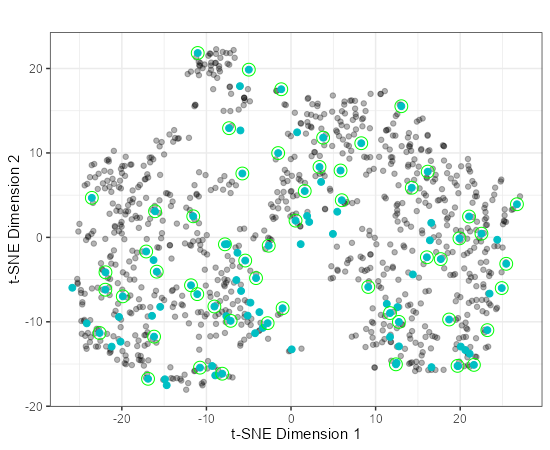}}
\\[1em]
{\includegraphics[width=4cm,height=4.4cm]{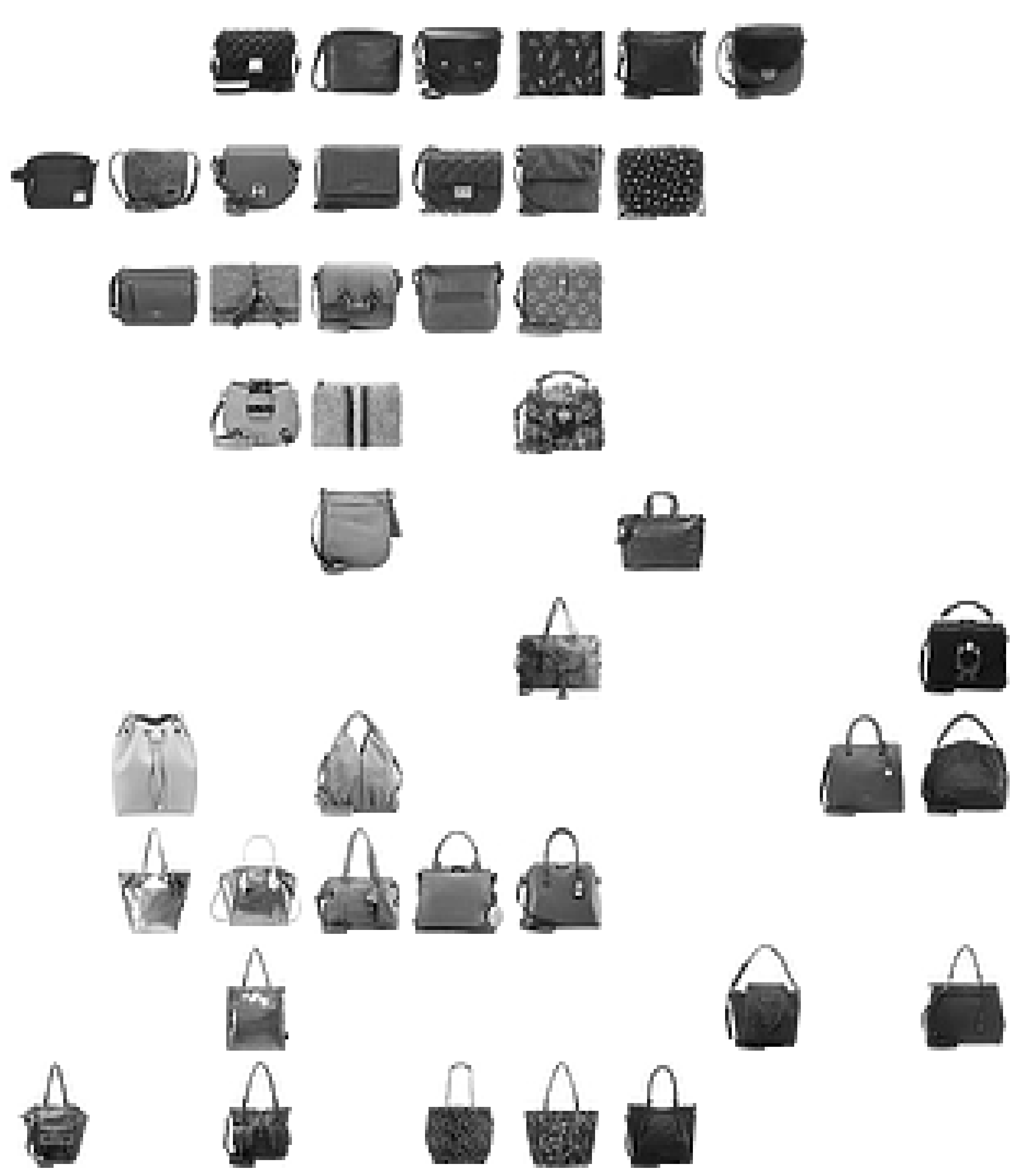}}\quad\raisebox{0.0\height}
{\includegraphics[width=4cm,height=4.4cm]{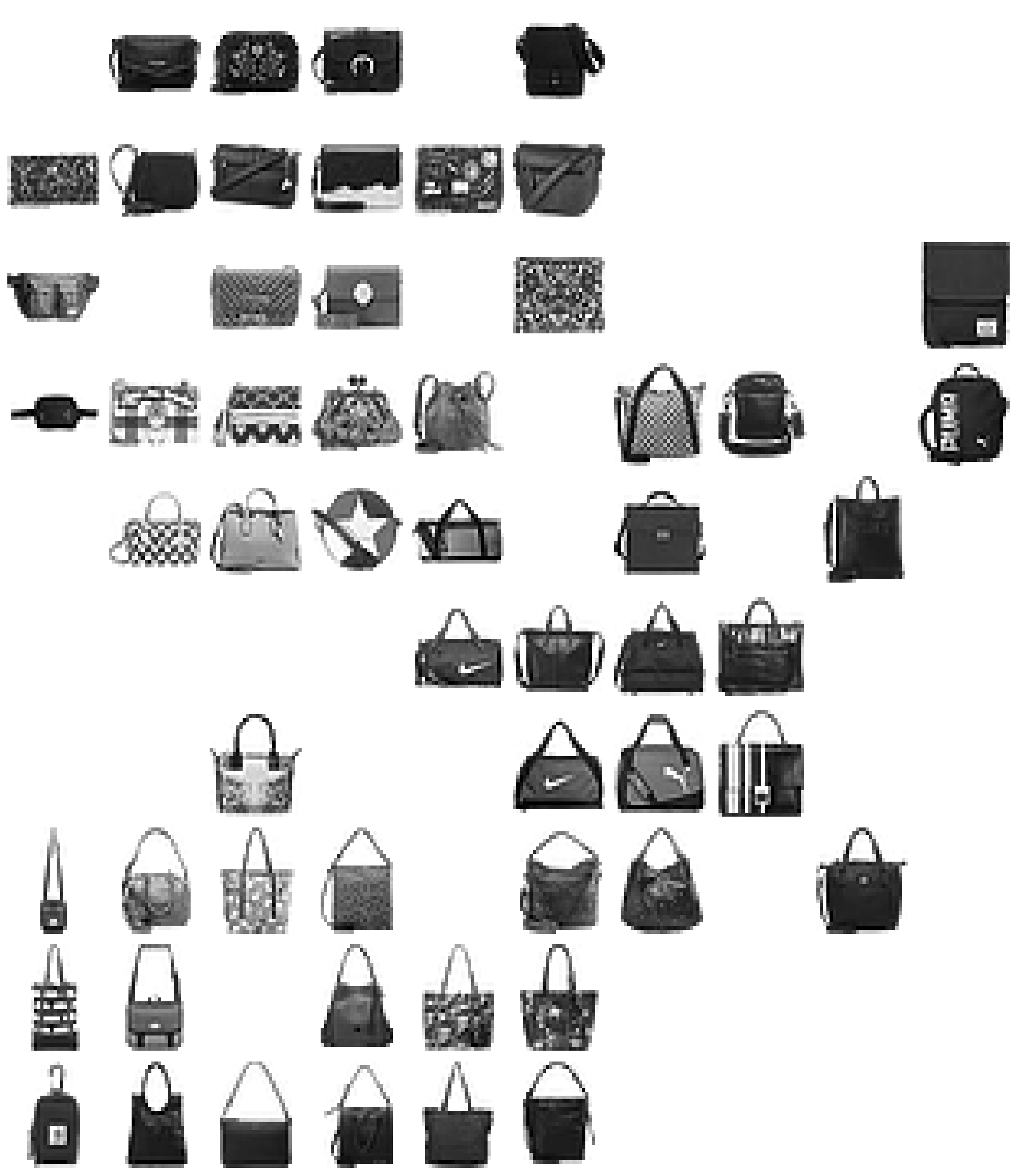}}
  \caption{Bag example. Left) peeling principal directions. Right) peeling pettiest
  directions} \label{bag}
\end{figure}

\begin{figure}[t]
\centering
{\includegraphics[width=4cm,height=3.6cm]{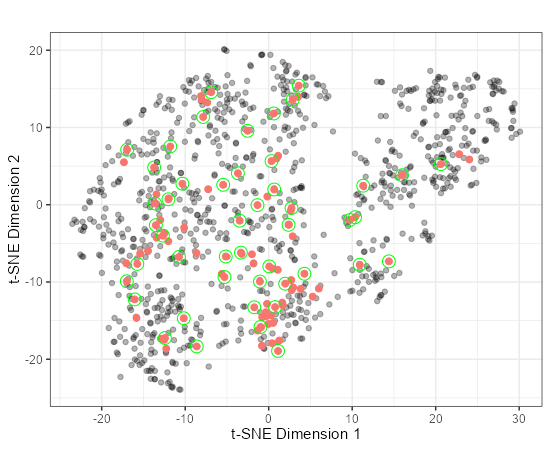}}\quad\raisebox{0.0\height}
{\includegraphics[width=4cm,height=3.6cm]{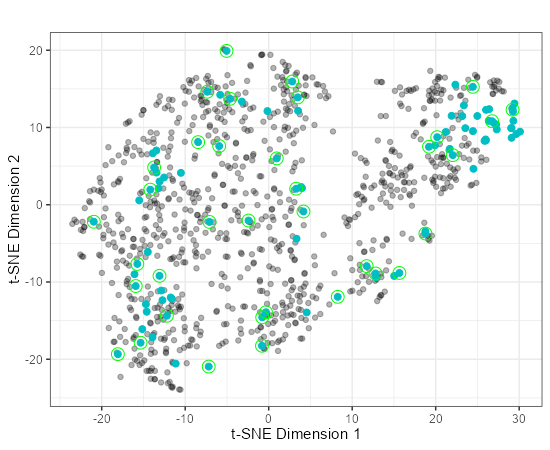}}
\\[1em]
{\includegraphics[width=4cm,height=4.4cm]{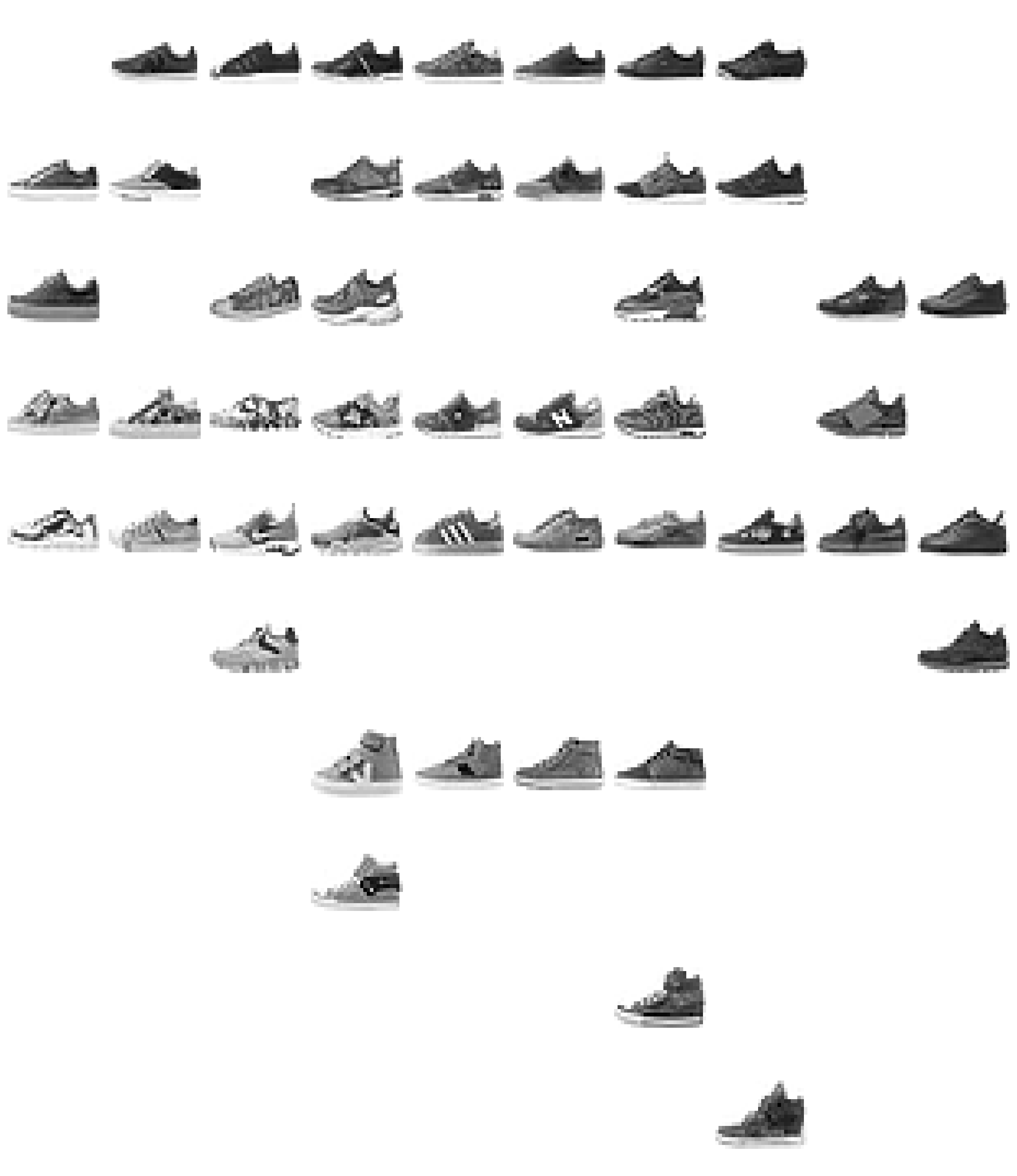}}\quad\raisebox{0.0\height}
{\includegraphics[width=4cm,height=4.4cm]{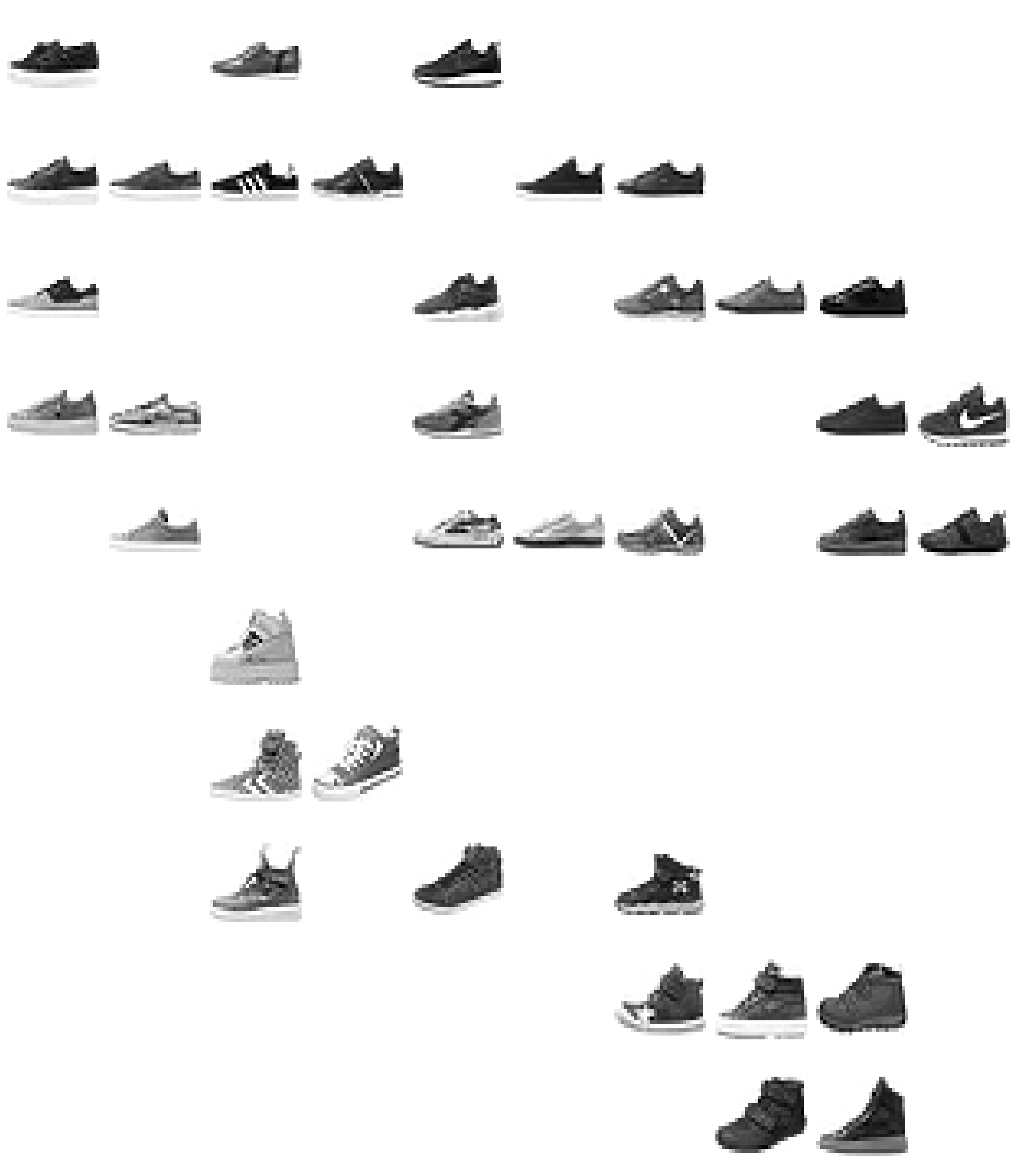}}
  \caption{Sneaker example. Left) peeling principal directions. Right) peeling pettiest
  directions} \label{sneaker}
\end{figure}

\begin{figure}[t]
\centering
{\includegraphics[width=4cm,height=3.6cm]{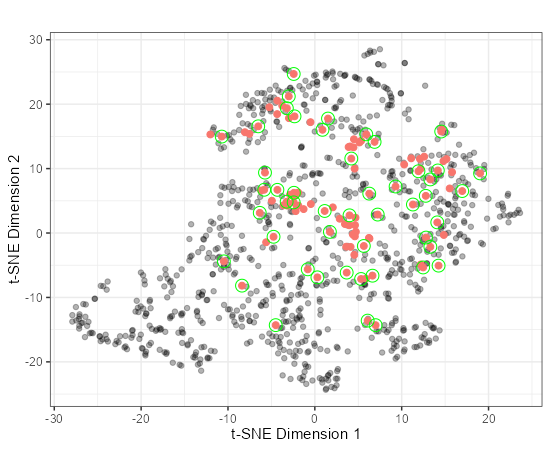}}\quad\raisebox{0.0\height}
{\includegraphics[width=4cm,height=3.6cm]{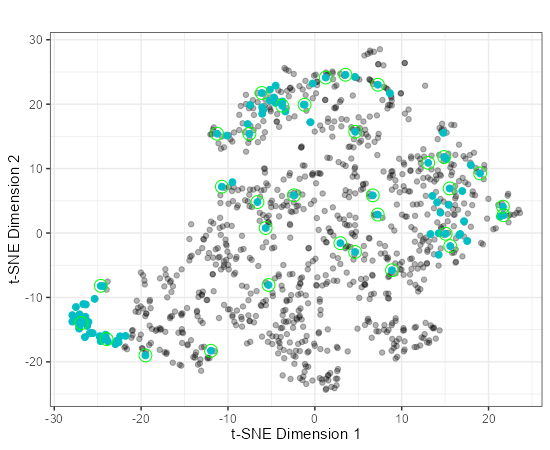}}
\\[1em]
{\includegraphics[width=4cm,height=4.4cm]{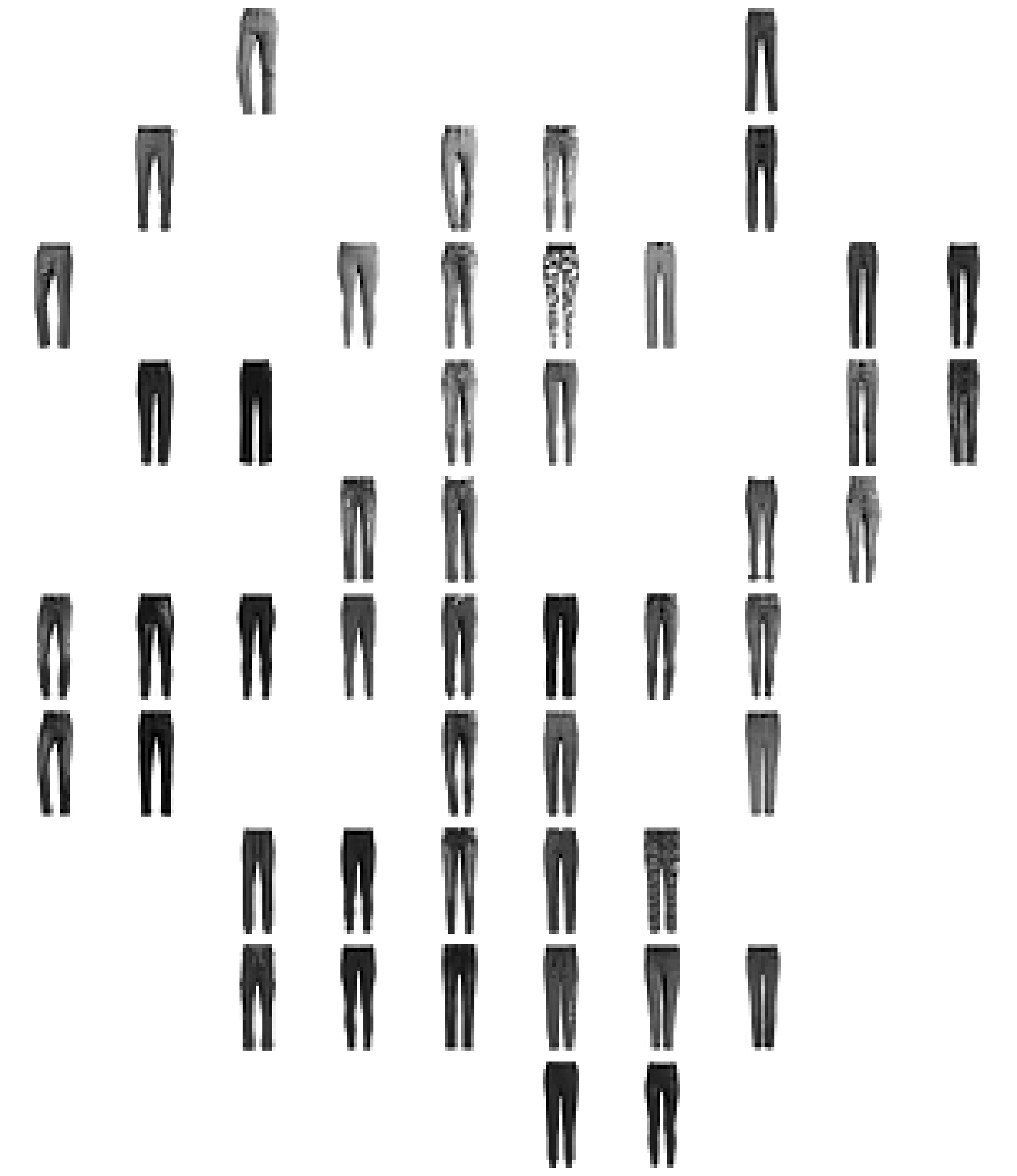}}\quad\raisebox{0.0\height}
{\includegraphics[width=4cm,height=4.4cm]{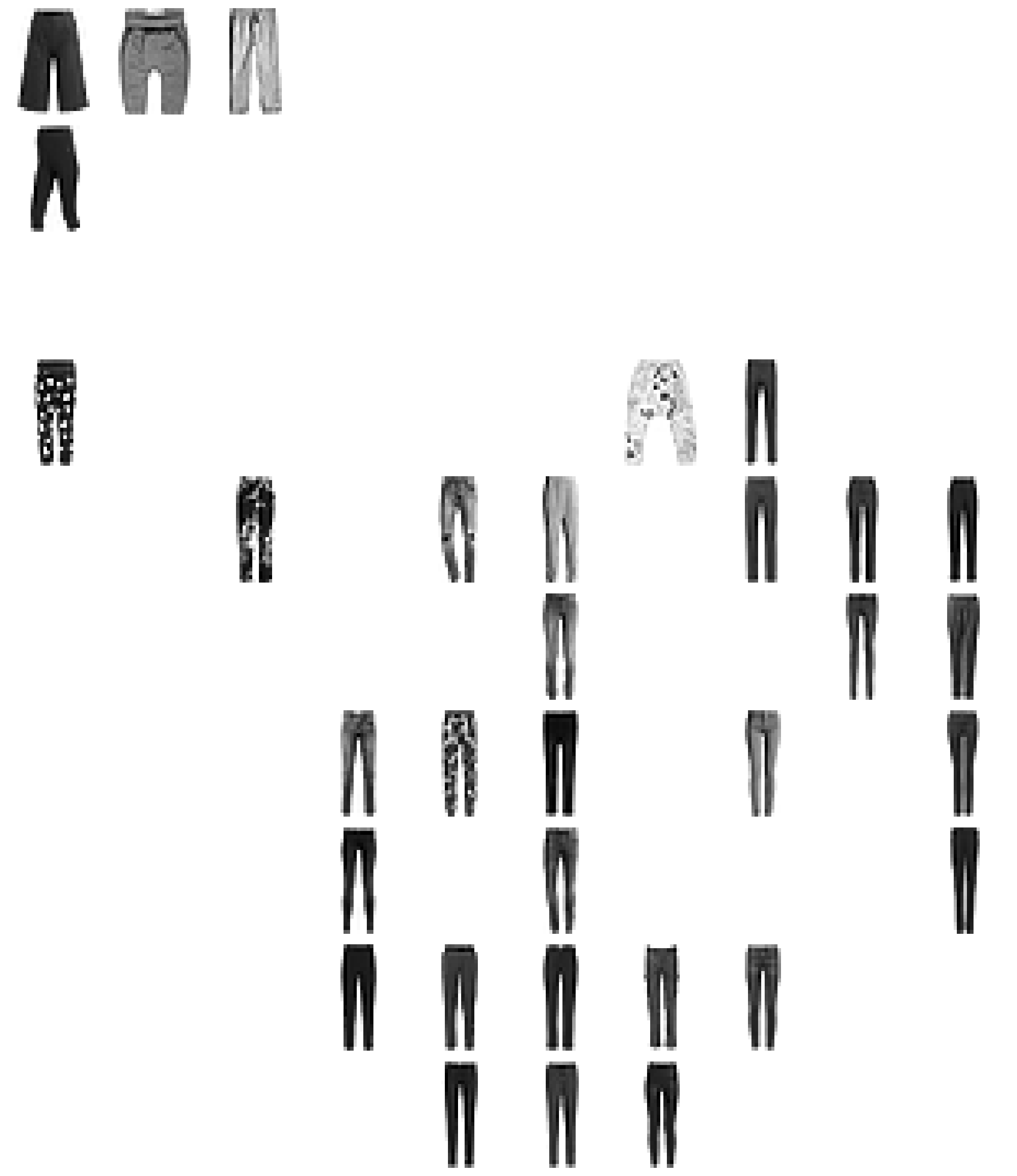}}
  \caption{Trouser example. Left) peeling principal directions. Right) peeling pettiest
  directions} \label{trouser}
\end{figure}

\section*{Acknowledgment}

The authors thank Grok (xAI) for suggesting the stochastic representation $\bfX = R\bfA\bfO$ and the monotonicity argument via Lemma \ref{Cheb}, which formed the basis of the one-dimensional proof of Theorem \ref{Th1}. This suggestion was rigorously validated, extended to the full multivariate elliptical case, and completed by the authors.

\ifCLASSOPTIONcaptionsoff
  \newpage
\fi



\bibliographystyle{IEEEtran}
\bibliography{/Users/daangapa/Documents/Research/daangapaBibliography.bib}

\begin{IEEEbiographynophoto}{Tianhao Liu}
received his BS in Physics from Nankai University, China, in 2019, and a MS in Biostatistics from University of Miami, in 2020. He is currently working towards a PhD in Biostatistics at the University of Miami. His research interests are mainly in statistical learning, information theory, and their applications to medicine.
\end{IEEEbiographynophoto}

\begin{IEEEbiographynophoto}{Daniel Andrés Díaz--Pachón}
is a Research Assistant Professor with the Division of Biostatistics at the University of Miami. His research focuses on probability, information theory, mathematical statistics, and theoretical machine learning. He has applied these tools to cosmology, the origin of life, population genetics, and infectious diseases. He enjoys spending time with his family, working out, reading, and writing.
\end{IEEEbiographynophoto}

\begin{IEEEbiographynophoto}{J. Sunil Rao}
is a Professor with the Division of Biostatistics, University of Minnesota, Twin Cities, where he is also the Director of Biostatistics with the Masonic Cancer Center. His research interests include mixed-model prediction and selection, Bayesian model selection, small-area estimation, machine learning, and applied biostatistics, with a focus on cancer and health disparities.
\end{IEEEbiographynophoto}

\end{document}